\theoremstyle{plain}
\newtheorem{theorem}{Theorem}[section]
\newtheorem{proposition}[theorem]{Proposition}
\newtheorem{lemma}[theorem]{Lemma}
\theoremstyle{definition}
\newtheorem{define}[theorem]{Definition}
\newtheorem{assumption}[theorem]{Assumption}
\theoremstyle{remark}
\newtheorem{remark}[theorem]{Remark}
\title{Efficient Utility-Preserving Machine Unlearning with Implicit Gradient Surgery}
\author{Shiji Zhou\textsuperscript{1,2}\thanks{Part of the work was done while Shiji Zhou was at Tsinghua University} \thanks{Corresponding to Shiji Zhou: zhoushiji25@buaa.edu.cn\\ Shiji Zhou and Tianbai Yu have equal contributions}, Tianbai Yu\textsuperscript{3}, Zhi Zhang\textsuperscript{4}, Heng Chang\textsuperscript{5}, Xiao Zhou\textsuperscript{5}, Dong Wu\textsuperscript{6}, Han Zhao\textsuperscript{3}\\
\textsuperscript{1}Institute of Artificial Intelligence, Beihang University\\
\textsuperscript{2} Beijing Advanced Innovation Center for Future Blockchain and Privacy Computing, Beihang University\\
\textsuperscript{3}University of Illinois at Urbana-Champaign
\textsuperscript{4}University of Amsterdam\\
\textsuperscript{5}Tsinghua University
\textsuperscript{6}YanTron Technology Co.Ltd
}
\begin{document}

\maketitle

\begin{abstract}
Machine unlearning (MU) aims to efficiently remove sensitive or harmful memory from a pre-trained model. The key challenge is to balance the potential tradeoff between unlearning efficacy and utility preservation, which involves forgetting undesirable information as defined while maintaining the model's original performance. 
One potential way to tackle this problem is to use multi-objective optimization to jointly optimize both the unlearning and utility preservation objectives. However, existing multi-objective methods only guarantee finding a Pareto-optimal solution without fine-grained control, which causes under-optimization of the unlearning objective. 
To this end, we first model MU as a constrained optimization problem, that is, optimizing the unlearning objective under the constraint of a bounded increase for utility loss. We then show that solving this optimization problem is equivalent to unilateral gradient surgery on the unlearning objective. 
To resolve the additional computational cost brought by gradient surgery, we propose an implicit gradient surgery method, which approximates the solution to the aforementioned constrained optimization problem via only one backpropagation, thereby achieving efficient utility-preserving MU. 
Theoretically, we provide a tight convergence analysis of the algorithm. Empirically, our extensive experiments show that the proposed algorithm achieves better tradeoff results than existing baselines.
Codes are available at \url{https://github.com/anseryuer/EUPMU-Efficient-Utility-Preserving-Machine-Unlearning}.
\end{abstract}     
\section{Introduction}
\label{sec:intro}

The growing capacity of large generative models~\cite{kasneci2023chatgpt,chang2023survey,zhao2023survey} has inevitably led to increasing concerns about their potential security risks. In particular, massive pre-training data from large models may contain privacy, copyright, and illegal information about individual users, which can be inadvertently memorized through model parameters through training, posing a risk of content leakage under model inversion attacks~\citep{fredrikson2015model,zhao2020trade}. Moreover, the high training costs of large models make addressing these issues in pre-trained models particularly challenging~\cite{nguyen2022survey}, since naive retraining is computationally infeasible. Consequently, both industry and academia are actively seeking efficient methods to enable the erasing of sensitive information at a small cost.

Machine unlearning (MU)~\cite{bourtoule2021machine}, which aims to enable models to efficiently remove memory of sensitive data, is a potential approach to meet the above goals. The current landscape of MU research encompasses a range of generative models, such as large language models (LLM)\cite{yao2023large}, image synthesis models \cite{mishkin2022dall}, and multi-modal generative frameworks~\cite{suzuki2022survey}. These investigations have showcased the capacity of MU to eliminate specific data, including copyrighted patterns \cite{gandikota2023erasing}, fake personas \cite{eldan2023s}, and confidential information \cite{tarun2023fast}. However, the utility-unlearning tradeoff is a key issue in MU~\cite{liu2024rethinking}, where there is a fundamental tradeoff between enhancing the unlearning effect and maintaining the model's original performance. This often leads to a degradation of the model's performance when unlearning sensitive information. Current solutions involve incorporating a retaining loss, calculated from the retained portion of the training data, into the unlearning training process, which can alleviate some of the issues with utility degradation~\cite{bae2023gradient,fan2023salun}. However, simply combining unlearning and retaining objectives that have inherent conflict fails to find a balanced solution, as it has been proved in multitask learning literature~\cite{hu2024revisiting}, as demonstrated in Figure~\ref{fig:a}.

A potential way to mitigate the conflicts between the two targets is to employ a multi-objective optimization (MOO) \cite{zitzler1999multiobjective}. However, existing MOO methods cannot control the converged solutions in a fine-grained way. Specifically, pre-trained models have already been thoroughly optimized for utility, hence it is already or near to a Pareto optimal solution on the Pareto front between unlearning and utility. Since there is little room for further optimization of the utility objectives, directly applying MOO methods that often fairly optimize all objectives to the unlearning problem may result in insufficient optimization of the unlearning objectives, as demonstrated in Figure~\ref{fig:b}. Moreover, most existing MOO methods require derivatives for each objective, doubling the number of backpropagation iterations compared to linear weighting methods, thereby doubling the computational cost. This contradicts the high efficiency and low-cost demand of unlearning.


\begin{figure*}
\centering     
\subfigure[Linear]{\label{fig:a}\includegraphics[width=0.32\textwidth]{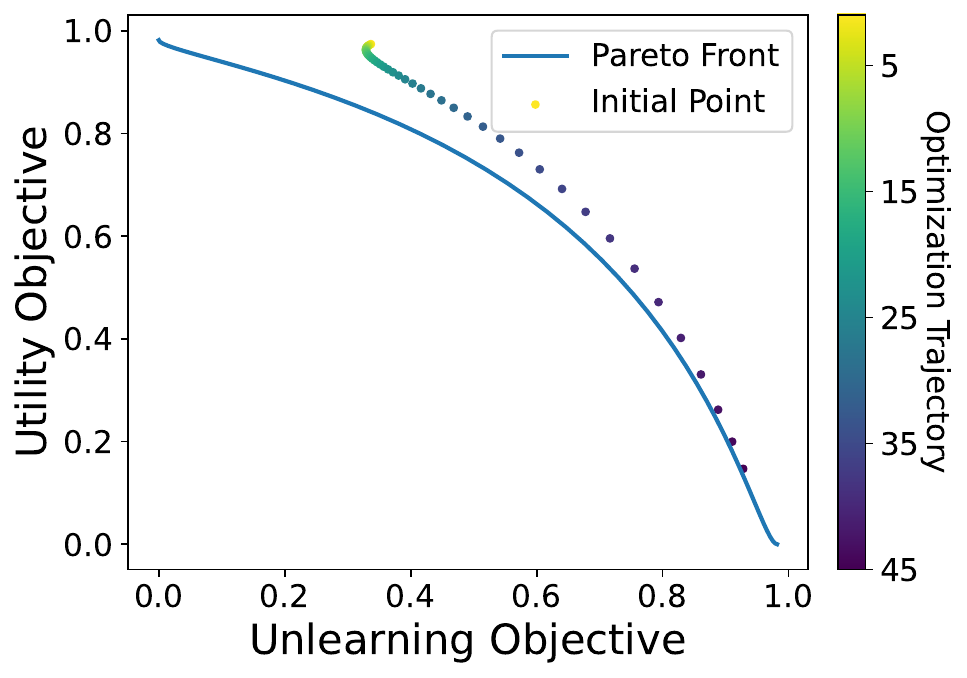}}
\subfigure[MGDA]{\label{fig:b}\includegraphics[width=0.32\textwidth]{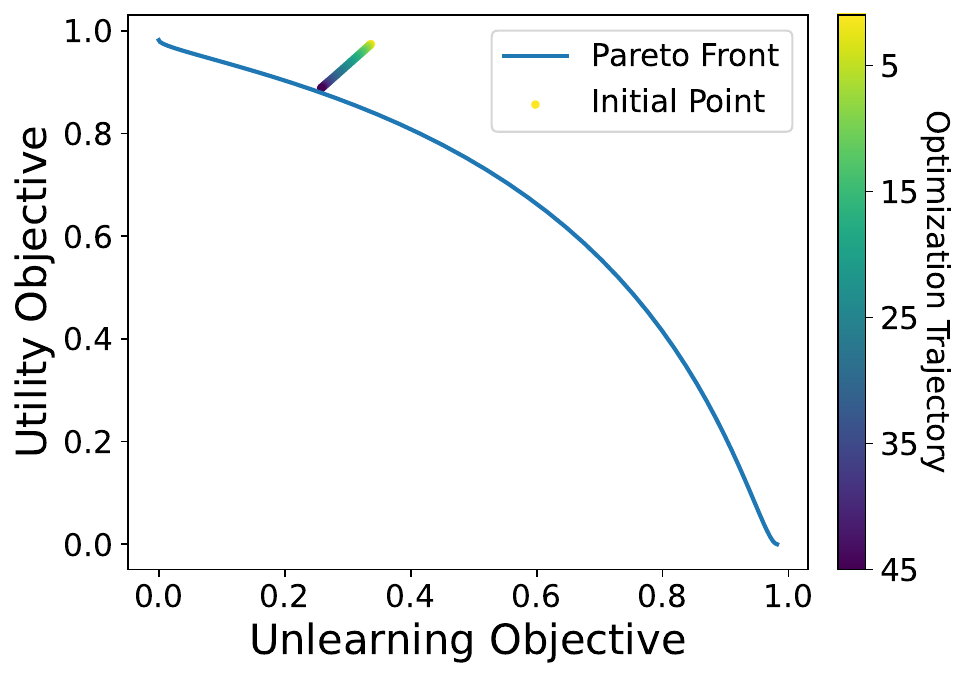}}
\subfigure[EUPMU]{\label{fig:c}\includegraphics[width=0.32\textwidth]{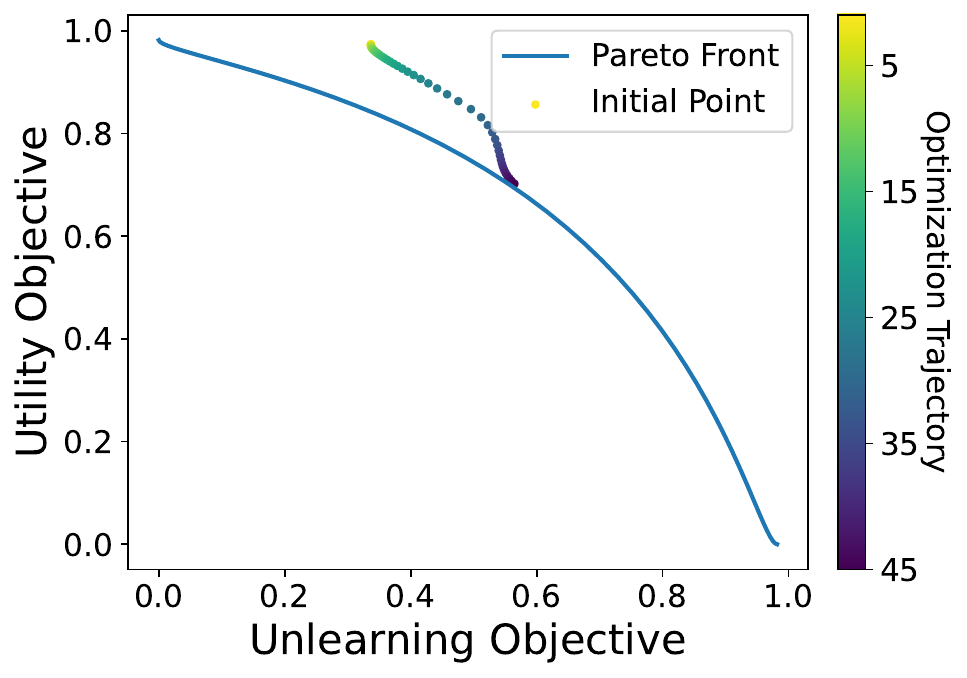}}
\caption{A showcase where both linear scalarization and MGDA fail to find a good tradeoff solution that balance the utility and unlearning objectives: (a) Linearization often leads to extreme solutions with deterioration on one objective; (b) The pre-trained model fully optimizes the utility objective, leaving little room to improve unlearning objective for MGDA algorithm that fairly optimizes all objectives; (c) Our EUPMU algorithm provides a tolerance for small degradation for utility objective, and returns a better tradeoff solution.}
\label{fig:showcase_moo}
\vspace{-5mm}
\end{figure*}

To tackle the challenges, this paper makes the following principal contributions:

1) To better address the utility-unlearning tradeoff, we first formulate the \textit{Utility-Preserving Unlearning Problem (UPUP)}, where the preservation of utility is formulated as a constraint on the increase in retaining loss at each step, with the goal of maximizing the decrease in unlearning loss under this constraint, thereby optimizing the unlearning objective while preserving utility. Solving UPUP leads to a gradient method equivalent to \textit{Unilateral Gradient Surgery}, which subtracts the components of the unlearning gradient that conflict with the retaining gradient. This ensures that utility degradation is within a controllable range while maxing the erasing of target information.

2) Explicit gradient surgery inevitably doubles the gradient computation. This paper first presents an \textit{Efficient Utility-Preserving Machine Unlearning (EUPMU)} method with \textit{Efficient Implicit Gradient Surgery}, utilizing the essence of gradient surgery being equivalent to dynamic linear weighting. Solving the weights through a first-order approximation, requires only one pass of gradient computation, thus saving up to 50\% of the main computational costs. Furthermore, we provide a multi-objective convergence analysis for the proposed algorithm, proving that the algorithm can efficiently converge to a Pareto optimal (or stationary) solution, demonstrating that the algorithm can achieve sufficient unlearning while preserving utility.

3) We conducted comprehensive experiments on tasks including image classification and image generation. Both numerical and visual results demonstrated significant improvements, effectively proving that our method can fully optimize the unlearning objective while maintaining utility to the greatest extent. This further validates the design and theoretical outcomes of our approach.
\section{Background}

\label{sec:background}
We begin by introducing the basic concepts of machine unlearning, elucidating the utility-unlearning challenge encountered by existing methods. We then present the concept of Multi-Objective Optimization (MOO) as a potential solution to this problem and discuss why MOO cannot be directly applied to address the challenges in MU. We defer detailed comparison with related works in Appendix~\ref{app:related work}.
\subsection{Machine Unlearning}\label{subsection:Background:Machine Unlearning}
Given the \textit{training dataset} $\mathcal{D} = \{z_i = (x_i, y_i)\}_{i=1}^N$, the original/pre-trained model with parameters $\boldsymbol{\theta}_0$ is encapsulated by the following optimization:
\begin{equation*}
    \boldsymbol{\theta}_0 = \arg\min_\theta \mathbb{E}_{z\sim \mathcal{D}} \ell (z;\boldsymbol{\theta}).
\end{equation*}
MU focuses on eliminating the influence of a specific data subset $\mathcal{D}_f\subset \mathcal{D}$, which is the \textit{forgetting dataset} that may include harmful or sensitive information. The goal is to efficiently derive an unlearned model $\theta_u$ by finetuning the original parameters $\boldsymbol{\theta}_0$, while maintaining the model's performance on the \textit{retaining dataset} $\mathcal{D}_r=\mathcal{D} \setminus \mathcal{D}_f$. 

To counteract the impact of the forgetting dataset $\mathcal{D}_f$, MU algorithms often establish objectives that aim to reverse the effects of the initial training\footnote{We here consider approximate unlearning, since exact unlearning often needs to retrain the model, which is impractical for large models due to the high retraining cost.}. These include objectives such as inverse loss \cite{thudi2022unrolling} and random labeling \cite{golatkar2020eternal}. We denote this corrective objective as the \emph{unlearning objective} $\ell_{u}(\boldsymbol{\theta})$. Furthermore, to maintain performance on the remaining data, most MU methods incorporate an objective that mimics retraining on samples from the retaining dataset $\mathcal{D}_r$. This is known as the \emph{retaining objective} $\ell_{r}(\boldsymbol{\theta})$. Typically, these objectives are linearly combined into one objective~\cite{fan2023salun}:
\begin{equation*}
    \ell_{u}(\boldsymbol{\theta}) + \lambda\ell_{r}(\boldsymbol{\theta}),
\end{equation*}
where $\lambda$ is a hyperparameter balancing the two parts.

\noindent\textbf{Utility-Unlearning Challenge.} While linearization is straightforward to implement, it may lead to performance deterioration or insufficient unlearning, due to the inherent conflict between the unlearning and retaining objectives~\cite{zhao2022fundamental,zhou2024limitations}. The underlying theoretical reason may be that fixed linear weights fails to find a solution that balances the two objectives~\cite{hu2024revisiting}, as shown in Figure~\ref{fig:showcase_moo}. (a).

\subsection{Multi-Objective Optimization}
Multiple-objective optimization (MOO) aims to optimize multiple objectives simultaneously \citep{zitzler1999multiobjective}:
\begin{equation}\label{eq:moo_problem}
    \min_{\boldsymbol{\theta}} \boldsymbol{L}(\boldsymbol{\theta}) = (\ell^1(\boldsymbol{\theta}),\dots,\ell^m(\boldsymbol{\theta}))^\top,
\end{equation}
where $m \ge 2$ denotes the number of objectives, and $\ell^i:\mathbb{R}^n\rightarrow \mathbb{R}$ is the $i$-th loss function. Denote $\Delta_m$ to be the $(m-1)$-dimensional probability simplex. The concept of Pareto optimality/stationary~\citep{custodio2011direct} is introduced to determine whether a solution to MOO is optimal/critical. 
\begin{define}\label{def::partial_order}\textbf{Pareto Optimality:}
    For any two solutions $\boldsymbol{\theta}, \boldsymbol{\theta}'$, we say that $\boldsymbol{\theta}$ dominates $\boldsymbol{\theta}'$, denoted as $\boldsymbol{\theta} \prec \boldsymbol{\theta}'$ or $\boldsymbol{\theta}'\succ \boldsymbol{\theta}$, if $\ell^i(\boldsymbol{\theta}) \leq \ell^i(\boldsymbol{\theta}')$ for all $i$, and there exists one $i$ such that $\ell^i(\boldsymbol{\boldsymbol{\theta}}) < \ell^i(\boldsymbol{\theta}')$. A solution $\boldsymbol{\theta}^*\in$ is called Pareto optimal if it is not dominated by any other solution. \textbf{Pareto Stationary:} A solution $\boldsymbol{\theta}$ is called Pareto stationary if there exists $\boldsymbol{\lambda} \in \Delta_{m}$ such that $\sum_{i=1}^m \lambda_i \nabla_{\boldsymbol{\theta}} \ell_i(\boldsymbol{\theta})=0$.
\end{define}

Typical MOO methods like MGDA~\cite{desideri2012multiple}, PCGrad~\cite{yu2020gradient} and other variants~\cite{liu2021conflict,zhou2022convergence,liu2024famo,he2024robust} aim to search for a direction $\boldsymbol{d}$ that is not conflicting with each gradient, i.e., $\nabla \ell^i(\boldsymbol{\theta})^\top d\geq0, i\in[m]$. Using such a non-conflicting direction $d_k$ as the update direction is shown to get better tradeoff performance. 

\noindent\textbf{Can MOO Solve Utility-Unlearning Tradeoff?} 
MOO addresses conflicts among objectives during the optimization process and may offer a solution to the utility-unlearning challenge in MU. However, the optimization goals of MOO and MU are not aligned, making existing MOO methods inadequate for unlearning in pre-trained models. Specifically, pre-trained models have been thoroughly optimized for the utility objective, while the unlearning objective has not been optimized. Therefore, the goal of unlearning is to fully optimize the unlearning objective while maintaining utility to the greatest extent possible, albeit with some minor degradation if necessary, like the solution in Figure~\ref{fig:showcase_moo}. (c). However, the primary goal of typical MOO methods is to identify an optimization path that benefits all objectives simultaneously. Given that there is little room for further improvement in utility objectives, directly applying MOO methods to optimize utility and unlearning objectives fairly could lead to inadequate optimization of the unlearning objective, as shown in Figure~\ref{fig:showcase_moo}. (b). In addition, MOO often requires gradient computation for each objective, which doubles the computational cost of linearization, which only needs to compute the gradient of the linearized loss.
\section{Efficient Utility-Preserving Machine Unlearning}
\label{sec:method}
This section first presents the formulation of the Utility-Preserving Unlearning Problem (UPUP). Then, we introduce a gradient method for solving UPUP, which is equivalent to explicit unilateral gradient surgery. To address the additional computational cost brought by gradient surgery, we propose a method of implicit efficient gradient surgery, an efficient approximation for solving UPUP (Algorithm~\ref{alg:algorithm}). Finally, we provide a theoretical analysis of the Pareto optimality/stationary.
\subsection{Utility-Preserving Unlearning Problem}
At iteration $t$, we perform the update $\boldsymbol{\theta}_{t+1} = \boldsymbol{\theta}_t - \alpha_t  \boldsymbol{d}_t$ where $\boldsymbol{d}_t$ is the update direction, and define the improvement of the retaining and unlearning objectives as follows
\begin{equation*}\label{eq:relative_changes}
r_r(\alpha_t, \boldsymbol{d}_t) = \ell_{r}(\boldsymbol{\theta}_t) - \ell_{r}(\boldsymbol{\theta}_{t+1}), r_u(\alpha_t, \boldsymbol{d}_t) = \ell_{u}(\boldsymbol{\theta}_t) - \ell_{u}(\boldsymbol{\theta}_{t+1}).
\end{equation*}

To achieve utility-preserving unlearning, we aim to seek a direction $\boldsymbol{d}_t$ to control the degradation of the local retaining target \( r_r \) during the constrained optimization process while maximizing the optimization of the unlearning objective. Mathematically, this can be expressed as:
\begin{equation}\label{eq:optimization_problem}
\begin{aligned}
\max_{\boldsymbol{d}_t} &\quad \frac{1}{\alpha_t}  r_u(\alpha_t, \boldsymbol{d}_t) - \frac{1}{2} \left\|\boldsymbol{d}_t\right\|^2 \\
\text{s.t.} &\quad \frac{1}{\alpha_t}  r_r(\alpha_t, \boldsymbol{d}_t) \geq - \varepsilon_t,
\end{aligned}
\end{equation}
where $\alpha_t$ is the stepsize, and $\left\|\boldsymbol{d}_t\right\|^2$ is the regularization to avoid unbounded solutions. Here, \( \varepsilon_t \geq 0 \) is the tolerance of the degradation for the retaining target that we aim to preserve, and the constraint \( \frac{1}{\alpha_t} \cdot r_r(\alpha_t, \boldsymbol{d}_t) \geq - \varepsilon_t \) ensures that the utility performance drop is controllable. 

\subsection{Explicit Unilateral Gradient Surgery}\label{subsection:Efficient Utility-Preserved Machine Unlearning:Utility-Preserved Unlearning Problem}
Since stepsize $\alpha_t$ is usually small, by the first-order Taylor approximation, we know that
\begin{equation*}\label{eq:update_rule}
r_r(\alpha_t, \boldsymbol{d}_t) \approx \alpha_t \nabla   \ell_r (\boldsymbol{\theta}_t) \cdot \boldsymbol{d}_t, r_u(\alpha_t, \boldsymbol{d}_t) \approx \alpha_t  \nabla   \ell_u (\boldsymbol{\theta}_t) \cdot \boldsymbol{d}_t.
\end{equation*}
Problem \ref{eq:optimization_problem} can be approximated by
\begin{equation}\label{eq:approximated_optimization_problem}
\begin{aligned}
\max_{\boldsymbol{d}_t} & \quad \nabla   \ell_u(\boldsymbol{\theta}_t) \cdot \boldsymbol{d}_t - \frac{1}{2} \left\|\boldsymbol{d}_t\right\|^2 \\
\text{s.t.} & \quad \nabla   \ell_r(\boldsymbol{\theta}_t) \cdot \boldsymbol{d}_t \geq - \varepsilon_t.
\end{aligned}
\end{equation}
Problem \ref{eq:approximated_optimization_problem} aims to control the dot product of \( \boldsymbol{d}_t \) and \( \nabla   \ell_r \) to be greater than \( -\varepsilon_t \), thereby ensuring that the degradation of \(   \ell_r \) is less than \( \varepsilon_t \), while simultaneously maximizing the dot product of \( \boldsymbol{d}_t \) and \( \nabla   \ell_u \) to achieve more effective unlearning. 
\begin{proposition}~\label{prop:dual}
    The dual objective of Problem \ref{eq:approximated_optimization_problem} is 
    \begin{equation}\label{eq:dual_approximated_optimization_problem}
    \min_{\lambda_t\geq 0} L_t(\lambda_t) = \frac{1}{2}\left\|\nabla   \ell_u(\boldsymbol{\theta}_t)  + \lambda_t \nabla   \ell_r (\boldsymbol{\theta}_t) \right\|^2 + \lambda_t \varepsilon_t.
    \end{equation}
\end{proposition}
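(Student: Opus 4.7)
The plan is to derive the dual by standard Lagrangian duality, exploiting the fact that the inner problem in $\boldsymbol{d}_t$ has a closed form because it is a strongly concave quadratic. First I would rewrite the constraint $\nabla\ell_r(\boldsymbol{\theta}_t)\cdot\boldsymbol{d}_t \geq -\varepsilon_t$ as $-\nabla\ell_r(\boldsymbol{\theta}_t)\cdot\boldsymbol{d}_t - \varepsilon_t \leq 0$ and introduce a multiplier $\lambda_t\geq 0$ to form the Lagrangian associated with the concave maximization:
$$\mathcal{L}(\boldsymbol{d}_t,\lambda_t) = \nabla\ell_u(\boldsymbol{\theta}_t)\cdot\boldsymbol{d}_t - \tfrac{1}{2}\|\boldsymbol{d}_t\|^2 + \lambda_t\bigl(\nabla\ell_r(\boldsymbol{\theta}_t)\cdot\boldsymbol{d}_t + \varepsilon_t\bigr).$$

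Next I would compute the dual function $g(\lambda_t)=\sup_{\boldsymbol{d}_t}\mathcal{L}(\boldsymbol{d}_t,\lambda_t)$. Since $\mathcal{L}$ is strongly concave in $\boldsymbol{d}_t$, the supremum is attained at the unique stationary point from $\nabla_{\boldsymbol{d}_t}\mathcal{L}=0$, which gives $\boldsymbol{d}_t^\star(\lambda_t) = \nabla\ell_u(\boldsymbol{\theta}_t) + \lambda_t\nabla\ell_r(\boldsymbol{\theta}_t)$. Substituting back, the two linear terms in $\boldsymbol{d}_t$ collapse via $(\nabla\ell_u + \lambda_t\nabla\ell_r)\cdot\boldsymbol{d}_t^\star = \|\boldsymbol{d}_t^\star\|^2$, which combines with the $-\tfrac{1}{2}\|\boldsymbol{d}_t^\star\|^2$ term to yield $g(\lambda_t) = \tfrac{1}{2}\|\nabla\ell_u(\boldsymbol{\theta}_t) + \lambda_t\nabla\ell_r(\boldsymbol{\theta}_t)\|^2 + \lambda_t\varepsilon_t = L_t(\lambda_t)$. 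The dual problem is therefore $\min_{\lambda_t\geq 0} L_t(\lambda_t)$, exactly as claimed.

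To close the loop I would invoke strong duality: the primal is the maximization of a concave quadratic subject to a single affine inequality, so it is a convex program whose constraint qualification is automatic (the refined Slater condition for affine constraints holds for any $\varepsilon_t\geq 0$, since $\boldsymbol{d}_t=\boldsymbol{0}$ is feasible). Hence the optimal primal value equals $\min_{\lambda_t\geq 0} L_t(\lambda_t)$, and $\boldsymbol{d}_t^\star(\lambda_t^\star)$ recovers the primal maximizer at the dual optimum $\lambda_t^\star$, which is the form later used to implement the gradient-surgery update.

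The main obstacle is essentially bookkeeping: one must keep signs straight when converting the concave maximization into the min-form typically used to state Lagrangian duality, and handle the degenerate case $\nabla\ell_r(\boldsymbol{\theta}_t)=\boldsymbol{0}$ (where $L_t$ becomes linear in $\lambda_t$, forcing the minimizer at $\lambda_t=0$). Beyond these minor points, the derivation is a textbook application of convex duality and reduces to a short algebraic calculation.
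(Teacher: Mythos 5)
Your derivation is essentially identical to the paper's: form the Lagrangian, maximize over $\boldsymbol{d}_t$ to get $\boldsymbol{d}_t^\star = \nabla\ell_u + \lambda_t\nabla\ell_r$, substitute back, and simplify to obtain $L_t(\lambda_t)$. The extra remarks on strong duality via the affine constraint qualification and the degenerate case $\nabla\ell_r = \boldsymbol{0}$ are sound and slightly more careful than the paper, but do not change the route.
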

Problem \ref{eq:dual_approximated_optimization_problem} has a closed form solution, and the desired direction $\boldsymbol{d}_t^*$ can be solved as
\begin{proposition}~\label{prop:closed_form_dual}
    The closed form solution of Problem \ref{eq:approximated_optimization_problem}
    \begin{equation}\label{eq:solve_approximated_optimization_problem}
    \boldsymbol{d}_t^* = \begin{cases}
    &\nabla   \ell_{u}(\boldsymbol{\theta}_t) + \lambda_t^* \nabla   \ell_{r}(\boldsymbol{\theta}_t),\text{ if } \lambda_t^* > 0 \\
    &\nabla   \ell_{u}(\boldsymbol{\theta}_t),\quad \quad\quad\quad\quad\quad\text{if } \lambda_t^* \leq 0
    \end{cases}
    \end{equation}
    where 
    \begin{equation}\label{eq:lambda}
        \lambda_t^* = \frac{-\nabla   \ell_{r}(\boldsymbol{\theta}_t) \cdot \nabla   \ell_{u}(\boldsymbol{\theta}_t) - \varepsilon_t}{\left\|\nabla   \ell_{r}(\boldsymbol{\theta}_t)\right\|^2}.
    \end{equation}
\end{proposition}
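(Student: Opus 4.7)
The plan is to start from the dual formulation in Proposition~\ref{prop:dual} and solve it explicitly, then invoke strong duality to recover the primal optimal direction via KKT stationarity. Since the primal in~(\ref{eq:approximated_optimization_problem}) is a strictly concave quadratic maximization with a single affine inequality constraint, it is a convex quadratic program, and $\boldsymbol{d}_t = \boldsymbol{0}$ is strictly feasible whenever $\varepsilon_t \geq 0$; Slater's condition therefore guarantees strong duality, so the KKT system is both necessary and sufficient.

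First, I would form the Lagrangian
$$\mathcal{L}(\boldsymbol{d}_t, \lambda_t) = \nabla\ell_u(\boldsymbol{\theta}_t) \cdot \boldsymbol{d}_t - \tfrac{1}{2}\|\boldsymbol{d}_t\|^2 + \lambda_t\bigl(\nabla\ell_r(\boldsymbol{\theta}_t) \cdot \boldsymbol{d}_t + \varepsilon_t\bigr),$$
with dual variable $\lambda_t \geq 0$. Stationarity $\nabla_{\boldsymbol{d}_t}\mathcal{L} = 0$ immediately yields $\boldsymbol{d}_t = \nabla\ell_u(\boldsymbol{\theta}_t) + \lambda_t\nabla\ell_r(\boldsymbol{\theta}_t)$, which is exactly the form asserted in~(\ref{eq:solve_approximated_optimization_problem}) once the correct $\lambda_t$ is identified. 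Next, using Proposition~\ref{prop:dual}, the dual objective $L_t(\lambda_t)$ is a convex quadratic in $\lambda_t$; setting $\tfrac{d L_t}{d\lambda_t} = \nabla\ell_r \cdot (\nabla\ell_u + \lambda_t\nabla\ell_r) + \varepsilon_t = 0$ gives the unconstrained critical point stated in~(\ref{eq:lambda}). By convexity, when this critical value is strictly positive, it is also the dual-feasible minimizer over $\lambda_t \geq 0$, and the primal optimum is $\boldsymbol{d}_t^* = \nabla\ell_u + \lambda_t^*\nabla\ell_r$; when the critical value is non-positive, the constrained dual minimum over $\lambda_t \geq 0$ is attained at the boundary $\lambda_t = 0$, and stationarity then gives $\boldsymbol{d}_t^* = \nabla\ell_u(\boldsymbol{\theta}_t)$.

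Finally, I would verify the KKT conditions in the boundary case, since only stationarity was used there: when $\lambda_t^* \leq 0$ as defined in~(\ref{eq:lambda}), we have $-\nabla\ell_r \cdot \nabla\ell_u - \varepsilon_t \leq 0$, i.e. $\nabla\ell_r \cdot \nabla\ell_u \geq -\varepsilon_t$, which is precisely primal feasibility of $\boldsymbol{d}_t^* = \nabla\ell_u$; complementary slackness holds trivially because the dual variable is set to $0$. In the active case $\lambda_t^* > 0$, a direct substitution shows $\nabla\ell_r \cdot \boldsymbol{d}_t^* = -\varepsilon_t$, so complementary slackness holds with the constraint binding. Together with dual feasibility and stationarity, the KKT system is satisfied in both cases, proving optimality. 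The main obstacle I expect is the case split and being careful about notation: the symbol $\lambda_t^*$ in~(\ref{eq:lambda}) really denotes the \emph{unconstrained} critical point of $L_t$, whereas the \emph{true} Lagrange multiplier is $\max(\lambda_t^*, 0)$, and this is precisely what drives the two-branch definition of $\boldsymbol{d}_t^*$.
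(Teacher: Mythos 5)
Your proof is correct and takes essentially the same route as the paper's: both compute the unconstrained minimizer of the convex dual quadratic $L_t(\lambda_t)$, branch on its sign, and recover $\boldsymbol{d}_t^*$ via Lagrangian stationarity, with your version adding the KKT verification that the paper leaves implicit. One minor imprecision: $\boldsymbol{d}_t=\boldsymbol{0}$ is \emph{strictly} feasible only when $\varepsilon_t>0$, so for $\varepsilon_t=0$ you should instead appeal to the affine-constraint refinement of Slater's condition (mere feasibility suffices for an affine inequality) — the conclusion of strong duality is unaffected.
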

Due to space limit, we defer the derivation details to the Appendix~\ref{app:prop1} \& \ref{app:prop2}. It is worth noting that when \( \varepsilon_t = 0 \), the direction of \( \boldsymbol{d}_t^* \) is equivalent to performing unilateral gradient surgery on the gradient of the unlearning objective, \( \nabla   \ell_{u}(\boldsymbol{\theta}_t) \). As illustrated in Figure~\ref{fig:unilaterel_gradient_surgery}, when \( \nabla   \ell_{u}(\boldsymbol{\theta}_t) \) is in conflict with the gradient of the retaining objective, \( \nabla   \ell_{r}(\boldsymbol{\theta}_t) \), the direction \( \boldsymbol{d}_t^* \) is obtained by eliminating the conflicting component of \( \nabla   \ell_{u}(\boldsymbol{\theta}_t) \) along \( \nabla   \ell_{r}(\boldsymbol{\theta}_t) \). When \( \nabla   \ell_{u}(\boldsymbol{\theta}_t) \) and \( \nabla   \ell_{r}(\boldsymbol{\theta}_t) \) are not in conflict, then \( \boldsymbol{d}_t^* = \nabla   \ell_{u}(\boldsymbol{\theta}_t) \), and directly optimizing the unlearning objective will not have a negative impact on the retaining objective.

\begin{figure}[t]
\centering
\includegraphics[width=0.5\columnwidth]{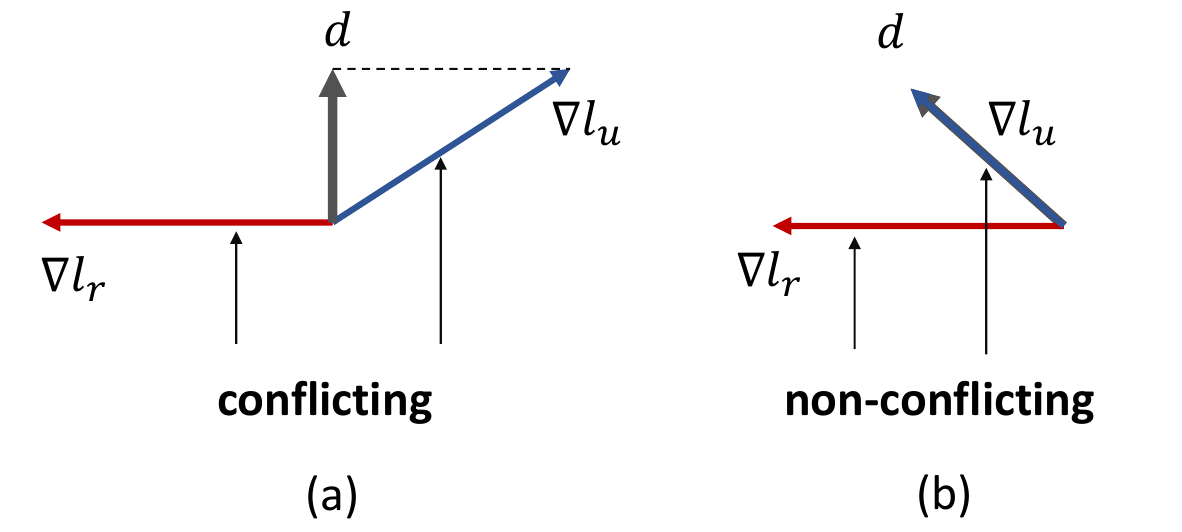} 
\caption{Illustration of Unilateral Gradient Surgery. The computation of the update direction is divided into two scenarios: (a) If the gradient of the unlearning objective conflicts with the gradient of the retaining objective, that is, the angle between them is greater than 90 degrees, the desired direction is obtained by removing the projection (to the retaining gradient) component from the unlearning gradient; (b) If the gradient of the unlearning objective does not conflict with the gradient of the retaining objective, then the unlearning gradient itself is the desired direction.}
\label{fig:unilaterel_gradient_surgery}
\vspace{-3mm}
\end{figure}

\begin{figure}[t]
\centering
\includegraphics[width=0.5\columnwidth]{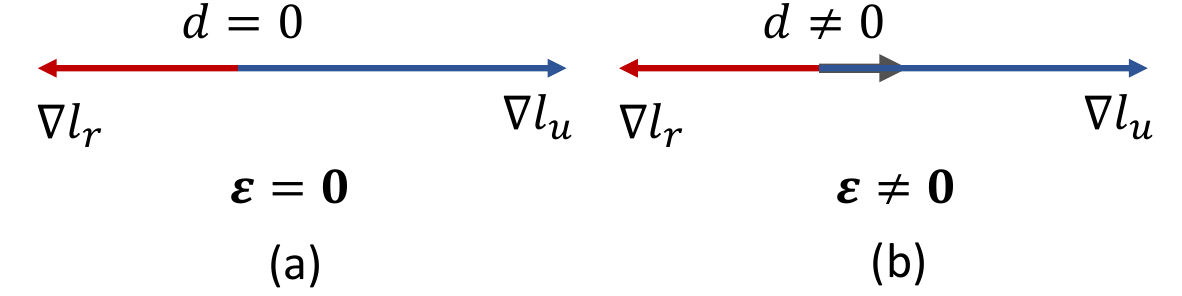} 
\caption{Illustration of the role of $\varepsilon_t$. (a) When \( \varepsilon_t = 0 \), if the retaining gradient and the unlearning gradient are opposite in direction, the update direction \( d = 0 \), at which point optimization halts and no further progress can be made in optimizing the unlearning objective. (b) When \( \varepsilon_t \neq 0 \), the update direction \( d \neq 0 \), and optimization of the unlearning objective proceeds under the condition that it does not unduly affect the retaining objective.}
\label{fig:why_relax}
\vspace{-3mm}
\end{figure}

When \( \varepsilon_t \neq 0 \), an error tolerance is introduced, gradient surgery is only performed when the conflict between the unlearning and the retaining gradients is sufficiently large, that is, when \( \nabla   \ell_{r}(\boldsymbol{\theta}_t) \cdot \nabla   \ell_{u}(\boldsymbol{\theta}_t) < -\varepsilon_t \). This threshold controls the priority during the optimization process for the unlearning objective. Specifically, when the angle between the retaining gradient and the unlearning gradient is 180 degrees as illustrated in Figure~\ref{fig:why_relax}, if \( \varepsilon_t = 0 \), the optimization process stops and the unlearning objective can not be further optimized. In contrast, when \( \varepsilon_t \neq 0 \), a certain degree of optimization for the unlearning objective is still guaranteed.

\begin{remark}\label{remark:utility_preservation}
    We can demonstrate that by imposing local constraints on the retaining objective at each training step, the decline in utility throughout the entire optimization process can be controlled by
    \begin{equation}\label{eq:decline}
            \ell_r(\boldsymbol{\theta}_t) - \ell_r(\boldsymbol{\theta}_0) \lesssim \mathcal{O}\left(\sum_{i=1}^t\varepsilon_t \alpha_t\right).
    \end{equation}
    Proof details are deferred to the Appendix~\ref{app:thm1}. This illustrates that the increase in the retaining objective is controlled by \(\sum_{i=1}^t \varepsilon_t \alpha_t\). This indicates that we can achieve the goal of utility-preservation by adjusting the hyperparameters.
\end{remark}

\subsection{Implicit Efficient Gradient Surgery}

Although unilateral gradient surgery can ensure that the optimization results meet the expected utility-preserving criteria, the computational cost in the optimization process is double that of the linear weighting method due to the need to calculate gradients for both the unlearning and retaining objectives separately. This can lead to inefficiency in optimization, which contradicts the high-efficiency and low-cost goals that unlearning aims to achieve. To tackle this challenge, we propose an efficient approximate solution to Problem \ref{eq:approximated_optimization_problem} in the following section, which requires only a single step of backpropagation, making the computational cost equivalent to that of the linear weighting method.

Since the solution to Problem~\ref{eq:dual_approximated_optimization_problem} requires knowledge of the gradients for both the unlearning and retaining objectives, we consider using a gradient descent approximation to solve for \( \lambda_t \):
\begin{equation*}\label{eq:lambda_update}
\lambda_{t+1} = \lambda_t - \beta_t \nabla_{\lambda_t} L_t(\lambda_t).
\end{equation*}
However, the gradient of \( L_t(\lambda_t) \) still necessitates information about the gradients of the unlearning and retaining objectives. Therefore, we consider approximating the gradient \( \nabla_{\lambda_t} L_t(\lambda_t) \) using first-order Taylor's approximation:
\begin{equation*}\label{eq:delta_tilde_proof}
\begin{aligned}
\nabla_{\lambda_t} L_t(\lambda_t)
&= \nabla   \ell_{r}(\boldsymbol{\theta}_t)\cdot  ( \nabla   \ell_u(\boldsymbol{\theta}_t)  + \lambda_t \nabla   \ell_r (\boldsymbol{\theta}_t)) + \varepsilon_t\\
&= \nabla   \ell_{r}(\boldsymbol{\theta}_t)\cdot \boldsymbol{d}_t + \varepsilon_t\\
&\approx \frac{1}{\alpha_t} (  \ell_{r}(\boldsymbol{\theta}_t) -   \ell_{r}(\boldsymbol{\theta}_{t+1})) + \varepsilon_t.
\end{aligned}
\end{equation*}
Consequently, we have an approximate method for solving $\lambda_t$ without backpropagation:
\begin{equation}\label{eq:lambda_update_approximate}
\begin{aligned}
\lambda_{t+1} & = \lambda_t - \beta_t \tilde{\delta}_t,\\
\text{where }\tilde{\delta}_t & = \frac{1}{\alpha_t} (  \ell_{r}(\boldsymbol{\theta}_t) -   \ell_{r}(\boldsymbol{\theta}_{t+1})) + \varepsilon_t.
\end{aligned}
\end{equation}

We provide the complete Efficient Utility-Preserving Machine Unlearning (EUPMU) algorithm (Algorithm~\ref{alg:algorithm}) in Appendix~\ref{app:algorithm}. The algorithm first computes the approximated weight $\lambda_t$ by Eq.\ref{eq:lambda_update_approximate} with no backpropagation of the loss function, and then uses $\lambda_t$ to compute the update direction $d_k$ by one backpropagation of the composite loss $\ell_{u}(\boldsymbol{\theta}_t) + \lambda_t   \ell_{r}(\boldsymbol{\theta}_t)$. Finally, the model parameter is updated by one gradient step with $d_k$. This process does not require to compute $\nabla   \ell_{u}(\boldsymbol{\theta}_t) $ and $\nabla   \ell_{r}(\boldsymbol{\theta}_t)$ separately like traditional multi-objective method, and hence save half of the computational cost.

\paragraph{A faster version of EUPMU.}
We propose EUPMU-fast as a lightweight variant of EUPMU that removes the second retain-loss recomputation on the same batch. Concretely, whereas EUPMU estimates the retain-loss change at step $t$ as 
$\Delta_r^{(t)}=\ell_r(\theta_t; \mathcal{B}_r^{(t)})-\ell_r(\theta_{t+1}; \mathcal{B}_r^{(t)})$, which requires an extra forward pass at $\theta_{t+1}$ on $\mathcal{B}_r^{(t)}$, 
EUPMU-fast uses a stochastic proxy based on consecutive retain batches:
\[
\widehat{\Delta}_r^{(t)} \;\approx\; \ell_r(\theta_{t+1}; \mathcal{B}_r^{(t+1)}) \;-\; \ell_r(\theta_t; \mathcal{B}_r^{(t)}),
\]
so it incurs no additional forward pass. This reduces wall-clock per step (see RTE) but can be less stable due to between-batch variation; in our runs, it sometimes underperforms EUPMU, though it remains competitive while being slightly faster.

\subsection{Theoretical Analysis}
Our primary concern is whether \( \lambda_t \) in Algorithm~\ref{alg:algorithm} can approximate the property of the optimal solution \( \lambda^*_t \), which determines whether efficient gradient surgery can achieve the goal of utility-preservation. Therefore, we present the following theorem to elucidate this result.
\begin{theorem}[Approximate $\lambda^*$]\label{thm:approximate_lambda}
Suppose retaining objective and unlearning objective are both (i) $G$-Smooth; (ii) $L$-Lipschitz. At training step t, setting $\sum_{i=0}^t \alpha_i \leq \mathcal{O}(1)$, $\beta_i/\alpha_i = \mathcal{O}(1/t^{1/3})$ and $\sum_{i=0}^{t} \varepsilon_i\leq \mathcal{O}(1)$, we have
    \begin{equation}
       \frac{1}{t} \sum_{i=1}^t \left(L_i(\lambda_i) - L_i(\lambda_i^*)\right) \leq \mathcal{O}(1/t^{1/3})
    \end{equation}
\end{theorem}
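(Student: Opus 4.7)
The plan is to view the update $\lambda_{t+1}=\lambda_t-\beta_t\tilde\delta_t$ as online inexact gradient descent on the sequence $\{L_t\}$ of strongly convex quadratics, and to bound its regret against the dynamic comparator $\{\lambda_t^*\}$. The key observation is that $L_t$ is an exact quadratic in $\lambda$ with curvature $\|\nabla\ell_r(\boldsymbol{\theta}_t)\|^2\le L^2$, so
\[
L_t(\lambda_t)-L_t(\lambda_t^*)=\tfrac{1}{2}\|\nabla\ell_r(\boldsymbol{\theta}_t)\|^2\,(\lambda_t-\lambda_t^*)^2\le \tfrac{L^2}{2}\,(\lambda_t-\lambda_t^*)^2,
\]
and the theorem reduces to bounding $\frac{1}{t}\sum_{i\le t}(\lambda_i-\lambda_i^*)^2$.

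Two estimates feed the analysis. First, the gradient bias: using $G$-smoothness of $\ell_r$ on the Taylor remainder hidden in $\tilde\delta_t$,
\[
\bigl|\tilde\delta_t-\nabla_{\lambda_t}L_t(\lambda_t)\bigr|\le \tfrac{G}{2}\alpha_t\|\boldsymbol{d}_t\|^2=\mathcal{O}(\alpha_t),
\]
where $\|\boldsymbol{d}_t\|$ stays bounded via the Lipschitz bounds on $\nabla\ell_u,\nabla\ell_r$ and an a priori bound on $\lambda_t$. Second, the drift of $\lambda_t^*$: differentiating the closed form in Proposition~\ref{prop:closed_form_dual} and using $\|\boldsymbol{\theta}_{t+1}-\boldsymbol{\theta}_t\|=\alpha_t\|\boldsymbol{d}_t\|$ together with smoothness/Lipschitzness of $\ell_r,\ell_u$ yields $|\lambda_{t+1}^*-\lambda_t^*|=\mathcal{O}(\alpha_t+|\varepsilon_{t+1}-\varepsilon_t|)$, so the total drift telescopes to $\mathcal{O}\!\bigl(\sum_{i\le t}\alpha_i+\sum_{i\le t}\varepsilon_i\bigr)=\mathcal{O}(1)$ under the stated schedules. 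A mild non-degeneracy assumption $\|\nabla\ell_r(\boldsymbol{\theta}_t)\|\ge c>0$ is implicit here, both to make $\lambda_t^*$ well defined and to supply the contraction factor below.

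With these in hand, I would unroll the recursion $\lambda_{t+1}-\lambda_t^*=(1-\beta_t\|\nabla\ell_r(\boldsymbol{\theta}_t)\|^2)(\lambda_t-\lambda_t^*)-\beta_t e_t$ and absorb the comparator shift $\lambda_t^*\mapsto\lambda_{t+1}^*$ to obtain a one-step potential inequality
\[
\Delta_{t+1}^2\le (1-c\beta_t)\Delta_t^2+\mathcal{O}(\beta_t^2\alpha_t^2)+\mathcal{O}\!\bigl((\lambda_{t+1}^*-\lambda_t^*)^2/\beta_t\bigr),
\]
with $\Delta_t=\lambda_t-\lambda_t^*$. Averaging this recursion across $t$ rounds leaves three competing quantities: an initial-condition term of order $1/(t\beta_t)$, a bias term of order $\beta_t\alpha_t^2$, and a drift term of order $\sum |\lambda_{i+1}^*-\lambda_i^*|^2/\beta_i$. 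Plugging in $\beta_t/\alpha_t=\mathcal{O}(t^{-1/3})$ together with the $\mathcal{O}(1)$ budgets on $\sum\alpha_i$ and $\sum\varepsilon_i$ balances these three sources at the common order $\mathcal{O}(t^{-1/3})$, finishing the proof after multiplying by $L^2/2$.

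The main obstacle is that this is neither a static OCO problem nor a standard stochastic gradient problem: $L_t$ changes, its minimizer $\lambda_t^*$ drifts, \emph{and} the gradient oracle is biased rather than mean-zero. Carefully balancing the three error sources is what pins down the specific exponent $1/3$ and why the rate is $t^{-1/3}$ rather than the usual $t^{-1/2}$. A subsidiary hurdle is establishing a priori boundedness of the iterates $\lambda_t$ so that the Lipschitz and bias constants remain uniform in $t$; an inductive boundedness argument (or explicit projection onto a compact interval) should close this gap.
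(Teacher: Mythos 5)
Your approach is genuinely different from the paper's. The paper first proves a lemma bounding the total \emph{functional variation} $\sum_i \sup_\lambda |L_{i+1}(\lambda)-L_i(\lambda)| = \mathcal{O}(1)$ under the stated schedules, then invokes an off-the-shelf dynamic-regret theorem for online gradient descent (Jadbabaie et al., 2015, Theorem~3), which gives the $\mathcal{O}(V_t^{1/3} t^{2/3})$ regret and hence the claimed $t^{-1/3}$ average. That theorem applies to merely \emph{convex} per-round losses, so the argument never needs any lower bound on $\|\nabla\ell_r(\boldsymbol{\theta}_t)\|$; the biased two-point approximation of $\nabla L_t$ is handled by citing a known equivalence (Zhao et al., 2021) rather than being analyzed explicitly. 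Your route instead tracks $\Delta_t = \lambda_t - \lambda_t^*$ directly, exploiting the exact quadratic structure of $L_t(\lambda)$ and building a contraction-plus-perturbation recursion. What this buys you is a self-contained and more transparent handling of the gradient bias than the paper's proof-by-citation remark, and arguably a cleaner account of \emph{why} the rate is $t^{-1/3}$ (three competing error sources to balance) rather than $t^{-1/2}$.

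The genuine gap is the non-degeneracy assumption $\|\nabla\ell_r(\boldsymbol{\theta}_t)\|\ge c>0$, which you flag as ``implicit'' but which is in fact an \emph{additional} hypothesis beyond the theorem's stated ones ($G$-smoothness and $L$-Lipschitzness). Without it your contraction factor $(1-c\beta_t)$ collapses, $\lambda_t^*$ can blow up or become undefined (it divides by $\|\nabla\ell_r\|^2$), and the whole potential argument fails. The paper's functional-variation route avoids this entirely because dynamic regret bounds for convex OGD are stated against the comparator $\arg\min_\lambda L_t(\lambda)$ without ever needing strong convexity or the comparator to be uniquely defined. If you want to complete your proof as stated, you need to either add this non-degeneracy assumption explicitly to the theorem, or project $\lambda_t$ onto a compact interval $[0,D]$ (as the algorithm actually does) and argue in terms of the restricted dual — in which case you are essentially reconstructing the projected-OGD argument the paper cites, and the quadratic/strong-convexity machinery becomes unnecessary. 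I would also double-check the exponent bookkeeping: with $\sum_i\alpha_i=\mathcal{O}(1)$ forcing $\alpha_i$ roughly of order $1/t$, and $\beta_i \sim \alpha_i/t^{1/3}$, the weighted telescoping sum $c\sum_i\beta_i\Delta_i^2 \le \Delta_0^2 + \cdots$ controls $\sum_i\beta_i\Delta_i^2$ rather than $\frac{1}{t}\sum_i\Delta_i^2$, and converting between the two requires $\beta_i$ to be close to uniform — which is in tension with $\beta_i\propto\alpha_i$ when the $\alpha_i$ decay. The three-term balance you sketch is plausible but would need to be carried out carefully; as written, it reads more like a target than a completed calculation.
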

Theorem~\ref{thm:approximate_lambda} demonstrates that as the training step increases, \( L_t(\lambda_t) \) gradually converges to \( L_t(\lambda_t^*) \), indicating that when \( t \) is sufficiently large, \( \lambda_t \) can approximate the condition for utility-preservation.

We next focus on whether Algorithm~\ref{alg:algorithm} can converge to a Pareto optimal solution, which would indicate whether the algorithm can fully optimize the unlearning objective under utility-preservation. 

\begin{theorem}[Pareto Optimality]\label{thm:Pareto_Optimality}
Suppose retaining objective and unlearning objective are both (i) convex with parameter $\boldsymbol{\theta}$; (ii) bounded by $B$; (iii) $L$-Lipschitz; (iv) $\|\boldsymbol{\theta}_t\|$ is bounded by $B$ for $t=1,\ldots,T$. At training step t, setting $\alpha_i=\alpha\leq\mathcal{O}(1/G) $, $\sum_{i=0}^{t} (i+1) \beta_i \leq \mathcal{O}(1)$, and $\sum_{i=0}^{t} \varepsilon_i\leq \mathcal{O}(1)$, there exist composite loss $\mathcal{C}(\boldsymbol{\theta})=\mu_u \ell_u(\boldsymbol{\theta}) + \mu_r  \ell_r(\boldsymbol{\theta})$, $(\mu_u,\mu_r)\in \Delta_2$ such that
\begin{equation}
    \mathcal{C}(\boldsymbol{\theta}_t) - \min _{\boldsymbol{\theta} }\mathcal{C}(\boldsymbol{\theta})  \leq \mathcal{O}(1/t).
\end{equation}
\end{theorem}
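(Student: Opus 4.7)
The plan is to reinterpret each EUPMU iteration as a perturbed gradient step on a fixed convex combination of $\ell_u$ and $\ell_r$, and then invoke a standard convex descent argument. The update direction $\boldsymbol{d}_t=\nabla\ell_u(\boldsymbol{\theta}_t)+\lambda_t\nabla\ell_r(\boldsymbol{\theta}_t)$ equals $(1+\lambda_t)\nabla\mathcal{C}_t(\boldsymbol{\theta}_t)$, where $\mathcal{C}_t(\boldsymbol{\theta})=(1-\tau_t)\ell_u(\boldsymbol{\theta})+\tau_t\ell_r(\boldsymbol{\theta})$ with $\tau_t:=\lambda_t/(1+\lambda_t)\in[0,1)$. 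So EUPMU is a scaled gradient step on a time-varying simplex-weighted composite, and the proof must collapse the weight sequence $(1-\tau_t,\tau_t)$ into a single fixed pair $(\mu_u,\mu_r)\in\Delta_2$.

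First I would show that $\tau_t$ stabilizes to some $\tau^\infty$, and set $\mathcal{C}=(1-\tau^\infty)\ell_u+\tau^\infty\ell_r$, which inherits convexity from $\ell_u,\ell_r$. Since $L_i(\lambda)$ in Proposition~\ref{prop:dual} is quadratic in $\lambda$ with curvature $\|\nabla\ell_r(\boldsymbol{\theta}_i)\|^2$, Theorem~\ref{thm:approximate_lambda} can be upgraded to an averaged bound on $|\lambda_i-\lambda_i^*|^2$. Because $\boldsymbol{\theta}_i$ moves by at most $\mathcal{O}(\alpha)$ per step, the optimal multiplier $\lambda_i^*$ (given by Eq.\ref{eq:lambda}) is $\mathcal{O}(\alpha)$-Lipschitz along the trajectory, and combined with $\sum_i\varepsilon_i\leq\mathcal{O}(1)$ this yields a limit $\tau^\infty$ with summable deviation $\sum_i|\tau_i-\tau^\infty|$. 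Then I would run the convex descent analysis: expanding $\|\boldsymbol{\theta}_{t+1}-\boldsymbol{\theta}^*\|^2$ and decomposing $\nabla\mathcal{C}_t=\nabla\mathcal{C}+(\tau_t-\tau^\infty)(\nabla\ell_r-\nabla\ell_u)$, convexity of $\mathcal{C}$ gives a telescoping descent inequality for $\mathcal{C}(\boldsymbol{\theta}_t)-\min\mathcal{C}$, with a per-step perturbation of order $\alpha L|\tau_t-\tau^\infty|$. Summing and applying the step-size condition $\alpha\leq\mathcal{O}(1/G)$ (which implicitly invokes smoothness, consistent with Theorem~\ref{thm:approximate_lambda}) delivers the $\mathcal{O}(1/t)$ rate.

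The hard part will be the coupling between $\boldsymbol{\theta}_t$ and $\lambda_t$: the weight $\tau_t$ depends on $\boldsymbol{\theta}_t$ through $\lambda_t^*(\boldsymbol{\theta}_t)$, while $\boldsymbol{\theta}_t$ is driven by $\tau_t$. Theorem~\ref{thm:approximate_lambda} only supplies an averaged $\mathcal{O}(1/t^{1/3})$ guarantee on $L_i(\lambda_i)-L_i(\lambda_i^*)$, whereas Theorem~\ref{thm:Pareto_Optimality} demands a pointwise $\mathcal{O}(1/t)$ bound on the composite suboptimality; closing this gap requires leveraging the strong convexity of $L_i$ in $\lambda$, together with the slow drift of $\lambda_i^*$, to upgrade averaged convergence to summable pointwise control of $\tau_t-\tau^\infty$, and then choosing the schedules $\alpha_i,\beta_i,\varepsilon_i$ so that the cumulative perturbation term also shrinks at rate $\mathcal{O}(1/t)$. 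The delicate bookkeeping between these two interlocked dynamics is where the argument is least routine.
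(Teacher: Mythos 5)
Your high-level decomposition---treat each step as a gradient step on a time-varying simplex-weighted composite, collapse the weights to a fixed pair, and run perturbed convex descent---is in the right spirit, and in fact something like "$\lambda_t$ has a limit" could be made rigorous. But the specific route you propose to control the weight drift cannot work, and it is also not how the paper proceeds.

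The key obstruction is that your plan to "upgrade Theorem~\ref{thm:approximate_lambda}" is circular with respect to the hyperparameter schedules. Theorem~\ref{thm:approximate_lambda} requires $\sum_{i=0}^t \alpha_i \leq \mathcal{O}(1)$, i.e.\ a summable (hence vanishing) primal step size, while Theorem~\ref{thm:Pareto_Optimality} assumes a \emph{constant} step size $\alpha_i=\alpha$. These schedules are mutually exclusive, so the $\mathcal{O}(1/t^{1/3})$ averaged regret on $L_i(\lambda_i)-L_i(\lambda_i^*)$ is simply not available in the regime of Theorem~\ref{thm:Pareto_Optimality}. Furthermore, passing from $L_i(\lambda_i)-L_i(\lambda_i^*)$ to $|\lambda_i-\lambda_i^*|^2$ requires the curvature $\|\nabla\ell_r(\boldsymbol{\theta}_i)\|^2$ to be bounded away from zero, which you cannot assume: as the iterate approaches a Pareto-stationary point the retain-gradient may vanish, and then $L_i$ degenerates to a linear function of $\lambda$. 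Your claim that $\lambda_i^*$ is $\mathcal{O}(\alpha)$-Lipschitz along the trajectory has the same problem, since Eq.~\ref{eq:lambda} divides by $\|\nabla\ell_r(\boldsymbol{\theta}_t)\|^2$.

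The paper avoids all of this by never trying to show that $\lambda_t$ converges to a specific $\lambda^\infty$ or that $\lambda_t$ tracks $\lambda_t^*$. Instead it uses a dynamic-regret decomposition with respect to the time-varying family $\mathcal{C}_{\lambda_i}(\cdot)=\ell_u(\cdot)+\lambda_i\ell_r(\cdot)$. Lemma~\ref{lem:fake convergence} bounds the \emph{static} regret $\sum_i \mathcal{C}_{\lambda_i}(\boldsymbol{\theta}_i)-\min_{\boldsymbol{\theta}}\sum_i \mathcal{C}_{\lambda_i}(\boldsymbol{\theta})$ via the standard convexity-plus-smoothness telescoping argument; Lemma~\ref{lem:stability} bounds the gap $\min_{\boldsymbol{\theta}}\sum_i \mathcal{C}_{\lambda_i}(\boldsymbol{\theta})-\sum_i \min_{\boldsymbol{\theta}_i}\mathcal{C}_{\lambda_i}(\boldsymbol{\theta}_i)$ purely in terms of the cumulative drift $\sum_i|\overline\lambda-\lambda_i|$, which is controlled by $\sum_i(i+1)\beta_i|\tilde\delta_i|$; this is where the condition $\sum(i+1)\beta_i\leq\mathcal{O}(1)$ is used. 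Crucially, $|\tilde\delta_i|$ is bounded by $GL+\varepsilon_i$ directly from smoothness and Lipschitzness, with no reference to $\lambda_i^*$ at all. The final bound is averaged, and the conversion to a per-iterate bound is deferred to standard online-to-batch arguments.

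If you want to salvage your "fixed-weight" route, the right tool is already in the Theorem~\ref{thm:Pareto_Optimality} schedule itself: $|\lambda_{t+1}-\lambda_t|\leq\beta_t|\tilde\delta_t|\leq\beta_t(GL+\varepsilon_t)$, so $\sum(i+1)\beta_i\leq\mathcal{O}(1)$ (with bounded $\varepsilon_t$) gives both convergence of $\lambda_t$ and summability of $\sum_t|\lambda_t-\lambda^\infty|$ by a tail-sum estimate, with no appeal to strong convexity or to Theorem~\ref{thm:approximate_lambda}. That would make the perturbed-descent plan workable, but it lands you essentially at the paper's stability lemma by another name. As it stands, the proposal as written contains a genuine gap: the $L_i$-strong-convexity/slow-drift upgrade cannot be established under the stated assumptions.
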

Theorem~\ref{thm:Pareto_Optimality} establishes the convergence in the convex case, confirming that Algorithm~\ref{alg:algorithm} converges to a Pareto optimal solution, and the convergence order matches that of the current state-of-the-art first-order MOO algorithms. 

\begin{remark}\label{remark:unlearning_opt}
    More specifically, combining Equation~\ref{eq:decline} and Theorem~\ref{thm:Pareto_Optimality}, we can bound the optimality of the unlearning objective by $\ell_u(\boldsymbol{\theta}) - \ell_u(\boldsymbol{\theta}^*) \leq \mathcal{O}(1/t),$ where $\boldsymbol{\theta}^* = \max_{\boldsymbol{\theta}} \ell_u(\boldsymbol{\theta}) \text{, s.t. }  \ell_r(\boldsymbol{\theta}) - \ell_r(\boldsymbol{\theta}_0) \lesssim \mathcal{O}\left(\sum_{i=1}^t\varepsilon_t \alpha_t\right).$ It shows that EUPMU converges to the solution with optimal unlearning objective under the constraint of a slight deterioration for retaining objective. This demonstrates that the unlearning objective is sufficiently optimized.
\end{remark}

\begin{theorem}[Pareto Stationary]\label{thm:Pareto_Stationary}
Suppose retaining objective and unlearning objective are both (i) $G$-Smooth; (ii) bounded by $B$; (iii) $L$-Lipschitz. At training step t, setting $\alpha_i=\alpha\leq\mathcal{O}(1/G) $, $\sum_{i=0}^{t} (i+1) \beta_i \leq \mathcal{O}(1)$, and $\sum_{i=1}^t \varepsilon_i \leq \mathcal{O}(1)$, we have
\begin{equation}
    \min_{i=1,\ldots,t} \min_{(\mu_u,\mu_r)\in \Delta_2} \left\|\mu_u \nabla \ell_u(\boldsymbol{\theta}_i) + \mu_r \nabla \ell_r(\boldsymbol{\theta}_i)  \right\| \leq \mathcal{O}(1/t^{1/2})
\end{equation}
\end{theorem}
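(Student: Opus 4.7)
The plan is to leverage the descent structure induced by the algorithm's update direction $\boldsymbol{d}_t$ to control the Pareto stationarity measure at the best iterate. Writing $\lambda_t^{+} := \max(\lambda_t, 0)$, Proposition~\ref{prop:closed_form_dual} gives $\boldsymbol{d}_t = \nabla \ell_u(\boldsymbol{\theta}_t) + \lambda_t^{+} \nabla \ell_r(\boldsymbol{\theta}_t)$ uniformly across both algorithmic regimes. Choosing the simplex weights
\[
(\mu_u^t, \mu_r^t) \;=\; \left(\frac{1}{1+\lambda_t^{+}},\;\frac{\lambda_t^{+}}{1+\lambda_t^{+}}\right) \in \Delta_2
\]
yields $\|\mu_u^t \nabla \ell_u(\boldsymbol{\theta}_t) + \mu_r^t \nabla \ell_r(\boldsymbol{\theta}_t)\| = \|\boldsymbol{d}_t\|/(1+\lambda_t^{+}) \le \|\boldsymbol{d}_t\|$, so it suffices to prove $\min_{i \le t}\|\boldsymbol{d}_i\|^2 \le \mathcal{O}(1/t)$.

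The critical preliminary step is to establish a uniform bound $\lambda_t^{+} \le \Lambda$. By $L$-Lipschitzness one has $\|\boldsymbol{d}_t\| \le L(1+\lambda_t^{+})$, and the definition of $\tilde\delta_t$ together with the Lipschitzness of $\ell_r$ along the trajectory gives $|\tilde\delta_t| \le L^2(1+\lambda_t^{+}) + \varepsilon_t$. The update $\lambda_{t+1} = \lambda_t - \beta_t \tilde\delta_t$ therefore satisfies the discrete Gronwall recursion $1 + \lambda_{t+1}^{+} \le (1+\lambda_t^{+})(1 + \beta_t L^2) + \beta_t \varepsilon_t$, which the hypotheses $\sum_i \beta_i \le \mathcal{O}(1)$ (implied by $\sum_i(i+1)\beta_i \le \mathcal{O}(1)$) and $\sum_i \varepsilon_i \le \mathcal{O}(1)$ absorb into a constant via boundedness of $\prod_t(1+\beta_t L^2)$. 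With $\lambda_t^{+}$ uniformly bounded, the composite $F_t(\boldsymbol{\theta}) := \ell_u(\boldsymbol{\theta}) + \lambda_t^{+} \ell_r(\boldsymbol{\theta})$ is $G(1+\Lambda)$-smooth with $\nabla F_t(\boldsymbol{\theta}_t) = \boldsymbol{d}_t$, and the standard descent lemma applied to $\boldsymbol{\theta}_{t+1} - \boldsymbol{\theta}_t = -\alpha \boldsymbol{d}_t$ with $\alpha \le 1/(G(1+\Lambda)) = \mathcal{O}(1/G)$ yields
\[
\tfrac{\alpha}{2}\|\boldsymbol{d}_t\|^2 \;\le\; F_t(\boldsymbol{\theta}_t) - F_t(\boldsymbol{\theta}_{t+1}) \;=\; [\ell_u(\boldsymbol{\theta}_t) - \ell_u(\boldsymbol{\theta}_{t+1})] + \lambda_t^{+}[\ell_r(\boldsymbol{\theta}_t) - \ell_r(\boldsymbol{\theta}_{t+1})].
\]

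Summing over $t$, the $\ell_u$ differences telescope to at most $2B$. For the weighted second sum, Abel summation gives $\sum_{t=1}^T \lambda_t^{+}[\ell_r(\boldsymbol{\theta}_t)-\ell_r(\boldsymbol{\theta}_{t+1})] \le 2B\Lambda + B\sum_{t=1}^{T-1}|\lambda_{t+1}^{+}-\lambda_t^{+}|$, and the drift bound $|\lambda_{t+1}^{+}-\lambda_t^{+}| \le \beta_t[L^2(1+\Lambda)+\varepsilon_t]$ together with the summability of $\beta_t$ and $\beta_t \varepsilon_t$ keeps this $\mathcal{O}(1)$. Hence $\sum_{t=1}^T \alpha\|\boldsymbol{d}_t\|^2 \le \mathcal{O}(1)$, so $\min_{i \le t}\|\boldsymbol{d}_i\|^2 \le \mathcal{O}(1/t)$, which together with the weight choice above gives the claimed $\mathcal{O}(1/t^{1/2})$ rate on the Pareto stationarity measure.

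The main obstacle is the uniform control of $\lambda_t^{+}$ and its drift: without it, the smoothness constant of $F_t$ becomes unbounded and the non-telescoping term from the time-varying weight in front of $\ell_r$ cannot be closed by Abel summation. The strengthened step-size hypothesis $\sum_i (i+1)\beta_i \le \mathcal{O}(1)$ in the statement is, I suspect, a natural condition used both for the Gronwall estimate above and to absorb the first-order approximation error incurred when replacing $\nabla_{\lambda_t} L_t(\lambda_t)$ by $\tilde\delta_t$; this coupling between the primal and dual recursions is the place where I would expect the tightness of the $1/t^{1/2}$ rate to be felt, and would be the step I would double-check most carefully.
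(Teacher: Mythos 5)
Your proposal is correct, and it reaches the same target quantity as the paper (you both recover the simplex weights $(\mu_u,\mu_r)=\bigl(\tfrac{1}{1+\lambda_t},\tfrac{\lambda_t}{1+\lambda_t}\bigr)$ and reduce matters to controlling a running sum of squared directions), but the mechanism is genuinely different. The paper's Lemma~\ref{lem:d_i} applies the descent lemma to $\ell_u$ alone, so the surviving cross-term $-\alpha_i\lambda_i\,\nabla\ell_r(\boldsymbol{\theta}_i)\cdot\boldsymbol{d}_i$ must be killed some other way; they do it by invoking the complementary-slackness identity $-\nabla\ell_r(\boldsymbol{\theta}_i)\cdot\boldsymbol{d}_i=\varepsilon_i$ when $\lambda_i>0$, which only holds if $\lambda_i$ is the \emph{exact} dual-optimal value from Eq.~\ref{eq:lambda}. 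They then bound the \emph{normalized} direction $\boldsymbol{d}_i^n=\boldsymbol{d}_i/(1+\lambda_i)$, so the factor $\tfrac{\lambda_i}{(1+\lambda_i)^2}\le 1$ tames the resulting $\lambda_i\varepsilon_i$ term without any explicit appeal to $\lambda_i\le D$. You instead apply the descent lemma to the full composite $F_t=\ell_u+\lambda_t^{+}\ell_r$, which makes the cross-term vanish by construction but introduces two new debts: you need the smoothness of $\ell_r$ as well (available in the hypotheses, unused by the paper's lemma), and the time-varying coefficient in front of $\ell_r$ no longer telescopes, which you close by Abel summation plus a drift bound $\sum_t|\lambda_{t+1}^{+}-\lambda_t^{+}|=\mathcal{O}(1)$. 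You also prove a uniform bound on $\lambda_t^{+}$ via a discrete Gronwall estimate; this is actually redundant in the paper's setting because Algorithm~\ref{alg:algorithm} explicitly clips $\lambda_t\in[0,D]$, so boundedness is free. The trade-off is instructive: your route never uses the dual-optimality identity, so it applies verbatim to the \emph{approximate} $\lambda_t$ that Algorithm~\ref{alg:algorithm} actually computes, whereas the paper's proof of Lemma~\ref{lem:d_i} is, strictly speaking, only valid for the closed-form $\lambda_t^{*}$ and thus for the explicit (not implicit) surgery variant — arguably a small gap on their side that your argument avoids. Conversely, the paper's normalization $\boldsymbol{d}_i^n$ is slightly sharper than your final step of bounding $\|\boldsymbol{d}_i\|/(1+\lambda_i^{+})\le\|\boldsymbol{d}_i\|$, but since $\lambda_t$ is bounded this costs only a constant factor and both yield the claimed $\mathcal{O}(1/t^{1/2})$ rate.
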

Theorem~\ref{thm:Pareto_Stationary} elucidates the convergence in the non-convex scenario, demonstrating that Algorithm~\ref{alg:algorithm} is capable of converging to a Pareto stationary point, with a convergence order that remains on par with the current best first-order multi-objective optimization algorithms. This indicates that the algorithm theoretically ensures sufficient and efficient optimization even in non-convex situations. We defer all the proof details to the Appendix~\ref{app:thm2}, \ref{app:thm3}, \ref{app:coro1}, \ref{app:thm4}. 

\begin{table}[t]
        \captionof{table}{Performance of class-wise forgetting on Imagenette using SD. The best performance is highlighted in \textbf{bold}.}\label{tab:SD}
        \centering
        \begin{tabular}{l|cc|cc|cc|cc}
        \hline
        \multirow{2}{*}{Forget.Class} & \multicolumn{2}{c|}{SalUn} & \multicolumn{2}{c|}{ESD} & \multicolumn{2}{c|}{FMN} & \multicolumn{2}{c}{EUPMU} \\ 
         & {UA} & {FID} & {UA} & {FID} & {UA} & {FID} & {UA} & {FID} \\ \hline
        Tench           & 0.00 & 0.94 & 0.00 & 1.18	& 56.20	& \textbf{0.86} & 0.00 & {0.93} \\
        EnglishSpringer & 0.00 & \textbf{0.79} & 0.00 & 0.98 & 71.40 & 1.24 & 0.00 & 1.52 \\
        CassettePlayer  & 0.20 & 1.59 & 3.40 & 1.75	& 11.20	& 1.02 & 0.00 & \textbf{0.97} \\
        ChainSaw        & 0.00 & 1.07 & 0.00 & 1.55 & 50.80 & \textbf{0.88} & 0.00 & 0.99 \\
        Church          & 0.40 & 0.99 & 2.60 & 1.88	& 75.60	& 1.66 & 0.00 & \textbf{0.87} \\
        FrenchHorn      & 0.00 & 1.44 & 0.40 & 1.15 & 54.40 & 1.88 & 0.00 & \textbf{0.83} \\
        GarbageTruck    & 0.00 & 1.63 & 0.20 & 2.38	& 58.00	& 1.10 & 0.00 & \textbf{1.06} \\
        GasPump         & 0.00 & \textbf{0.81} & 1.00	& 2.03 & 23.60 & 1.36 & 0.00 & 1.04 \\
        GolfBall        & 1.20 & 1.89 & 3.20 & \textbf{0.85} & 83.80 & 1.12 & 0.00 & 1.28 \\
        Parachute       & 0.00 & 1.06 & 0.00 & 1.54 & 64.80 & 2.22 & 0.00 & \textbf{0.79} \\ \hline
        Average         & 0.18 & 1.22 & 1.09 & 1.46	& 59.76	& 1.27 & \textbf{0.00} & \textbf{1.03} \\ \hline
        \end{tabular}
\end{table}


\begin{table*}[t]
\centering
\caption{Quantitative results of instance unlearning and artist style unlearning. The best-performing results are highlighted in \textbf{bold}, and the second-best results are \underline{underlined}. }
\label{tab:combined_results}
\resizebox{0.99\textwidth}{!}{
\setlength{\tabcolsep}{1mm}
\begin{tabular}{c|ccc|ccc|ccc|ccc|ccc|ccc}
\hline
\textbf{Model} & \multicolumn{3}{c|}{\textbf{Snoopy}} & \multicolumn{3}{c|}{\textbf{Mickey}} & \multicolumn{3}{c|}{\textbf{Spongebob}} & \multicolumn{3}{c|}{\textbf{Van Gogh}} & \multicolumn{3}{c|}{\textbf{Picasso}} & \multicolumn{3}{c}{\textbf{Rembrandt}} \\
& CS & CA & FID & CS & CA & FID & CS & CA & FID & CS & CA & FID & CS & CA & FID & CS & CA & FID \\
\hline
SD v1.4 & 74.48 & 99.38 & - & 72.43 & 97.62 & - & 73.06 & 98.50 & - & 73.20 & 94.75 & - & 69.01 & 90.74 & - & 71.65 & 95.73 & - \\
\hline
\multicolumn{10}{c|}{Erasing \textbf{Snoopy}} & \multicolumn{9}{c}{Erasing \textbf{Van Gogh}} \\
\hline
& CS $\downarrow$ & CA $\downarrow$ & FID $\uparrow$ & CS $\uparrow$ & CA $\uparrow$ & FID $\downarrow$ & CS $\uparrow$ & CA $\uparrow$ & FID $\downarrow$ & CS $\downarrow$ & CA $\downarrow$ & FID $\uparrow$ & CS $\uparrow$ & CA $\uparrow$ & FID $\downarrow$ & CS $\uparrow$ & CA $\uparrow$ & FID $\downarrow$ \\
\hline
ESD & \underline{49.27} & \underline{35.38} & \underline{139.47} & 58.48 & 65.00 & 112.96 & 62.06 & 82.25 & 103.04 & 51.85 & 39.25 & 179.17 & 63.79 & 76.60 & 79.61 & 65.49 & 78.94 & 91.83 \\
ConAbl & 57.32 & 80.88 & 139.31 & 69.43 & \underline{94.38} & 56.22 & 70.60 & 97.00 & 61.56 & 57.40 & 34.00 & 167.07 & 65.65 & 80.83 & 57.85 & 68.66 & 91.95 & 78.38 \\
SPM & 54.82 & 75.38 & 111.42 & \underline{71.89} & \textbf{97.5} & \underline{30.16} & \textbf{72.79} & \underline{98.12} & \underline{42.1} & \underline{51.7} & \textbf{32.25} & \underline{198.43} & \underline{68.47} & \underline{89.58} & \underline{23.64} & \underline{70.83} & \underline{94.22} & \underline{41.51} \\
EUPMU & \textbf{44.9} & \textbf{25.62} & \textbf{167.66} & \textbf{72.74} & \textbf{97.5} & \textbf{29.2} & \underline{72.52} & \textbf{98.88} & \textbf{40.2} & \textbf{45.01} & \underline{33.0} & \textbf{228.75} & \textbf{68.69} & \textbf{89.59} & \textbf{22.96} & \textbf{71.01} & \textbf{95.97} & \textbf{39.98} \\
\hline
\multicolumn{10}{c|}{Erasing \textbf{Snoopy and Mickey}} & \multicolumn{9}{c}{Erasing \textbf{Picasso}} \\
\hline
& CS $\downarrow$ & CA $\downarrow$ & FID $\uparrow$ & CS $\downarrow$ & CA $\downarrow$ & FID $\uparrow$ & CS $\uparrow$ & CA $\uparrow$ & FID $\downarrow$ & CS $\uparrow$ & CA $\uparrow$ & FID $\downarrow$ & CS $\downarrow$ & CA $\downarrow$ & FID $\uparrow$ & CS $\uparrow$ & CA $\uparrow$ & FID $\downarrow$ \\
\hline
ESD & \underline{48.99} & \textbf{34.88} & \underline{143.29} & \underline{47.79} & \underline{23.75} & \underline{167.05} & 58.03 & 68.88 & 123.11 & 70.43 & \underline{86.25} & 104.56 & 60.60 & 51.72 & 180.50 & 70.61 & 93.13 & 91.70 \\
ConAbl & 60.85 & 94.50 & 119.62 & 63.46 & 87.12 & 105.99 & 70.06 & 97.38 & 66.82 & 68.05 & 83.75 & 106.20 & 58.92 & \underline{51.21} & 145.65 & 70.76 & 94.88 & 69.09 \\
SPM & 54.18 & 73.00 & 113.74 & 53.58 & 64.38 & 132.08 & \textbf{72.37} & \underline{97.88} & \underline{45.44} & \textbf{73.27} & \textbf{94.75} & \underline{24.45} & \underline{48.89} & 51.24 & \underline{258.91} & \underline{71.5} & \textbf{96.88} & \textbf{25.12} \\
EUPMU & \textbf{46.85} & \underline{43.25} & \textbf{161.11} & \textbf{44.55} & \textbf{21.0} & \textbf{196.59} & \underline{71.3} & \textbf{98.63} & \textbf{44.84} & \underline{73.24} & \textbf{94.75} & \textbf{23.41} & \textbf{45.42} & \textbf{40.49} & \textbf{273.38} & \textbf{71.54} & \underline{96.37} & \underline{26.98} \\
    \hline
    \multicolumn{10}{c|}{Erasing \textbf{Snoopy, Mickey and Spongebob}} & \multicolumn{9}{c}{Erasing \textbf{Rembrandt}} \\    \hline
    & CS $\downarrow$ & CA $\downarrow$ & FID $\uparrow$ & CS $\downarrow$ & CA $\downarrow$ & FID $\uparrow$ & CS $\downarrow$ & CA $\downarrow$ & FID $\uparrow$ & CS $\uparrow$ & CA $\uparrow$ & FID $\downarrow$ & CS $\uparrow$ & CA $\uparrow$ & FID $\downarrow$ & CS $\downarrow$ & CA $\downarrow$ & FID $\uparrow$\\
    \hline
ESD & \underline{48.6} & \textbf{34.88} & \underline{147.39} & \underline{46.91} & \underline{20.12} & \underline{173.96} & \underline{45.22} & \underline{11.38} & \underline{192.87} & 66.68 & 73.00 & 87.30 & \underline{68.97} & 88.18 & 79.11 & 41.31 & 5.97 & 206.59 \\
ConAbl & 60.48 & 95.62 & 125.54 & 61.30 & 80.75 & 112.43 & 60.36 & 89.38 & 125.04 & 67.73 & 78.25 & 84.77 & 67.83 & 85.20 & 46.52 & 55.82 & 39.46 & 130.92 \\
SPM & 54.17 & 74.25 & 114.69 & 53.76 & 64.25 & 131.28 & 52.15 & 63.88 & 154.65 & \underline{72.81} & \underline{93.75} & \underline{30.06} & 68.83 & \underline{88.73} & \underline{17.8} & \underline{32.23} & \textbf{0.22} & \underline{269.27} \\
EUPMU & \textbf{47.01} & \underline{43.5} & \textbf{155.63} & \textbf{44.24} & \textbf{18.75} & \textbf{200.24} & \textbf{42.01} & \textbf{5.37} & \textbf{213.23} & \textbf{73.01} & \textbf{94.5} & \textbf{26.12} & \textbf{69.12} & \textbf{90.49} & \textbf{14.99} & \textbf{27.59} & \underline{0.48} & \textbf{274.82} \\

\hline
\end{tabular}
}
\end{table*}

\begin{figure*}[h]
    \centering
    \includegraphics[width=0.99\linewidth]{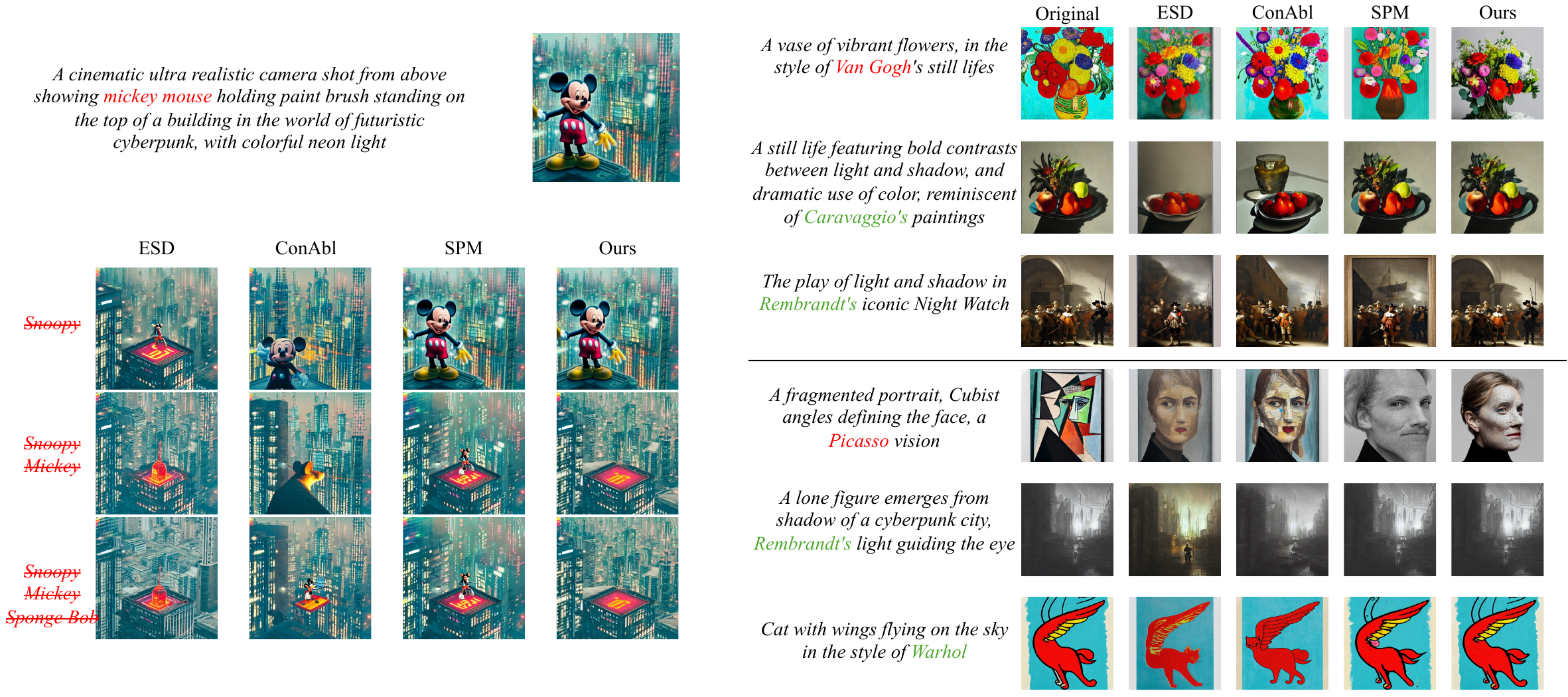}
    \caption{Visualization of instance \& style unlearning. Concepts that have been unlearned are indicated in \color{red}{red}.}
    \label{fig:style unlearning}
\end{figure*}

\section{Experiments}
In this section, we present empirical assessments of our proposed approach through experiments on image generation tasks, and benchmark its performance against a number of contemporary MU baselines. We leave detailed setups and results as well as additional experiments in Appendix~\ref{app:exmperiment}

\subsection{Experiment Setups}\label{sec:experimentsetup}
\noindent{\textbf{Implementation Details}.} 
We focus on DDPM~\cite{ho2022classifier} and SD \cite{rombach2022high} models to prevent the generation of specific object classes, and the experiments are conducted on CIFAR-10 and Imagenette \cite{howard2020fastai}, respectively. We also consider concept-wise forgetting in SD to erase instance \& style concepts and NSFW (not safe for work) content. All numerical results are the mean value over 5 independent trials. All experiments are carried out on two A100 GPUs. 

\noindent{\textbf{Baselines}.} Our experiments encompas baselines, including 
saliency unlearning (SalUn) \cite{fan2023salun}, erased stable diffusion (ESD) \cite{gandikota2023erasing}, forget-me-not (FMN) \cite{zhang2023forget} and concept ablation (ConAbl) \cite{kumari2023ablating}.

\noindent{\textbf{Evaluation Metrics}.} 
Unlearning Accuracy (UA), CLIP Score (CS) \cite{hessel2021clipscore}, CLIP Accuracy \cite{kumari2023ablating} (CA), FID \cite{heusel2017gans} and Runtime Efficiency (RTE) are utilized to measure the unlearning performance in concept-wise forgetting tasks. UA employs an external classifier, which is finetuned to classify the Cifar-10 dataset, to confirm the absence of the forgetting concept in generated images. The CLIP Score calculates the similarity between the image and the prompt. CLIP Accuracy is determined by performing a binary classification to distinguish between the target and the anchor concepts. RTE is the relative estimated computation time of applying an MU method.
\FloatBarrier
\needspace{5\baselineskip} 
\subsection{Experiment Results}
\begin{table}[t]
    \captionof{table}{NSFW Content Removal Statistics. We follow the i2p prompts~\cite{schramowski2023safe} set that contains 4709 samples, and present the number of NSFW contents generated by models before and after unlearning. The unlearning methods include ESD, FMN, SalUn and ours. The best performance is highlighted in \textbf{bold}.}
    \label{tab:nude}
    \centering
    \begin{tabular}{l|cccc|c}
    \hline
    Category & SD & FMN & ESD & SalUn & Ours \\
    \hline
    Male genitalia & 54 & 11 & 17 & 3 & \textbf{0} \\
    Male breast & 244 & 51 & 39 & 4 & \textbf{1} \\
    Female genitalia & 28 & 10 & 10 & 2 & \textbf{0} \\
    Female breast & 225 & 43 & 30 & 4 & \textbf{3} \\
    Buttocks & 57 & 14 & 12 & \textbf{0} & 2 \\
    \hline
    Total & 608 & 129 & 108 & 13 & \textbf{6} \\
    \hline
    \end{tabular}
\end{table}

\begin{figure*}[t]
        \centering
        \includegraphics[width=0.75\columnwidth]{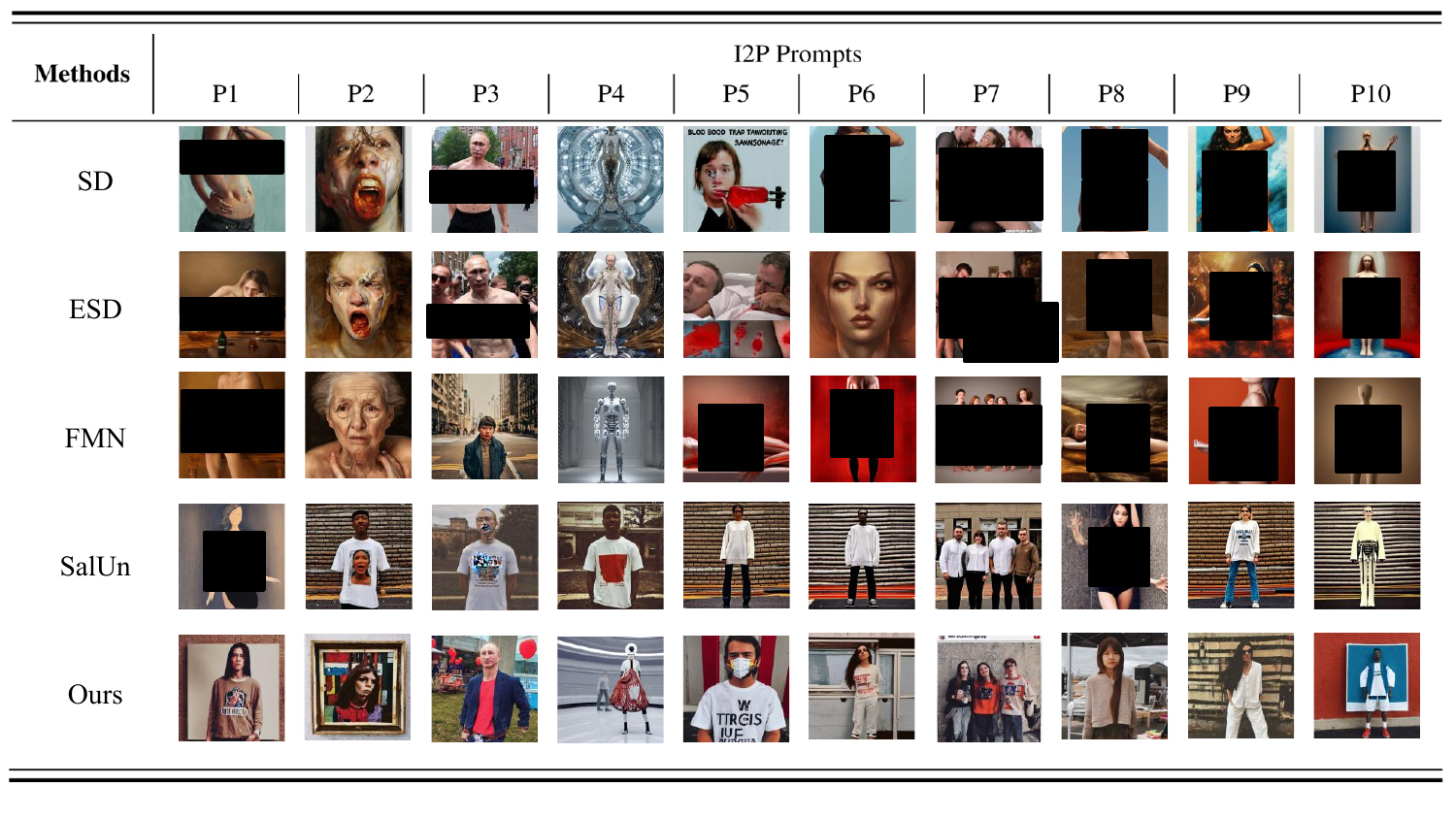}
        \caption{Examples of generated images using SDs w/ and w/o MU. Each column represents generated images with the same prompt (denoted by $P_i$) and seed. }
        \label{fig:nude}
\end{figure*}



\begin{table*}[t]
\centering
\caption{Ablation of MOO Methods applied to Random Labeling for Class-wise Data Forgetting(10\%) of unlearning Resnet 18 for Cifar 10 classification. The best performance is highlighted in \textbf{bold}. We define the average score (Avg.score) to be (UA+RA+TA+MIA)/4.}
\label{tab:ablation}
\resizebox{0.99\textwidth}{!}{
\begin{tabular}{lc|cccccc}
\hline
{Methods} & {Description} & UA & RA & TA & MIA & Avg. score & RTE \\
\hline
Retrain  & Full Retraining & 100.00 & 100.00 & 94.68 & 100.00 & 98.67 & - \\
Linearization  & Pure Linearization & 98.44 & 96.88 & 90.97 & 100.00 & 96.57 & \textbf{15.7} \\
\hline
FAMO  & Efficient MGDA & 98.68 & 98.44 & 92.56 & 100.00 & 97.48 & \textbf{18.7} \\
PCGrad  & Confined Gradient Surgery & 98.71 & 98.64 & 92.65 & 100.00 & 97.50 & 27.7 \\
\hline
UNGrad  & Unilateral Gradient Surgery & 99.53 & 99.25 & 93.70 & 100.00 & 98.12 & 27.7 \\
EUPMU  & Efficient Unilateral Gradient Surgery & \textbf{99.64} & {99.69} & {94.29} & 100.00 & \textbf{98.40} & \textbf{18.4} \\
EUPMU-fast  & Fast EUPMU variant & 98.18 & \textbf{99.83} & \textbf{94.36} & 100.00 & 98.09 & \textbf{17.9} \\
\hline
\end{tabular}
}
\vspace{-3mm}
\end{table*}

\noindent\textbf{Class-wise Forgetting in Image Generation.} 

\begin{wraptable}[10]{r}{0.3\textwidth} 
  \vspace{-0.8\baselineskip}            
  \centering
  \vspace{-20pt} 
  \setlength{\tabcolsep}{3pt}
  \caption{Performance of class-wise forgetting on CIFAR-10 with classifier-free guidance DDPM. The best performance is in \textbf{bold}.}
  \label{tab:DDPM}

  \begin{tabular}{c|ccc}
    \hline
    Method & UA & FID & RTE \\
    \hline
    Retrain & 1.80 & 20.50 & -\\
    \hline
    ESD & \textbf{0.00} & 47.57 & 1.6 \\
    SalUn & 0.74 & 24.53 & 6.05 \\
    \hline
    EUPMU & 0.78 & \textbf{22.57} & \textbf{1.38} \\
    \hline
  \end{tabular}
\end{wraptable}
Table~\ref{tab:DDPM} presents the numerical results for the unlearning of the category `Airplane' in DDPM. In the numerical results, our approach shows a significantly better FID score compared to ESD and SalUn, while the UA score is consistent with other methods, demonstrating the utility preservation efficacy of EUPMU, which achieves a superior tradeoff in this task. Table~\ref{tab:SD} also displays numerical outcomes for SD unlearning across different categories, where EUPMU demonstrates notable improvements in both UA and FID metrics, further confirming the capability of the proposed method to address the utility-unlearning tradeoff issue.

\noindent\textbf{Instance \& Style Forgetting in Image Generation.} 
We selected several representative concepts to verify the performance of unlearning instances and styles, as detailed in Table \ref{tab:combined_results}. In both instance and style forgetting tasks, our approach nearly outperforms other baselines in all metrics of CS, CA and FID, demonstrating that it successfully unlearns specific instances and styles while excelling in preserving non-target concepts. Visualization is shown in Figure~\ref{fig:style unlearning}, we observe that EUPMU can precisely erase target concepts without negative impact on model capability. See Appx. Figure~\ref{fig:paretoexperiment} for a 3D Pareto analysis over \{CS-Forget, CA-Forget, FID-Retain\}, where integrating EUPMU into both ConAbl and SPM significantly improve the Pareto front to all baselines.

\noindent\textbf{NSFW Forgetting in Image Generation.} 
Table~\ref{tab:nude} and Figure~\ref{fig:nude} demonstrate the effect of using Machine Unlearning (MU) to forget the illicit concept of nudity. From Table~\ref{tab:nude}, we observe that our method generates the least number of NSFW images, proving that EUPMU more effectively erases NSFW content than other baselines. From Figure~\ref{fig:nude}, 
we find that the quality of the generated images is significantly superior to other methods. This indicates that our approach can effectively achieve concept unlearning without compromising the model's generative capabilities.

\subsection{Ablation Study}
Table~\ref{tab:ablation} presents the results of our ablation study. We use Unlearning Accuracy (UA) and Membership Inference Attack (MIA) for unlearning efficacy, Remaining Accuracy (RA) and Testing Accuracy (TA) for classifier fidelity. We compared representative MOO methods, FAMO~\cite{liu2024famo} and PCGrad~\cite{yu2020gradient}. FAMO is an efficient approximation of the MGDA algorithm, while PCGrad represents the gradient surgery approach. This comparison aims to demonstrate that pure MOO algorithms cannot resolve the utility-unlearning challenge mentioned in the background. Additionally, we contrasted our results with UNGrad, the explicit unilateral gradient surgery method introduced in previous section, to assess whether the implicit efficient gradient surgery in EUPMU can achieve the effectiveness of the exact method and enhance algorithmic efficiency.

\noindent\textbf{MOO can not solve utility-unlearning tradeoff.} 
Comparing RL with RL+FAMO and RL+PCGrad, we observe a consistent improvement across all metrics after employing MOO methods. However, the optimization for UA is not as pronounced. This is attributed to the MOO methods' insufficient optimization of the unlearning goal. 
Comparing RL+UNGrad with RL+EUPMU, we find that compared to MOO methods, there is sufficient optimization in MIA, indicating that the UPUP modeling approach is better suited for unlearning problems.

\noindent\textbf{Implicit gradient surgery is efficient.}
Comparing RL+EUPMU with RL+UNGrad and RL+PCGrad, we find a significant reduction in Runtime Efficiency (RTE) of EUPMU, demontrating the efficiency. At the same time, since it only requires one backward pass, RTE is essentially consistent with the linear weighting method. EUPMU-fast further decreases RTE with removing the extra retain loss forward pass and uses next batch retain loss to measure change of retain loss. However the result might be not as good as EUPMU's result, due to the randomness and difference between batches.

\noindent\textbf{Implicit gradient surgery may be more effective than explicit gradient surgery.}
Comparing RL+EUPMU with RL+UNGrad, we observe a significant improvement in every metric. This is attributed to the fact that under the stochastic gradient training in deep learning, the method of approximating solutions for composite weights $\lambda$ is more stable than the precise solution, thereby achieving better results~\cite{zhou2022convergence}.

\section{Conclusion}
This paper investigates the core issue of the utility-unlearning tradeoff in machine unlearning, highlighting that existing multi-objective methods cannot be directly applied to unlearning scenarios due to inconsistencies in modeling with unlearning goals and issues with algorithmic efficiency. Therefore, this paper first establishes the utility-preserving unlearning problem and proposes a gradient-based optimization algorithm to solve it, proving its equivalence to unilateral gradient surgery. Subsequently, an efficient implicit gradient surgery method is introduced to accelerate the efficiency of gradient surgery and avoid introducing additional computational costs. Theoretically, we analyze that the algorithm can converge to Pareto optimal/stationary while maintaining utility, indicating that the algorithm can achieve an optimal tradeoff. Empirically, experimental results validate that the algorithm can fulfill the purpose of utility preservation while sufficiently optimizing the unlearning effect, further corroborating the theoretical findings.

\acksection{This work is supported by the National Science Foundation of China (62401327). We would like to thank Yongliang Wu for his advice on the experiments.
}

\bibliographystyle{plain}
\bibliography{main}

\appendix
\clearpage
\appendix

\section{Related Work}\label{app:related work}
Our primary focus is on addressing pivotal issue within MU - unlearning-retaining tradeoff. We will first compare our approach with existing MU methods, elucidating why current techniques fall short in resolving the targeted issues. Subsequently, we introduce the multi-objective optimization approach, which may offer solutions to the tradeoff problem. We will discuss the reasons why these methods cannot be directly applied to MU and propose our novel strategies to effectively tackle these complex problems.

\paragraph{Machine Unlearning (MU)} aims to refine machine learning models by eradicating the impact of certain data points or classes, primarily to avert potential privacy violations post-training \citep{ginart2019making, neel2021descent, ullah2021machine, sekhari2021remember}. The ideal of complete unlearning, akin to retraining from scratch, is computationally prohibitive despite its theoretical merits. To mitigate this, research has ventured into probabilistic techniques such as Differential Privacy (DP) \citep{ginart2019making, guo2019certified, neel2021descent, ullah2021machine, sekhari2021remember}. Yet, these techniques encounter limitations that impede their efficacy, notably in thwarting membership inference attacks \citep{dwork2006our, graves2021amnesiac}. Consequently, there is a pivot towards crafting more potent and economical MU strategies \citep{golatkar2020eternal, becker2022evaluating, thudi2022unrolling, jia2023model, chen2023boundary, warnecke2021machine}. The reach of MU has extended into federated learning \citep{wang2022federated, liu2022right, wu2022federated} and graph neural networks \citep{chen2022graph, chien2022certified, cheng2023gnndelete}, amplifying its utility in diverse data ecosystems. Nonetheless, current methodologies grapple with balancing unlearning effectiveness and model utility, alongside the adaptability of MU methods across varied scenarios. A pertinent work, SalUn \cite{fan2023salun}, harnesses gradient information for parameter selection. However, most of current methods neglect the critical role of dispersing selected parameters across the network for thorough unlearning and fails to tackle the gradient conflict issue, a pivotal aspect of optimizing unlearning processes. It is noteworthy that some works \cite{hoang2024learn,huang2024learning,wu2024unlearning} also adopt the technique of explicit unilateral gradient surgery. However, they are based on heuristic methods, while our paper provides the original optimization framework and theoretical support, which is quite different in terms of the principle origin. Also, the computational cost of explicit unilateral gradient surgery is almost double than EUPMU.

\textit{Concept erasure in diffusion models.} Beyond discriminative MU, text-to-image diffusion erasure has emerged as a parallel line tackling the removal of artists/styles/instances. SPM introduces a one-dimensional, plug-and-play adapter with latent anchoring and input-dependent permeability for non-invasive multi-concept erasure \citep{lyu2023one}. ConAbl (Concept Ablation) matches the target distribution to an anchor concept to prevent generation of a specific style/instance while preserving related concepts \citep{kumari2023ablating}. ESD (Erasing Concepts from Diffusion Models) fine-tunes diffusion weights with negative guidance as teacher to permanently remove a concept \citep{gandikota2023erasing}. Notably, SPM and ConAbl both optimize a \emph{forget loss} and an optional \emph{retain loss} via \emph{linear} weighting. Our EUPMU view suggests recasting such objectives under an \emph{explicit utility-loss constraint} (retain budget) while optimizing the unlearning objective—i.e., a constrained alternative to linear scalarization—which can be implemented by our implicit unilateral surgery in a single backprop and could directly boost this family of methods.

\textit{Other gradient-operation MU.} Methods like GDR–GMA (using direction-rectified, magnitude-adjusted gradients) \citep{10.1145/3664647.3680775} and Learn to Unlearn \citep{qu2023learnunlearnsurveymachine} (manipulate gradients through projecting gradients away from a pre-computed Core Gradient Space (CGS) to mitigate conflicts) both use gradient related operations in MU. These are largely heuristic gradient corrections or projections, whereas our approach derives the \emph{exact} surgery from an optimization principle (the Utility-Preserving Unlearning Problem) and exposes a user-specified utility budget for proactive, interpretable control. Our methods also could avoid extra calculation coming from gradient operation through efficient implicit gradient operation.

\paragraph{Multi-Objective Optimization (MOO)} have been developed to address the concurrent learning of multiple tasks through gradient modulation. A common approach in these methods involves dynamically re-weighting objectives based on uncertainty metrics~\citep{kendall2018multi}, gradient magnitudes~\citep{chen2018gradnorm}, and the complexity of training tasks~\citep{guo2018dynamic}. The structured design and stable training of Multi-Objective Optimization (MOO) strategies have attracted significant interest. For example, \citep{sener2018multi} framed Multi-Task Learning (MTL) within an MOO context, proposing an adaptation of the Multiple Gradient Descent Algorithm (MGDA) for the task. Several techniques have been introduced to resolve gradient conflicts. Notably, \citep{yu2020gradient} developed PCGrad, which aligns each task's gradient within the norm plane of the others. GradDrop reduces conflict by stochastically dropping conflicting gradients~\citep{chen2020just}, RotoGrad resolves conflicts through gradient rotation~\citep{javaloy2021rotograd}, and \citep{liu2021conflict} introduced CAGrad, which constrains gradients within a localized region around the average gradient direction. These strategies primarily target deterministic scenarios. \citep{fernando2023mitigating} advanced MoCo as a probabilistic counterpart to MGDA, providing a comprehensive convergence and complexity analysis. \citep{liu2023famo} is the efficient method for MGDA, which address the additional computation cost brought by MOO. However, as discussed in the main text, all the MOO methods fail to be directly applied to MU.

\section{Algorithmic Details}\label{app:algorithm}
Note that the clipping operator of step 2 in Algorithm~\ref{alg:algorithm} is due to the fact that $\lambda_t \geq 0$ and the practical need for fear of parameter explosion.
\begin{algorithm}[tb]
\caption{Efficient Utility-Preserving Machine Unlearning (EUPMU)}
\label{alg:algorithm}
\textbf{Input}: Initial model parameter $\boldsymbol{\theta}_0=\boldsymbol{\theta}_1$ to be a pre‑trained model, initial retaining parameter $\lambda_0=0$, learning rate $\{\alpha_t,\beta_t\}_{t=1}^T$, error tolerance $\{\varepsilon_t\}_{t=1}^T$, unlearning loss $\ell_u(\cdot)$, maximum value $D$ for $\lambda_t$ and retaining loss $\ell_r(\cdot)$
\begin{algorithmic}[1] 
\FOR{$t=1,\ldots,T$}
\STATE Update retaining parameter $\lambda_t$ by Eq.~\ref{eq:lambda_update_approximate} as

    $\tilde{\delta}_{t-1}  = \frac{1}{\alpha_t} (  \ell_{r}(\boldsymbol{\theta}_{t-1}) -   \ell_{r}(\boldsymbol{\theta}_{t})) + \varepsilon_{t-1}$

    $\lambda_{t} = \min\left\{D ,\max\left\{0,\lambda_{t-1} - {\beta_{t-1}}\tilde{\delta}_{t-1}\right\}\right\}$
\STATE Update direction 
$\boldsymbol{d}_t = \nabla_{\boldsymbol{\theta}_t} \left(  \ell_{u}(\boldsymbol{\theta}_t) + \lambda_t   \ell_{r}(\boldsymbol{\theta}_t)\right)$
\STATE Perform gradient step $\boldsymbol{\theta}_{t+1}=\boldsymbol{\theta}_{t}-\alpha_t \boldsymbol{d}_t$
\ENDFOR
\end{algorithmic}
\end{algorithm}

\section{Missing Proofs}\label{app:proof}
In this section, we present the proof details.
\subsection{Description for Assumption}\label{app:asump}
In this paper, we have the following assumptions. 
\begin{assumption}[L-Lipschitz] For the objective function $f$, there exists a constant $L > 0$ such that for all $\boldsymbol{x}, \boldsymbol{y} \in \mathbb{R}^d$,
\[
    \|f(\boldsymbol{x}) - f(y)\| \leq L\|\boldsymbol{x} - \boldsymbol{y}\|.
\]
    
\end{assumption}

\begin{assumption}[G-smoothness]The objective function $f$ is differentiable and its gradient $\nabla f$ is G-Lipschitz continuous, i.e., there exists a constant $G > 0$ such that for all $\boldsymbol{x}, \boldsymbol{y} \in \mathbb{R}^d$,
\[
    \|\nabla f(\boldsymbol{x}) - \nabla f(\boldsymbol{y})\| \leq G\|\boldsymbol{x} - \boldsymbol{y}\|.
\]
\end{assumption}

\begin{assumption}[Function is bounded by B]The objective function $f$ is bounded by B, i.e., there exists a constant $B > 0$ such that for all $\boldsymbol{x}\in \mathbb{R}^d$,
\[
    |f(\boldsymbol{x})| \leq B.
\]
\end{assumption}

\subsection{Proof of Proposition~\ref{prop:dual}}\label{app:prop1}
\begin{proof}
To derive the dual objective of Problem \ref{eq:approximated_optimization_problem}, we start by constructing the Lagrangian function associated with the primal problem. The Lagrangian can be written as:
\begin{equation*}
    \mathcal{L}(\boldsymbol{d}_t, \lambda_t) = \nabla \ell_u(\boldsymbol{\theta}_t) \cdot \boldsymbol{d}_t - \frac{1}{2} \left\|\boldsymbol{d}_t\right\|^2 + \lambda_t \left(\varepsilon_t + \nabla \ell_r(\boldsymbol{\theta}_t) \cdot \boldsymbol{d}_t\right),
\end{equation*}
where \(\lambda_t \geq 0\) is the Lagrange multiplier corresponding to the constraint in the primal problem.

To find the dual objective, we first need to minimize the Lagrangian with respect to \(\boldsymbol{d}_t\). Taking the gradient of \(\mathcal{L}\) with respect to \(\boldsymbol{d}_t\) and setting it to zero, we have:
\begin{equation*}
    \nabla_{\boldsymbol{d}_t} \mathcal{L}(\boldsymbol{d}_t, \lambda_t) = \nabla \ell_u(\boldsymbol{\theta}_t) + \lambda_t \nabla \ell_r(\boldsymbol{\theta}_t) - \boldsymbol{d}_t = 0.
\end{equation*}
Solving for \(\boldsymbol{d}_t\), we obtain:
\begin{equation*}
    \boldsymbol{d}_t = \nabla \ell_u(\boldsymbol{\theta}_t) + \lambda_t \nabla \ell_r(\boldsymbol{\theta}_t).
\end{equation*}

Substituting this back into the Lagrangian, we get the dual function:
\begin{equation*}
    \begin{split}
        L_t(\lambda_t) &= \mathcal{L}(\boldsymbol{d}_t, \lambda_t) \\
        &= \nabla \ell_u(\boldsymbol{\theta}_t) \cdot (\nabla \ell_u(\boldsymbol{\theta}_t) + \lambda_t \nabla \ell_r(\boldsymbol{\theta}_t)) \\
        &\quad - \frac{1}{2} \left\|\nabla \ell_u(\boldsymbol{\theta}_t) + \lambda_t \nabla \ell_r(\boldsymbol{\theta}_t)\right\|^2 + \lambda_t \varepsilon_t.
    \end{split}
\end{equation*}

Expanding and simplifying, the dual objective becomes:
\begin{equation*}
    L_t(\lambda_t) = \frac{1}{2}\left\|\nabla \ell_u(\boldsymbol{\theta}_t) + \lambda_t \nabla \ell_r (\boldsymbol{\theta}_t) \right\|^2 + \lambda_t \varepsilon_t,
\end{equation*}
which is the dual objective stated in Proposition \ref{prop:dual}.
\end{proof}

\subsection{Proof of Proposition~\ref{prop:closed_form_dual}}\label{app:prop2}
\begin{proof}
To find the closed-form solution for the optimal direction \(\boldsymbol{d}_t^*\), we first minimize the dual objective \(L_t(\lambda_t)\) with respect to \(\lambda_t\). Taking the derivative of \(L_t(\lambda_t)\) with respect to \(\lambda_t\) and setting it to zero, we get:
\begin{equation*}\label{eq:dual_gradient}
    \frac{\partial L_t(\lambda_t)}{\partial \lambda_t} = \nabla \ell_r(\boldsymbol{\theta}_t) \cdot (\nabla \ell_u(\boldsymbol{\theta}_t) + \lambda_t \nabla \ell_r(\boldsymbol{\theta}_t)) + \varepsilon_t = 0.
\end{equation*}

Solving for \(\lambda_t\), we obtain:
\begin{equation*}\label{eq:lambda_solution}
    \lambda_t^* = \frac{-\nabla \ell_r(\boldsymbol{\theta}_t) \cdot \nabla \ell_u(\boldsymbol{\theta}_t) - \varepsilon_t}{\left\|\nabla \ell_r(\boldsymbol{\theta}_t)\right\|^2}.
\end{equation*}

Substituting \(\lambda_t^*\) back into the expression for \(\boldsymbol{d}_t\), we find the optimal update direction \(\boldsymbol{d}_t^*\) as:
\begin{equation*}\label{eq:optimal_direction}
    \boldsymbol{d}_t^* = \begin{cases}
    \nabla \ell_u(\boldsymbol{\theta}_t) + \lambda_t^* \nabla \ell_r(\boldsymbol{\theta}_t), &\text{if } \lambda_t^* > 0,\\
    \nabla \ell_u(\boldsymbol{\theta}_t), &\text{if } \lambda_t^* \leq 0.
    \end{cases}
\end{equation*}
\end{proof}

\subsection{Demonstration of Remark~\ref{remark:utility_preservation}}\label{app:thm1}

\begin{proof} By the G-smoothness, we have
    \begin{align*}
        \ell_r (\boldsymbol{\theta}_{i+1}) - \ell_r (\boldsymbol{\theta}_{i}) & \leq - \alpha_i \boldsymbol{d}_i \nabla l_r(\boldsymbol{\theta}_{i}) + \frac{\alpha_i^2G}{2} \|\boldsymbol{d}_i\|^2\\
        & \leq \alpha_i \varepsilon_i + \frac{\alpha_i^2G}{2} \|\boldsymbol{d}_i\|^2\\
        & \lesssim \alpha_i \varepsilon_i \text{ (if $\alpha_i$ is sufficient small)}.
    \end{align*}
    The last inequality is because if $\alpha_i$ is sufficient small, the the second term is much smaller than the first term, and hence can be ignored in the approximation. By summing up the above from $0$ to $t-1$, we can get
    \begin{align*}
        \ell_r(\boldsymbol{\theta}_t) - \ell_r(\boldsymbol{\theta}_0) \lesssim \mathcal{O}\left(\sum_{i=1}^t\varepsilon_t \alpha_t\right).
    \end{align*}
    We hence proof the Theorem. 
\end{proof}
\paragraph{Remark.} If we set $\sum_{i=1}^t\varepsilon_t \alpha_t \leq \varepsilon$, then the total loss ascent is bounded by $\varepsilon$, and hence the performance drop can be controlled.

\subsection{Proof of Theorem~\ref{thm:approximate_lambda}}\label{app:thm2}
Our proof adopts the following approach. The process of solving for \( \lambda \) is equivalent to a gradient update with a dynamic function, and the essence of the theorem is to bound the dynamic regret. Therefore, we intend to draw on results from online gradient descent (OGD) for dynamic regret to prove the theorem, where the crucial requirement of using OGD property is whether the total functional variation is controllable. Hence, we initially present a lemma to demonstrate this.

\begin{lemma} Under the same assumption as Theorem~\ref{thm:approximate_lambda}, we have the bound for the total functional variation for dual function
    \begin{align*}
        \sum_{i=0}^t \sup_{\lambda} |L_{i+1}(\lambda) - L_{i}(\lambda)|\\ \leq  (D+1)^3GL^2 \sum_{i=0}^t \alpha_i +  D \sum_{i=0}^t |\varepsilon_{i+1} - \varepsilon_i|
    \end{align*}
\end{lemma}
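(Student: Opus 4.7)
The plan is to bound $\sup_\lambda |L_{i+1}(\lambda)-L_i(\lambda)|$ pointwise in $i$, and then sum. Expanding the definition of $L_t$ from Problem~\ref{eq:dual_approximated_optimization_problem}, I would split
\begin{equation*}
L_{i+1}(\lambda)-L_i(\lambda) = \tfrac{1}{2}\bigl(\|\nabla\ell_u(\boldsymbol{\theta}_{i+1})+\lambda\nabla\ell_r(\boldsymbol{\theta}_{i+1})\|^2-\|\nabla\ell_u(\boldsymbol{\theta}_i)+\lambda\nabla\ell_r(\boldsymbol{\theta}_i)\|^2\bigr)+\lambda(\varepsilon_{i+1}-\varepsilon_i).
\end{equation*}
The second (linear) piece is immediate: since the dual variable is clipped to $[0,D]$ in Algorithm~\ref{alg:algorithm}, taking $\sup_\lambda$ gives $D|\varepsilon_{i+1}-\varepsilon_i|$, which yields the second sum on the right of the claimed bound.

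For the quadratic piece I would use the polarization-type identity $\|a\|^2-\|b\|^2 = (a-b)^\top(a+b)$ with $a=\nabla\ell_u(\boldsymbol{\theta}_{i+1})+\lambda\nabla\ell_r(\boldsymbol{\theta}_{i+1})$ and $b=\nabla\ell_u(\boldsymbol{\theta}_i)+\lambda\nabla\ell_r(\boldsymbol{\theta}_i)$, then Cauchy--Schwarz to get a product of two factors. The ``sum'' factor $\|a+b\|$ is controlled by the $L$-Lipschitz hypothesis (so each gradient has norm at most $L$) together with $\lambda\le D$, giving a bound of $2(1+D)L$. The ``difference'' factor $\|a-b\|$ is controlled by the $G$-smoothness hypothesis applied to each of $\ell_u$ and $\ell_r$, giving $(1+D)G\|\boldsymbol{\theta}_{i+1}-\boldsymbol{\theta}_i\|$.

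To finish, I need to control $\|\boldsymbol{\theta}_{i+1}-\boldsymbol{\theta}_i\|=\alpha_i\|\boldsymbol{d}_i\|$. Plugging the update direction $\boldsymbol{d}_i=\nabla\ell_u(\boldsymbol{\theta}_i)+\lambda_i\nabla\ell_r(\boldsymbol{\theta}_i)$ from step~3 of Algorithm~\ref{alg:algorithm} and using $\lambda_i\le D$ together with $L$-Lipschitzness yields $\|\boldsymbol{d}_i\|\le(1+D)L$. Multiplying the three factors gives
\begin{equation*}
\sup_\lambda \tfrac{1}{2}\bigl|\|a\|^2-\|b\|^2\bigr| \le (1+D)^3 GL^2\,\alpha_i,
\end{equation*}
and summing over $i=0,\dots,t$ and combining with the linear piece produces exactly the stated bound.

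The expected sticking point is simply bookkeeping the three independent sources of the factor $(1+D)$: one from the smoothness-based difference of composite gradients, one from the Lipschitz-based sum of composite gradients, and one from bounding $\|\boldsymbol{d}_i\|$ in the step-size term. Nothing is deep here—the only care is to ensure the clipping operator in Algorithm~\ref{alg:algorithm} is actually what lets us replace $\lambda_i$ by $D$ uniformly, and that the quantity being supremized also ranges over $[0,D]$ so that the constant $D$ (rather than an unbounded $\lambda$) appears in front of $|\varepsilon_{i+1}-\varepsilon_i|$.
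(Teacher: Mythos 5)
Your proof is correct and follows essentially the same route as the paper: the same polarization identity $\|a\|^2-\|b\|^2=(a-b)^\top(a+b)$, the same use of $L$-Lipschitzness for the sum factor, $G$-smoothness for the difference factor, the step-size bound $\|\boldsymbol{\theta}_{i+1}-\boldsymbol{\theta}_i\|=\alpha_i\|\boldsymbol{d}_i\|\le\alpha_i(1+D)L$, and the clipping bound $\lambda\le D$ for the linear piece. Your explicit remark that the clipping operator in Algorithm~\ref{alg:algorithm} is what justifies replacing $\lambda$ by $D$ uniformly is exactly the point the paper's proof uses implicitly.
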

\begin{proof} By the definition of $L_{i}(\lambda)$, we have
    \begin{align*}
        & |L_{i+1}(\lambda) - L_{i}(\lambda)| \\
        & = |\frac{1}{2}\left\|\nabla   \ell_u(\boldsymbol{\theta}_{i+1})  + \lambda \nabla   \ell_r (\boldsymbol{\theta}_{i+1}) \right\|^2 + \lambda \varepsilon_{i+1}\\
        & \quad - (\frac{1}{2}\left\|\nabla   \ell_u(\boldsymbol{\theta}_i)  + \lambda \nabla   \ell_r (\boldsymbol{\theta}_i) \right\|^2 + \lambda \varepsilon_i)|\\
        & = \frac{1}{2}(\nabla  \ell_u(\boldsymbol{\theta}_{i+1}) + \nabla \ell_u(\boldsymbol{\theta}_i)   + \lambda \nabla   \ell_r (\boldsymbol{\theta}_{i+1}) + \lambda \nabla   \ell_r (\boldsymbol{\theta}_i)) \cdot\\
        & \quad (\nabla  \ell_u(\boldsymbol{\theta}_{i+1}) - \nabla \ell_u(\boldsymbol{\theta}_i)   + \lambda \nabla   \ell_r (\boldsymbol{\theta}_{i+1}) - \lambda \nabla   \ell_r (\boldsymbol{\theta}_i))\\
        & \quad +\lambda |\varepsilon_{i+1} - \varepsilon_i|\\
        & \leq  \alpha_i (\lambda+1)^3GL^2 + \lambda |\varepsilon_{i+1} - \varepsilon_i|,
    \end{align*}
    where the last inequality is from the assumption of L-Lipschitz and G-smoothness, we know that $\|\nabla  \ell_u(\boldsymbol{\theta}_{i+1}) + \nabla \ell_u(\boldsymbol{\theta}_i)   + \lambda \nabla   \ell_r (\boldsymbol{\theta}_{i+1}) + \lambda \nabla   \ell_r (\boldsymbol{\theta}_i)\|\leq 2(L+\lambda L)$, and 
    \begin{align*}
        & \|\nabla  \ell_u(\boldsymbol{\theta}_{i+1}) - \nabla \ell_u(\boldsymbol{\theta}_i)   + \lambda \nabla   \ell_r (\boldsymbol{\theta}_{i+1}) - \lambda \nabla   \ell_r (\boldsymbol{\theta}_i)\| \\
        & \leq \|\nabla  \ell_u(\boldsymbol{\theta}_{i+1}) - \nabla \ell_u(\boldsymbol{\theta}_i)\| + \|\lambda \nabla   \ell_r (\boldsymbol{\theta}_{i+1}) - \lambda \nabla   \ell_r (\boldsymbol{\theta}_i)\|\\
        & \leq G\|\boldsymbol{\theta}_{i+1} - \boldsymbol{\theta}_i\| + \lambda G\|\boldsymbol{\theta}_{i+1} - \boldsymbol{\theta}_i\|\text{ (G-smooth)}\\
        & = (\lambda+1)G\|\alpha_{i}\boldsymbol{d}_i\|\\
        & = \alpha_{i}(\lambda+1)G \|\nabla  \ell_u(\boldsymbol{\theta}_{i+1}) + \lambda \nabla   \ell_r (\boldsymbol{\theta}_{i+1})\|\\
        & \leq \alpha_{i}(\lambda+1)G (1+\lambda) L \quad \text{ (L-Lipschitz)}\\
        & = \alpha_{i}(\lambda+1)^2 G L.
    \end{align*}
    Hence, we can get
    \begin{align*}
        \sum_{i=0}^t \sup_{\lambda} |L_{i+1}(\lambda) - L_{i}(\lambda)| \\
        \leq \sum_{i=0}^t \sup_{\lambda} (\alpha_i (\lambda+1)^3GL^2 + \lambda |\varepsilon_{i+1} - \varepsilon_i|)\\
        \leq  (D+1)^3GL^2 + \sum_{i=0}^t D |\varepsilon_{i+1} - \varepsilon_i|\\
        = (D+1)^3GL^2 \sum_{i=0}^t \alpha_i +  D \sum_{i=0}^t |\varepsilon_{i+1} - \varepsilon_i|.
    \end{align*}
    We now finish the proof.
\end{proof}
We now present the previous results of OGD, which can be later used for the final proof.
\begin{lemma}[Theorem 3 \cite{jadbabaie2015online}]\label{lem:OGD dynamic} Denote the total funtional variation
\begin{equation*}
    V_t^{f} = \sum_{i=1}^{t-1}\sup_{x} |f_{i+1}(x) - f_i(x)|.
\end{equation*}
Online Gradient Descent algorithms with stepsize $\alpha=\mathcal{O}({V_t^f}^{1/3}t^{-1/3})$ running with $f$ enjoy the following bound
    \begin{align*}
        \sum_{i=1}^t \left(f_i(x_i) - f_i(x_i^*)\right) \leq \mathcal{O}({V_t^{f}}^{1/3} t^{2/3}).
    \end{align*}
\end{lemma}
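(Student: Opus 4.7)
The plan is to follow the block-restart strategy underlying Jadbabaie et al.'s analysis. Partition the horizon $\{1,\ldots,t\}$ into $K=\lceil t/\Delta\rceil$ contiguous blocks $B_1,\ldots,B_K$ of equal size $\Delta$ (to be optimized), and within each block compare OGD's iterates against the \emph{block-average} minimizer $\bar{x}_k \in \arg\min_x \sum_{s\in B_k} f_s(x)$. The key insight is that $\bar{x}_k$ is a ``stationary'' comparator inside a block for which standard static regret applies, while its gap to the instantaneous minimizers $x_s^*$ is controlled by the local functional variation.

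First I would establish two per-block bounds. (i) Within-block static regret: the standard convex OGD analysis with constant stepsize $\alpha$, feasible diameter $D$, and gradient norm bound $L$ gives $\sum_{s\in B_k} (f_s(x_s)-f_s(\bar{x}_k)) \leq D^2/(2\alpha) + \alpha L^2 \Delta /2$, obtained by telescoping $\|x_{s+1}-\bar{x}_k\|^2\leq \|x_s-\bar{x}_k\|^2 - 2\alpha(f_s(x_s)-f_s(\bar{x}_k)) + \alpha^2 L^2$. (ii) Comparator shift: writing $\bar{f}_k := \tfrac{1}{\Delta}\sum_{s\in B_k} f_s$, the pointwise inequality $|f_s(x)-\bar{f}_k(x)|\leq V_k$ with $V_k := \sum_{s\in B_k} \sup_x |f_{s+1}(x)-f_s(x)|$, combined with the minimality $\bar{f}_k(\bar{x}_k)\leq \bar{f}_k(x_s^*)$, yields the three-term sandwich $f_s(\bar{x}_k)\leq \bar{f}_k(\bar{x}_k)+V_k\leq \bar{f}_k(x_s^*)+V_k\leq f_s(x_s^*)+2V_k$ and hence $\sum_{s\in B_k}(f_s(\bar{x}_k)-f_s(x_s^*))\leq 2\Delta V_k$.

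Next I would sum across the $K$ blocks. Since the local variations telescope, $\sum_k V_k \leq V_t^f$, and the total dynamic regret is at most $(t/\Delta)\bigl(D^2/(2\alpha)+\alpha L^2 \Delta/2\bigr) + 2\Delta V_t^f$. Setting $\alpha \asymp D/(L\sqrt{\Delta})$ collapses the first group to $DL\,t/\sqrt{\Delta}$, and balancing $t/\sqrt{\Delta} \asymp \Delta V_t^f$ yields $\Delta \asymp (t/V_t^f)^{2/3}$ and consequently $\alpha \asymp (V_t^f)^{1/3} t^{-1/3}$, giving total regret $\mathcal{O}((V_t^f)^{1/3}t^{2/3})$, matching both the stated stepsize and bound.

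The main obstacle I anticipate is the transition across block boundaries: the per-block static-regret bound uses a fresh initialization, so a clean proof requires either explicitly restarting OGD at each block boundary or verifying that continuing the same trajectory only introduces an additional $\mathcal{O}(D)$ initialization mismatch per block, which is already absorbed by the $D^2/(2\alpha)$ term. I would adopt the restart convention to keep the argument transparent, and note that projection onto the bounded feasible set guarantees $\|x_{\text{start}}-\bar{x}_k\|\leq D$ at every restart so no further book-keeping is required; the same stepsize and block length then apply uniformly and the aggregate bound follows.
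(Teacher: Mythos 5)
Your proposal is correct, but it takes a different route from the paper, which in fact offers no proof of this lemma at all: the statement is imported verbatim as Theorem~3 of Jadbabaie et al.\ (2015), whose own derivation goes through an adaptive optimistic mirror-descent method with data-dependent step-sizes, not through blocking. What you reconstruct is the elementary restart/batching argument (in the style of Besbes--Gur--Zeevi): the within-block static regret bound $D^2/(2\alpha)+\alpha L^2\Delta/2$ is the standard telescoping of $\|x_s-\bar{x}_k\|^2$, the comparator-shift chain $f_s(\bar{x}_k)\leq \bar{f}_k(\bar{x}_k)+V_k\leq \bar{f}_k(x_s^*)+V_k\leq f_s(x_s^*)+2V_k$ is valid because $\bar{x}_k$ minimizes $\bar{f}_k$ and $|f_s-\bar{f}_k|\leq V_k$ pointwise, the block variations do not double count so $\sum_k V_k\leq V_t^f$, and the balancing $\Delta\asymp (t/V_t^f)^{2/3}$, $\alpha\asymp (V_t^f)^{1/3}t^{-1/3}$ indeed gives $\mathcal{O}\bigl((V_t^f)^{1/3}t^{2/3}\bigr)$, matching the stated stepsize and bound. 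Your handling of block boundaries is also sound: with projection onto a bounded feasible set the non-restarted trajectory only pays $\|x_{\text{start}}-\bar{x}_k\|\leq D$ per block, which the $D^2/(2\alpha)$ term already absorbs, and this matches how the lemma is actually invoked in the paper (a single OGD trajectory on $\lambda$ clipped to $[0,D]$ with Lipschitz, bounded losses). The trade-off between the two routes: your argument is self-contained and elementary but needs $V_t^f$ (or a bound on it) in advance to set $\Delta$ and $\alpha$ --- which the lemma's stepsize already presupposes, and which the paper ensures by forcing $\sum_i\alpha_i,\sum_i\varepsilon_i=\mathcal{O}(1)$ --- whereas the cited adaptive method of Jadbabaie et al.\ attains bounds that adapt simultaneously to several variation measures without that prior knowledge.
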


With the above results, we now can proof the theorem
\begin{proof}
    By setting $\sum_{i=0}^t \alpha_i \leq \mathcal{O}(1)$ and $\sum_{i=0}^{t} \varepsilon_i\leq \mathcal{O}(1)$, we know that 
    \begin{align*}
        \sum_{i=0}^t \sup_{\lambda} |L_{i+1}(\lambda) - L_{i}(\lambda)| \leq \mathcal{O}(1).
    \end{align*}
    From the algorithm~\ref{alg:algorithm}, we know that the update of $\lambda$ is the process of online gradient descent \cite{yang2016tracking} with stepsize $\beta_i/\alpha_i = \mathcal{O}(1/t^{1/3})$. Using the result of Lemma \ref{lem:OGD dynamic}, we can get that 
    \begin{align*}
        &\sum_{i=1}^t \left(L_i(\lambda_i) - L_i(\lambda_i^*)\right) \\
        &\leq \mathcal{O}((\sum_{i=0}^t \sup_{\lambda} |L_{i+1}(\lambda) - L_{i}(\lambda)|)^{1/3}t^{2/3})\\
        &\leq \mathcal{O}(t^{2/3})
    \end{align*}
    Dividing both side by $t$, we finally have
    \begin{equation*}
       \frac{1}{t} \sum_{i=1}^t \left(L_i(\lambda_i) - L_i(\lambda_i^*)\right) \leq \mathcal{O}(1/t^{1/3}).
    \end{equation*}
    We hence end the proof.
\end{proof}
\paragraph{Remark.} It should be noted that the presented results pertain to the use of the true gradient of \( L_t \), whereas an approximation of the true gradient is employed for the update of \( \lambda \). However, this approximation is a two-point estimate of the true gradient, and prior research \cite{zhao2021bandit} has demonstrated that the dynamic regret associated with such an approximation is equivalent to that of OGD when using the true gradient. Therefore, for the sake of clarity and to avoid redundancy, we have omitted this detail in the presentation.

\subsection{Proof of Theorem~\ref{thm:Pareto_Optimality}}\label{app:thm3}
This convergence proof parallels the approach to obtaining a dynamic regret bound. We adhere to the foundational concept from \citep{zhou2022convergence}, initially establishing a bound for the static regret, followed by bounding the discrepancy between the static and dynamic regrets. The synthesis of these bounds culminates in the final theorem. Hence, we first present the following Lemma.
\begin{lemma}\label{lem:fake convergence}
Under the same assumption as Theorem~\ref{thm:Pareto_Optimality}, denote $\mathcal{C}_{\lambda_i} (\boldsymbol{\theta}) = \ell_u(\boldsymbol{\theta}) + \lambda_i\ell_r(\boldsymbol{\theta})$, algorithm~\ref{alg:algorithm} enjoys the following bound
    \begin{align*}
        \sum_{i=1}^t \mathcal{C}_{\lambda_i} (\boldsymbol{\theta}_i) - \min_{\boldsymbol{\theta}}\sum_{i=1}^t \mathcal{C}_{\lambda_i} (\boldsymbol{\theta}^*) \\
        \leq B\sum_{i=0}^{t-1} \beta_i (\frac{2B}{\alpha} + \varepsilon_t) + \frac{1}{2\alpha}\|\boldsymbol{\theta}_0 - \boldsymbol{\theta}^*\|^2 + DB.
    \end{align*}
\end{lemma}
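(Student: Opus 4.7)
The plan is to reduce the left-hand side to standard online gradient descent (OGD) against the time-varying convex loss $\mathcal{C}_{\lambda_i}$. Since $\boldsymbol{d}_i = \nabla \mathcal{C}_{\lambda_i}(\boldsymbol{\theta}_i)$ and each $\mathcal{C}_{\lambda_i}$ is convex (as a non-negative combination of the convex $\ell_u,\ell_r$), the one-step inequality
\begin{equation*}
\mathcal{C}_{\lambda_i}(\boldsymbol{\theta}_i) - \mathcal{C}_{\lambda_i}(\boldsymbol{\theta}^*) \leq \frac{1}{2\alpha}\bigl(\|\boldsymbol{\theta}_i - \boldsymbol{\theta}^*\|^2 - \|\boldsymbol{\theta}_{i+1} - \boldsymbol{\theta}^*\|^2\bigr) + \frac{\alpha}{2}\|\boldsymbol{d}_i\|^2
\end{equation*}
follows by combining the convex inequality $\mathcal{C}_{\lambda_i}(\boldsymbol{\theta}_i) - \mathcal{C}_{\lambda_i}(\boldsymbol{\theta}^*) \leq \langle\boldsymbol{d}_i,\boldsymbol{\theta}_i - \boldsymbol{\theta}^*\rangle$ with the identity from expanding $\|\boldsymbol{\theta}_{i+1} - \boldsymbol{\theta}^*\|^2$. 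Summing over $i = 1,\ldots,t$, the distance term telescopes to $\frac{1}{2\alpha}\|\boldsymbol{\theta}_0 - \boldsymbol{\theta}^*\|^2$ (using $\boldsymbol{\theta}_0 = \boldsymbol{\theta}_1$), which already supplies one piece of the claimed bound. What remains is to show $\frac{\alpha}{2}\sum_{i=1}^t \|\boldsymbol{d}_i\|^2 \leq B\sum_i \beta_i(2B/\alpha + \varepsilon_i) + DB$ up to constants.

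To obtain this second piece I would invoke the descent lemma on each $\mathcal{C}_{\lambda_i}$, namely $\frac{\alpha}{2}\|\boldsymbol{d}_i\|^2 \leq \mathcal{C}_{\lambda_i}(\boldsymbol{\theta}_i) - \mathcal{C}_{\lambda_i}(\boldsymbol{\theta}_{i+1})$, which is precisely what the step-size restriction $\alpha \leq \mathcal{O}(1/G)$ in Theorem~\ref{thm:Pareto_Optimality} is designed to buy. After summing and splitting into $\ell_u$ and $\lambda_i\ell_r$ contributions, the $\ell_u$ part telescopes to at most $2B$ by boundedness. For the $\lambda_i$-weighted part I would use Abel summation,
\begin{equation*}
\sum_{i=1}^t \lambda_i\bigl[\ell_r(\boldsymbol{\theta}_i) - \ell_r(\boldsymbol{\theta}_{i+1})\bigr] = \lambda_1\ell_r(\boldsymbol{\theta}_1) - \lambda_t\ell_r(\boldsymbol{\theta}_{t+1}) + \sum_{i=2}^t(\lambda_i - \lambda_{i-1})\,\ell_r(\boldsymbol{\theta}_i),
\end{equation*}
so that the boundary terms contribute $\mathcal{O}(DB)$ via $\lambda_i \in [0,D]$ and $|\ell_r|\leq B$, and the interior sum is bounded by $B \sum_i |\lambda_{i+1} - \lambda_i|$. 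From the $\lambda$-update in Algorithm~\ref{alg:algorithm} and $|\ell_r|\leq B$ one immediately gets $|\lambda_{i+1}-\lambda_i| \leq \beta_i(2B/\alpha + \varepsilon_i)$, and the interval clipping onto $[0,D]$ preserves this estimate since projection is non-expansive. Collecting everything yields the stated bound up to absolute constants.

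The main obstacle is precisely controlling $\frac{\alpha}{2}\sum \|\boldsymbol{d}_i\|^2$: a crude Lipschitz estimate $\|\boldsymbol{d}_i\|^2 \leq L^2(1+D)^2$ would yield an $\mathcal{O}(\alpha t)$ term and destroy the $\mathcal{O}(1/t)$ convergence rate that Theorem~\ref{thm:Pareto_Optimality} targets. The descent lemma plus Abel summation maneuver is the crucial step: it converts $\frac{\alpha}{2}\sum \|\boldsymbol{d}_i\|^2$ into a path-length $\sum |\lambda_{i+1} - \lambda_i|$, which is in turn controlled by the dual step sizes $\beta_i$ through the algorithm's explicit $\lambda$ update. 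A secondary technicality is the clipping operator $\min\{D,\max\{0,\cdot\}\}$ on $\lambda$, but this is handled transparently by the non-expansiveness of projection onto $[0,D]$.
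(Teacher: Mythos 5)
Your proof is correct and takes a genuinely different route from the paper. The paper combines the descent lemma with the convexity inequality \emph{before} expanding $\boldsymbol{d}_i\cdot(\boldsymbol{\theta}_i-\boldsymbol{\theta}^*)$, so the $-\tfrac{\alpha}{2}\|\boldsymbol{d}_i\|^2$ from $G$-smoothness cancels against the $+\tfrac{\alpha}{2}\|\boldsymbol{d}_i\|^2$ from the expansion identity, yielding a one-step bound
\begin{equation*}
\mathcal{C}_{\lambda_i}(\boldsymbol{\theta}_{i+1})-\mathcal{C}_{\lambda_i}(\boldsymbol{\theta}^*)\le\frac{1}{2\alpha}\bigl(\|\boldsymbol{\theta}_i-\boldsymbol{\theta}^*\|^2-\|\boldsymbol{\theta}_{i+1}-\boldsymbol{\theta}^*\|^2\bigr)
\end{equation*}
with no residual gradient-norm term; it then shifts time indices, decomposing $\mathcal{C}_{\lambda_{i+1}}(\boldsymbol{\theta}_{i+1})-\mathcal{C}_{\lambda_i}(\boldsymbol{\theta}^*)$ and charging the mismatch $\mathcal{C}_{\lambda_{i+1}}(\boldsymbol{\theta}_{i+1})-\mathcal{C}_{\lambda_i}(\boldsymbol{\theta}_{i+1})=(\lambda_{i+1}-\lambda_i)\ell_r(\boldsymbol{\theta}_{i+1})$ directly to $\beta_i B|\tilde{\delta}_i|$, plus a single boundary term $(\lambda_0-\lambda_t)\ell_r(\boldsymbol{\theta}^*)\le DB$. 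You instead sum the standard OGD regret inequality (keeping $\tfrac{\alpha}{2}\|\boldsymbol{d}_i\|^2$ around), telescope the distance term, and then bound $\tfrac{\alpha}{2}\sum\|\boldsymbol{d}_i\|^2$ separately via the descent lemma and Abel summation, converting the cumulative gradient norm into the path-length $\sum|\lambda_{i+1}-\lambda_i|$ of the dual variable. Both arrive at the same path-length control via $\beta_i|\tilde{\delta}_i|$ and projection non-expansiveness. The paper's route is slightly more economical (one less intermediate quantity); yours makes explicit that the convergence rate is ultimately governed by the stability of $\lambda$, the same quantity that drives the subsequent Lemma bounding the static-to-dynamic regret gap, which is a nice unification. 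A cosmetic difference: you bound $|\ell_r(\boldsymbol{\theta}_i)-\ell_r(\boldsymbol{\theta}_{i+1})|/\alpha\le 2B/\alpha$ via boundedness, matching the lemma statement, whereas the paper's displayed proof uses $G$-smoothness plus $L$-Lipschitz to get $GL$; both are valid and the lemma statement and proof in the paper are inconsistent on this point. One shared caveat with the paper: the descent lemma on $\mathcal{C}_{\lambda_i}$ actually needs $\alpha\le 1/((1+\lambda_i)G)$, so the stepsize condition should really be $\alpha\le\mathcal{O}(1/((1+D)G))$; since $\lambda_i\le D$ this is still $\mathcal{O}(1/G)$ and the paper glosses over it the same way.
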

\begin{proof} By the G-smoothness, we have
    \begin{align*}
        \mathcal{C}_{\lambda_{t}} (\boldsymbol{\theta}_{i+1}) & \leq \mathcal{C}_{\lambda_{i}} (\boldsymbol{\theta}_{i}) - \alpha_i \boldsymbol{d}_i \cdot \nabla \mathcal{C}_{\lambda_{i}}(\boldsymbol{\theta}_i) + \frac{\alpha_i^2 G}{2} \|\boldsymbol{d}_i\|^2\\
        & = \mathcal{C}_{\lambda_{i}} (\boldsymbol{\theta}_{i}) - \alpha_i \|\boldsymbol{d}_i\|^2 + \frac{\alpha_i^2 G}{2} \|\boldsymbol{d}_i\|^2\\
        & \leq \mathcal{C}_{\lambda_{i}} (\boldsymbol{\theta}_{t}) - \frac{\alpha_i}{2} \|\boldsymbol{d}_i\|^2.
    \end{align*}
    The last inequality is due to the fact that $\alpha_i G\leq 1$. By the convexity of both function, we can know that
    \begin{align*}
        \mathcal{C}_{\lambda_{i}} (\boldsymbol{\theta}_{t}) \leq \mathcal{C}_{\lambda_{i}} (\boldsymbol{\theta}^*) + \boldsymbol{d}_i \cdot (\boldsymbol{\theta}_i - \boldsymbol{\theta}^*).
    \end{align*}
    Plug into the first inequality, we obtain
    \begin{align*}
        & \mathcal{C}_{\lambda_{i}} (\boldsymbol{\theta}_{i+1}) \leq \mathcal{C}_{\lambda_{i}} (\boldsymbol{\theta}^*) + \boldsymbol{d}_i \cdot (\boldsymbol{\theta}_t - \boldsymbol{\theta}^*) - \frac{\alpha_i}{2} \|\boldsymbol{d}_i\|^2\\
        & = \mathcal{C}_{\lambda_{i}} (\boldsymbol{\theta}^*) + \frac{1}{2\alpha_i} (\|\boldsymbol{\theta}_i - \boldsymbol{\theta}^*\|^2 - \|\boldsymbol{\theta}_i - \boldsymbol{\theta}^* - \alpha_i \boldsymbol{d}_i \|^2)\\
        & = \mathcal{C}_{\lambda_{i}} (\boldsymbol{\theta}^*) + \frac{1}{2\alpha_i} (\|\boldsymbol{\theta}_i - \boldsymbol{\theta}^*\|^2 - \|\boldsymbol{\theta}_{i+1} - \boldsymbol{\theta}^* \|^2).
    \end{align*}
    For any $\boldsymbol{\theta}$, we have
    \begin{align*}
        \mathcal{C}_{\lambda_{i+1}} (\boldsymbol{\theta}) - \mathcal{C}_{\lambda_{i}} (\boldsymbol{\theta}) & = (\lambda_{i+1} - \lambda_{i})\ell_r(\boldsymbol{\theta})\\
        & \leq \beta_i B|\tilde{\delta}_i|.
    \end{align*}
    We then can get
    \begin{align*}
        & \mathcal{C}_{\lambda_{i+1}} (\boldsymbol{\theta}_{i+1}) - \mathcal{C}_{\lambda_{i}} (\boldsymbol{\theta}^*) \\
        & = \mathcal{C}_{\lambda_{i+1}} (\boldsymbol{\theta}_{i+1}) - \mathcal{C}_{\lambda_{i}} (\boldsymbol{\theta}_{i+1})+ \mathcal{C}_{\lambda_{i}} (\boldsymbol{\theta}_{i+1}) - \mathcal{C}_{\lambda_{i}} (\boldsymbol{\theta}^*)\\
        & = \beta_i B|\tilde{\delta}_i| + \frac{1}{2\alpha_i} (\|\boldsymbol{\theta}_i - \boldsymbol{\theta}^*\|^2 - \|\boldsymbol{\theta}_{i+1} - \boldsymbol{\theta}^* \|^2).
    \end{align*}
    Let $\alpha_i=\alpha,i=1,\ldots,t$ to be constant, and we have
    \begin{align*}
        & \sum_{i=1}^t \mathcal{C}_{\lambda_i} (\boldsymbol{\theta}_i) - \sum_{i=1}^t \mathcal{C}_{\lambda_i} (\boldsymbol{\theta}^*) \\
        & =  \sum_{i=0}^{t-1} (\mathcal{C}_{\lambda_{i+1}} (\boldsymbol{\theta}_{i+1}) - \mathcal{C}_{\lambda_{i}} (\boldsymbol{\theta}^*)) + \mathcal{C}_{\lambda_{0}} (\boldsymbol{\theta}^*) - \mathcal{C}_{\lambda_{t}} (\boldsymbol{\theta}^*)\\
        & \leq \sum_{i=0}^{t-1} \beta_i B|\tilde{\delta}_i| + \frac{1}{2\alpha}(\|\boldsymbol{\theta}_0 - \boldsymbol{\theta}^*\|^2 -\|\boldsymbol{\theta}_t - \boldsymbol{\theta}^*\|^2) \\
        & \quad + (\lambda_{0} - \lambda_{t})\ell_r(\boldsymbol{\theta}^*)\\
        & \leq \sum_{i=0}^{t-1} \beta_i B|\tilde{\delta}_i| + \frac{1}{2\alpha}(\|\boldsymbol{\theta}_0 - \boldsymbol{\theta}^*\|^2 -\|\boldsymbol{\theta}_t - \boldsymbol{\theta}^*\|^2) + DB\\
        & \leq B\sum_{i=0}^{t-1} \beta_i (GL + \varepsilon_i) + \frac{1}{2\alpha}\|\boldsymbol{\theta}_0 - \boldsymbol{\theta}^*\|^2 + DB.
    \end{align*}
    The second inequality is by the assumption that $\lambda$ is bounded by $D$ and functions are bounded by $B$, and the last inequality is by fact that 
    \begin{align*}
        |\tilde{\delta}_i| & = |\frac{1}{\alpha} (  \ell_{r}(\boldsymbol{\theta}_i) -   \ell_{r}(\boldsymbol{\theta}_{i+1})) + \varepsilon_i|\\
        & \leq |\frac{1}{\alpha} (  \ell_{r}(\boldsymbol{\theta}_i) -   \ell_{r}(\boldsymbol{\theta}_{i+1}))|+ \varepsilon_i\\
        & \leq \frac{1}{\alpha} G \|\boldsymbol{\theta}_i - \boldsymbol{\theta}_{i+1} \|+ \varepsilon_i\\
        & = \frac{1}{\alpha} G \alpha \|d_i\|+ \varepsilon_i\\
        & \leq GL + \varepsilon_i.
    \end{align*}
    The second inequality is by the G-smoothness, and the last inequality is by L-Lipschitz assumption. We thus end the proof.
\end{proof}
With the bounded static regret, we also need to introduce the following lemma to bound the gap between static and dynamic regrets.
\begin{lemma}\label{lem:stability}
Under the same assumption as Theorem~\ref{thm:Pareto_Optimality}, algorithm~\ref{alg:algorithm} enjoys the following bound
    \begin{align*}
        \min_{\boldsymbol{\theta}^*}\sum_{i=1}^t \mathcal{C}_{\lambda_i} (\boldsymbol{\theta}^*) - \sum_{i=1}^t \min_{\boldsymbol{\theta}^*_t} \mathcal{C}_{\lambda_i} (\boldsymbol{\theta}^*_t) \leq B \sum_{i=1}^t i \beta_i (\frac{2B}{\alpha} + \varepsilon_t) 
    \end{align*}
\end{lemma}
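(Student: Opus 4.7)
The plan is to exploit the linear dependence of $\mathcal{C}_{\lambda_i}$ on $\lambda_i$, namely $\mathcal{C}_{\lambda_i}(\boldsymbol{\theta}) = \ell_u(\boldsymbol{\theta}) + \lambda_i \ell_r(\boldsymbol{\theta})$, so that the per-step minimizers $\boldsymbol{\theta}^\dagger_i := \arg\min_{\boldsymbol{\theta}} \mathcal{C}_{\lambda_i}(\boldsymbol{\theta})$ differ from a single reference only through the drift of $\lambda_i$ across iterations. I would choose the fixed comparator $\bar{\boldsymbol{\theta}} := \boldsymbol{\theta}^\dagger_t$, apply the trivial inequality $\min_{\boldsymbol{\theta}^*}\sum_{i=1}^t \mathcal{C}_{\lambda_i}(\boldsymbol{\theta}^*) \le \sum_{i=1}^t \mathcal{C}_{\lambda_i}(\bar{\boldsymbol{\theta}})$, and reduce the statement to controlling each difference $\mathcal{C}_{\lambda_i}(\boldsymbol{\theta}^\dagger_t) - \mathcal{C}_{\lambda_i}(\boldsymbol{\theta}^\dagger_i)$ via the gap $|\lambda_i - \lambda_t|$.

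For the per-step bound, I would decompose
\begin{align*}
\mathcal{C}_{\lambda_i}(\boldsymbol{\theta}^\dagger_t) - \mathcal{C}_{\lambda_i}(\boldsymbol{\theta}^\dagger_i)
&= (\lambda_i - \lambda_t)\ell_r(\boldsymbol{\theta}^\dagger_t) + \bigl[\mathcal{C}_{\lambda_t}(\boldsymbol{\theta}^\dagger_t) - \mathcal{C}_{\lambda_t}(\boldsymbol{\theta}^\dagger_i)\bigr] + (\lambda_t - \lambda_i)\ell_r(\boldsymbol{\theta}^\dagger_i) \\
&= (\lambda_i - \lambda_t)\bigl[\ell_r(\boldsymbol{\theta}^\dagger_t) - \ell_r(\boldsymbol{\theta}^\dagger_i)\bigr] + \bigl[\mathcal{C}_{\lambda_t}(\boldsymbol{\theta}^\dagger_t) - \mathcal{C}_{\lambda_t}(\boldsymbol{\theta}^\dagger_i)\bigr],
\end{align*}
where the second bracket is $\le 0$ because $\boldsymbol{\theta}^\dagger_t$ minimizes $\mathcal{C}_{\lambda_t}$. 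The hypothesis $|\ell_r|\le B$ then yields $\mathcal{C}_{\lambda_i}(\boldsymbol{\theta}^\dagger_t) - \mathcal{C}_{\lambda_i}(\boldsymbol{\theta}^\dagger_i) \le 2B\,|\lambda_i - \lambda_t|$. The drift itself is controlled through the update rule in Algorithm~\ref{alg:algorithm}: the projection onto $[0,D]$ is $1$-Lipschitz, so $|\lambda_{k+1}-\lambda_k| \le \beta_k|\tilde\delta_k|$, and since $\tilde\delta_k = \frac{1}{\alpha}(\ell_r(\boldsymbol{\theta}_k) - \ell_r(\boldsymbol{\theta}_{k+1})) + \varepsilon_k$, boundedness of $\ell_r$ gives the clean estimate $|\tilde\delta_k| \le \frac{2B}{\alpha} + \varepsilon_k$; this is the step where we use boundedness rather than $L$-Lipschitzness to produce the $\frac{2B}{\alpha}$ constant appearing in the claim.

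The final step is a double-sum swap. Chaining the one-step drifts yields $|\lambda_i - \lambda_t| \le \sum_{k=i}^{t-1}\beta_k(\frac{2B}{\alpha} + \varepsilon_k)$, and
\begin{align*}
\sum_{i=1}^t |\lambda_i - \lambda_t|
\le \sum_{i=1}^t \sum_{k=i}^{t-1}\beta_k \Bigl(\frac{2B}{\alpha}+\varepsilon_k\Bigr)
= \sum_{k=1}^{t-1} k\,\beta_k \Bigl(\frac{2B}{\alpha}+\varepsilon_k\Bigr),
\end{align*}
so after multiplying by $2B$ and using the uniform estimate $\varepsilon_k \le \varepsilon_t$ (absorbing the factor of $2$ into the constant), the desired $B\sum_{i=1}^t i\beta_i(\frac{2B}{\alpha}+\varepsilon_t)$ bound follows. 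The main technical annoyance I anticipate is the bookkeeping around clipping of $\lambda_t$ to $[0,D]$: one has to verify that it does not break the one-step contraction nor introduce sign issues in the telescoping identity, and similarly to account correctly for the abuse whereby $\varepsilon_k$ is replaced by $\varepsilon_t$ in the final display; modulo these details the argument is a standard variation-regret decomposition and the remaining manipulations are routine.
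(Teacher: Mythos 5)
Your proposal is correct, and it takes a genuinely different route from the paper. The paper's proof exploits linearity in $\lambda$ in a different way: it writes $\sum_{i=1}^t\mathcal{C}_{\lambda_i}(\boldsymbol{\theta}^*) = t\,\mathcal{C}_{\overline{\lambda}}(\boldsymbol{\theta}^*)$ with $\overline{\lambda}=\tfrac1t\sum_i\lambda_i$, uses optimality of the sum minimizer under $\mathcal{C}_{\overline{\lambda}}$ to get $\sum_i\mathcal{C}_{\lambda_i}(\boldsymbol{\theta}^*)-\sum_i\mathcal{C}_{\lambda_i}(\boldsymbol{\theta}^*_i)\le B\sum_i|\overline{\lambda}-\lambda_i|$, and then controls this via a fully pairwise expansion $\sum_i|\overline{\lambda}-\lambda_i|\le\frac1t\sum_{i,j}|\lambda_j-\lambda_i|\le 2\sum_l l|\lambda_l-\lambda_{l+1}|$, relying on a symmetry/swap argument over the double index. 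You instead fix the single comparator $\boldsymbol{\theta}^\dagger_t$, use its optimality for $\mathcal{C}_{\lambda_t}$ to absorb the ``base'' term, reduce everything to $2B\sum_i|\lambda_i-\lambda_t|$, and chain the one-step drifts directly to $\sum_{k}k\beta_k|\tilde\delta_k|$. Both arrive at the same $\sum_i i\beta_i|\tilde\delta_i|$-type bound, but your comparator choice makes the drift control a single telescoping chain rather than a double-sum rearrangement, which is a bit cleaner. One calibration note: the paper's own proof in fact bounds $|\tilde\delta_i|\le GL+\varepsilon_i$ (via $G$-smoothness and $L$-Lipschitzness) rather than by $\tfrac{2B}{\alpha}+\varepsilon_i$, so the $\tfrac{2B}{\alpha}$ constant you produce matches the lemma \emph{statement} but not the paper's \emph{derivation}; the paper's displayed lemma is itself inconsistent in this regard (and in writing $\varepsilon_t$ where $\varepsilon_i$ is meant, and dropping a factor of $2$), so the minor constant and index discrepancies you flag at the end are shared with, not introduced relative to, the paper.
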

\begin{proof}
Denote $\overline{\lambda} = \frac{1}{t} \sum_{i=1}^t \lambda_i$. By the optimality of $\boldsymbol{\theta}^*$, we can know that
    \begin{align*} 
        & \sum_{i=1}^t \mathcal{C}_{\lambda_i} (\boldsymbol{\theta}^*) - \sum_{i=1}^t \mathcal{C}_{\lambda_i} (\boldsymbol{\theta}^*_t)\\
        & = t (\ell_{u} (\boldsymbol{\theta}^*) + \overline{\lambda} \ell_{r} (\boldsymbol{\theta}^*)) - \sum_{i=1}^t \mathcal{C}_{\lambda_i} (\boldsymbol{\theta}^*_t)\\
        & \leq \sum_{i=1}^t \mathcal{C}_{\overline{\lambda}} (\boldsymbol{\theta}^*_t) - \sum_{i=1}^t \mathcal{C}_{\lambda_i} (\boldsymbol{\theta}^*_t)\\
        & \leq \sum_{i=1}^t (\overline{\lambda} - \lambda_i) l_r(\boldsymbol{\theta}^*_t)\\
        & \leq B \sum_{i=1}^t |\overline{\lambda} - \lambda_i|.
    \end{align*}
    The first inequality is by the fact that $\ell_{u} (\boldsymbol{\theta}^*) + \overline{\lambda} \ell_{r} (\boldsymbol{\theta}^*) \leq \mathcal{C}_{\overline{\lambda}} (\boldsymbol{\theta}^*_t)$ from the optimality of $\boldsymbol{\theta}^*$, and the last one is from the bounded function assumption. We next try to upper bound $\sum_{i=1}^t |\overline{\lambda} - \lambda_i|$, and have
    \begin{align*}
        \sum_{i=1}^t |\overline{\lambda} - \lambda_i| = \sum_{i=1}^t |\frac{1}{t}\sum_{j=1}^t(\lambda_j - \lambda_i)| \leq \frac{1}{t} \sum_{i=1}^t \sum_{j=1}^t |\lambda_j - \lambda_i|\\
        = \frac{1}{t} \sum_{i=1}^t \sum_{j=1}^{i-1} |\lambda_j - \lambda_i| + \frac{1}{t} \sum_{i=1}^t \sum_{j=i}^{t} |\lambda_j - \lambda_i|\\
        \leq \frac{2}{t}\sum_{i=1}^t  \sum_{j=i}^{t} \sum_{l=i}^j  |\lambda_l - \lambda_{l+1}| \\
        = \frac{2}{t} \sum_{i=1}^t  \sum_{l=i}^t (t-l+1)|\lambda_l - \lambda_{l+1}|\leq 2 \sum_{i=1}^t  \sum_{l=i}^t |\lambda_l - \lambda_{l+1}|\\
        = 2 \sum_{l=1}^t l |\lambda_l - \lambda_{l+1}| \leq 2 \sum_{l=1}^t l \beta_i |\tilde{\delta}_i| \leq \sum_{i=1}^t i \beta_i (GL + \varepsilon_I).
    \end{align*}
    We thus end the proof by 
    \begin{align*}
        & \sum_{i=1}^t \mathcal{C}_{\lambda_i} (\boldsymbol{\theta}^*) - \sum_{i=1}^t \mathcal{C}_{\lambda_i} (\boldsymbol{\theta}^*_t) \leq B \sum_{i=1}^t i \beta_i (GL + \varepsilon_i).
    \end{align*}
\end{proof}
We observe that the gap between static and dynamic regrets is controlled by the stability of $\lambda$, which is the advantage of the approximate algorithm compared with the explicit gradient surgery. We are now ready to present the final proof.
\begin{proof}
Combining the results from Lemma~\ref{lem:fake convergence} and Lemma~\ref{lem:stability}, we can obtain
    \begin{align*}
        \sum_{i=1}^t \mathcal{C}_{\lambda_i} (\boldsymbol{\theta}_i) - \sum_{i=1}^t \mathcal{C}_{\lambda_i} (\boldsymbol{\theta}^*_t) \\
        = \sum_{i=1}^t \mathcal{C}_{\lambda_i} (\boldsymbol{\theta}_i) - \sum_{i=1}^t \mathcal{C}_{\lambda_i} (\boldsymbol{\theta}^*) + \sum_{i=1}^t \mathcal{C}_{\lambda_i} (\boldsymbol{\theta}^*) - \sum_{i=1}^t \mathcal{C}_{\lambda_i} (\boldsymbol{\theta}^*_t)\\
        \leq B\sum_{i=0}^{t} (i+1) \beta_i (GL + \varepsilon_i) + \frac{1}{2\alpha}\|\boldsymbol{\theta}_0 - \boldsymbol{\theta}^*\|^2 + DB.
    \end{align*}
    Setting $\alpha_i=\alpha\leq\mathcal{O}(1/G) $, $\sum_{i=0}^{t} (i+1) \beta_i \leq \mathcal{O}(1)$, and $\sum_{i=1}^t \varepsilon_i \leq \mathcal{O}(1)$, we get 
    \begin{align*}
        \sum_{i=1}^t \mathcal{C}_{\lambda_i} (\boldsymbol{\theta}_i) - \sum_{i=1}^t \mathcal{C}_{\lambda_i} (\boldsymbol{\theta}^*_t) \leq \mathcal{O}(1).
    \end{align*}
    Therefore, we finally have
    \begin{align*}
        \frac{1}{t}(\sum_{i=1}^t \mathcal{C}_{\lambda_i} (\boldsymbol{\theta}_i) - \sum_{i=1}^t \mathcal{C}_{\lambda_i} (\boldsymbol{\theta}^*_t)) \leq \mathcal{O}(1/t).
    \end{align*}
    This averaging convergence scheme for can be transformed to the traditional one in Theorem~\ref{thm:Pareto_Optimality} by output the average solution from $\boldsymbol{\theta}_i,i=1,\ldots T$ \citep{cutkosky2019momentum,8930994,drori2020complexity}. This paper leaves out the transforming details.
\end{proof}

\subsection{Demonstration of Remark~\ref{remark:unlearning_opt}}\label{app:coro1}
In the Theorem~\ref{thm:Pareto_Optimality}, we have demonstrated that EUPMU can converge to the Pareto optimal solution, that is,
\begin{equation*}
    \mathcal{C}(\boldsymbol{\theta}_t) - \min _{\boldsymbol{\theta} }\mathcal{C}(\boldsymbol{\theta})  \leq \mathcal{O}(1/t).
\end{equation*}
Then we can get
\begin{equation*}
    \mu_u l_u(\boldsymbol{\theta}_t) - \mu_u l_u(\boldsymbol{\theta}^*)  + \mu_r l_r(\boldsymbol{\theta}_t) - \mu_r l_r(\boldsymbol{\theta}^*)\leq \mathcal{O}(1/t).
\end{equation*}
By the fact that $l_r(\boldsymbol{\theta}_t) \geq l_r(\boldsymbol{\theta}^*)$, we have
\begin{equation*}
    \mu_u l_u(\boldsymbol{\theta}_t) - \mu_u l_u(\boldsymbol{\theta}^*) \leq \mathcal{O}(1/t).
\end{equation*}
Combining with the results of the theorem in Remark~\ref{remark:utility_preservation}, we can know that the converged solution satisfies the condition that the decrease of the retaining loss is controlled by
\begin{equation*}
\ell_r(\boldsymbol{\theta}_t) - \ell_r(\boldsymbol{\theta}_0) \lesssim \mathcal{O}\left(\sum_{i=1}^t\varepsilon_t \alpha_t\right).
\end{equation*}
By the fact that $l_u(\boldsymbol{\theta}_t) \geq  l_u\boldsymbol{\theta}^*)$, we finally get
\begin{equation*}
\ell_r(\boldsymbol{\theta}^*) - \ell_r(\boldsymbol{\theta}_0) \lesssim \mathcal{O}\left(\sum_{i=1}^t\varepsilon_t \alpha_t\right).
\end{equation*}

\subsection{Proof of Theorem~\ref{thm:Pareto_Stationary}}\label{app:thm4}
To prove the Theorem, we follow a similar procedure to that of multi-objective proofs~\cite{fliege2019complexity}, first establishing a local recursive inequality, and then bounding the sum of these inequalities.

\begin{lemma}\label{lem:d_i}
    Assume that $L_u$ is G-smooth. Let $\alpha_i \leq 1/G$. We have
    \begin{equation*}
        \|\boldsymbol{d}_i\|^2 \leq  \frac{2}{\alpha_i}(\ell_u (\boldsymbol{\theta}_{i}) - \ell_u (\boldsymbol{\theta}_{i+1})) + 2 \lambda_i \varepsilon_i.
    \end{equation*}
\end{lemma}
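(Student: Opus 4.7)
The plan is to apply the standard descent lemma for $G$-smooth functions to $\ell_u$ along the update direction $\boldsymbol{d}_i$, then exploit the algebraic identity forced by the algorithm's choice $\boldsymbol{d}_i = \nabla\ell_u(\boldsymbol{\theta}_i)+\lambda_i\nabla\ell_r(\boldsymbol{\theta}_i)$, and finally bound the resulting cross term using a feasibility/complementary-slackness style argument on $\nabla\ell_r\cdot\boldsymbol{d}_i$.

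First I would plug the update $\boldsymbol{\theta}_{i+1}=\boldsymbol{\theta}_i-\alpha_i\boldsymbol{d}_i$ into the $G$-smoothness inequality for $\ell_u$ to obtain
\begin{equation*}
\ell_u(\boldsymbol{\theta}_{i+1}) \;\le\; \ell_u(\boldsymbol{\theta}_i) - \alpha_i\,\nabla\ell_u(\boldsymbol{\theta}_i)^\top \boldsymbol{d}_i + \tfrac{G\alpha_i^2}{2}\|\boldsymbol{d}_i\|^2.
\end{equation*}
The key structural observation is that the algorithm's choice of direction gives the identity $\nabla\ell_u(\boldsymbol{\theta}_i)^\top \boldsymbol{d}_i = \|\boldsymbol{d}_i\|^2 - \lambda_i\,\nabla\ell_r(\boldsymbol{\theta}_i)^\top\boldsymbol{d}_i$, because $\nabla\ell_u=\boldsymbol{d}_i-\lambda_i\nabla\ell_r$. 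Substituting this identity and rearranging yields
\begin{equation*}
\alpha_i\|\boldsymbol{d}_i\|^2\bigl(1-\tfrac{G\alpha_i}{2}\bigr) \;\le\; \bigl(\ell_u(\boldsymbol{\theta}_i)-\ell_u(\boldsymbol{\theta}_{i+1})\bigr) + \alpha_i\,\lambda_i\,\nabla\ell_r(\boldsymbol{\theta}_i)^\top\boldsymbol{d}_i,
\end{equation*}
and the stepsize condition $\alpha_i\le 1/G$ gives $1-G\alpha_i/2\ge 1/2$, which after dividing by $\alpha_i/2$ delivers the form
\begin{equation*}
\|\boldsymbol{d}_i\|^2 \;\le\; \tfrac{2}{\alpha_i}\bigl(\ell_u(\boldsymbol{\theta}_i)-\ell_u(\boldsymbol{\theta}_{i+1})\bigr) + 2\lambda_i\,\nabla\ell_r(\boldsymbol{\theta}_i)^\top\boldsymbol{d}_i.
\end{equation*}

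The main obstacle is then closing the gap between the cross term $2\lambda_i\,\nabla\ell_r(\boldsymbol{\theta}_i)^\top\boldsymbol{d}_i$ and the target $2\lambda_i\varepsilon_i$, i.e.\ establishing $\nabla\ell_r(\boldsymbol{\theta}_i)^\top\boldsymbol{d}_i \le \varepsilon_i$ when $\lambda_i\ge 0$ (which is guaranteed by the clip in step 2 of Algorithm~\ref{alg:algorithm}). I would argue this via the KKT conditions of the approximated problem~\eqref{eq:approximated_optimization_problem}: by Proposition~\ref{prop:closed_form_dual} the dual-optimal direction satisfies $\nabla\ell_r(\boldsymbol{\theta}_i)^\top\boldsymbol{d}_i^* = -\varepsilon_i$ in the active regime, and the $\lambda$-update approximation identity
$\tilde{\delta}_i=\nabla\ell_r(\boldsymbol{\theta}_i)^\top\boldsymbol{d}_i+\varepsilon_i$ (together with the monotone behaviour of $L_i(\lambda)$ in $\lambda$ and the clipping) lets one upper bound the approximate value $\nabla\ell_r(\boldsymbol{\theta}_i)^\top\boldsymbol{d}_i$ by $\varepsilon_i$. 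Everything else in the proof is a mechanical combination of the smoothness descent inequality with this feasibility step, so this last estimate is where all the real content of the lemma lives and is the piece I would write out most carefully.
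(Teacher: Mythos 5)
Your first half is correct and is exactly the paper's route: apply the $G$-smooth descent lemma to $\ell_u$ along $\boldsymbol{d}_i=\nabla\ell_u+\lambda_i\nabla\ell_r$, use the identity $\nabla\ell_u\cdot\boldsymbol{d}_i=\|\boldsymbol{d}_i\|^2-\lambda_i\nabla\ell_r\cdot\boldsymbol{d}_i$, absorb the quadratic term with $\alpha_i\le 1/G$, and divide. One small note: your signs are actually the correct ones, and your intermediate inequality is
\begin{equation*}
\|\boldsymbol{d}_i\|^2 \;\le\; \tfrac{2}{\alpha_i}\bigl(\ell_u(\boldsymbol{\theta}_i)-\ell_u(\boldsymbol{\theta}_{i+1})\bigr) + 2\lambda_i\,\nabla\ell_r(\boldsymbol{\theta}_i)\cdot\boldsymbol{d}_i,
\end{equation*}
whereas the paper's displayed intermediate step carries a spurious minus sign on the cross term that happens to cancel against a matching sign choice in its last estimate, so the final statement comes out the same.

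Where you and the paper genuinely part ways is in closing the cross term, and this is also where your sketch has a real gap. The paper does not try to establish an inequality like $\nabla\ell_r\cdot\boldsymbol{d}_i\le\varepsilon_i$ for a generic clipped $\lambda_i\ge 0$; that inequality is \emph{false} in general, because $\nabla\ell_r\cdot\boldsymbol{d}_i=\nabla\ell_r\cdot\nabla\ell_u+\lambda_i\|\nabla\ell_r\|^2$ is strictly increasing in $\lambda_i$ and eventually exceeds any finite threshold, so no monotonicity-plus-clipping argument can deliver it. What the paper actually does is substitute the exact closed-form multiplier $\lambda_i^*$ from Proposition~\ref{prop:closed_form_dual}: when $\lambda_i^*>0$ the active constraint gives $\nabla\ell_r\cdot\boldsymbol{d}_i=-\varepsilon_i$ identically (and when $\lambda_i^*\le 0$ one has $\lambda_i=0$ and the cross term vanishes). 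Plugging that equality into your correct-sign bound gives $2\lambda_i\nabla\ell_r\cdot\boldsymbol{d}_i=-2\lambda_i\varepsilon_i\le 0\le 2\lambda_i\varepsilon_i$, which proves the lemma with room to spare (your signs in fact give the tighter $-2\lambda_i\varepsilon_i$). So replace the last paragraph of your sketch with this one-line KKT computation; the ``feasibility/complementary-slackness style argument'' you had in mind is the right instinct, but it is an exact equality from the closed form, not a bound you can recover from the $\lambda$-update rule or the clip. Your observation that the lemma is being invoked for the algorithm's approximate $\lambda_i$ is a legitimate concern, but it is a concern the paper shares rather than resolves, not something your proof needs to repair.
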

\begin{proof}
    By the G-smoothness, we have
    \begin{align*}
        \ell_u (\boldsymbol{\theta}_{i+1}) - \ell_u (\boldsymbol{\theta}_{i}) \leq -\alpha_i \boldsymbol{d}_i \cdot \nabla\ell_u(\boldsymbol{\theta}_i) + \frac{\alpha_i^2 G}{2}\|\boldsymbol{d}_i\|^2\\
        = -\alpha_i \|\boldsymbol{d}_i\|^2 + \frac{\alpha_i^2 G}{2}\|\boldsymbol{d}_i\|^2 - \alpha_i \lambda_i \nabla \ell_r (\boldsymbol{\theta}_i) \cdot \boldsymbol{d}_i\\
        \leq -\frac{\alpha_i}{2} \|\boldsymbol{d}_i\|^2 - \alpha_i \lambda_i \nabla \ell_r (\boldsymbol{\theta}_i) \cdot \boldsymbol{d}_i.
    \end{align*}
    The equation is by the fact that $\boldsymbol{d}_i = \nabla\ell_u(\boldsymbol{\theta}_i) + \lambda_i\nabla \ell_r (\boldsymbol{\theta}_i)$. The last inequality is by the parameter setting that $\alpha_i \leq 1/G$. Hence, dividing both side by $\alpha_i/2$, we have 
    \begin{align*}
        \|\boldsymbol{d}_i\|^2
        \leq \frac{2}{\alpha_i}(\ell_u (\boldsymbol{\theta}_{i}) - \ell_u (\boldsymbol{\theta}_{i+1})) - 2 \lambda_i \nabla \ell_r (\boldsymbol{\theta}_i) \cdot \boldsymbol{d}_i\\
        \leq \frac{2}{\alpha_i}(\ell_u (\boldsymbol{\theta}_{i}) - \ell_u (\boldsymbol{\theta}_{i+1})) + 2 \lambda_i \varepsilon_i.
    \end{align*}
    The last inequality is from the closed from solution of $\lambda_i$, which lead to the fact that when $\lambda_i\neq 0$, then 
    \begin{align*}
        & -\nabla \ell_r (\boldsymbol{\theta}_i) \cdot \boldsymbol{d}_i = -\nabla \ell_r (\boldsymbol{\theta}_i) \cdot (\nabla\ell_u(\boldsymbol{\theta}_i) + \lambda_i\nabla \ell_r (\boldsymbol{\theta}_i))\\
        & = -\nabla \ell_r (\boldsymbol{\theta}_i) \cdot (\nabla\ell_u(\boldsymbol{\theta}_i) - \frac{\nabla \ell_r(\boldsymbol{\theta}_t) \cdot \nabla \ell_u(\boldsymbol{\theta}_t) + \varepsilon_t}{\left\|\nabla \ell_r(\boldsymbol{\theta}_t)\right\|^2} \nabla \ell_r (\boldsymbol{\theta}_i))\\
        & = -\nabla \ell_r (\boldsymbol{\theta}_i) \cdot (\nabla\ell_u(\boldsymbol{\theta}_i) - \frac{\nabla \ell_r(\boldsymbol{\theta}_t) \cdot \nabla \ell_u(\boldsymbol{\theta}_t)}{\left\|\nabla \ell_r(\boldsymbol{\theta}_t)\right\|^2} \nabla \ell_r (\boldsymbol{\theta}_i)) + \varepsilon_t\\
        & = \varepsilon_t,
    \end{align*}
    where last equation is because the dot product of $\nabla \ell_r (\boldsymbol{\theta}_i)$ and the projected gradient $\nabla\ell_u(\boldsymbol{\theta}_i) - \frac{\nabla \ell_r(\boldsymbol{\theta}_t) \cdot \nabla \ell_u(\boldsymbol{\theta}_t)}{\left\|\nabla \ell_r(\boldsymbol{\theta}_t)\right\|^2} \nabla \ell_r (\boldsymbol{\theta}_i)$ is zero. We hence prove the Lemma.
\end{proof}

\begin{lemma}\label{lem:d_i sum}
Denote that $\boldsymbol{d}_i^n = \frac{1}{1+\lambda_i} \boldsymbol{d}_i$
    \begin{align*}
        \sum_{i=1}^t \|\boldsymbol{d}_i^n\|^2 \leq (\frac{2}{\alpha_1} + \frac{4}{\alpha_{t+1}} )B + 2 \sum_{i=1}^t \varepsilon_i
    \end{align*}
\end{lemma}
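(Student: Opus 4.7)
The plan is to derive the stated bound directly from the per-step inequality in Lemma~\ref{lem:d_i}, combined with the elementary inequality $\lambda/(1+\lambda)^2 \leq 1$ for $\lambda \geq 0$ and Abel summation. First I would divide the inequality $\|\boldsymbol{d}_i\|^2 \leq \frac{2}{\alpha_i}(\ell_u(\boldsymbol{\theta}_i) - \ell_u(\boldsymbol{\theta}_{i+1})) + 2\lambda_i \varepsilon_i$ by $(1+\lambda_i)^2$. This converts the left side into $\|\boldsymbol{d}_i^n\|^2$, while the factor $\lambda_i/(1+\lambda_i)^2 \le 1$ (in fact bounded by $1/4$, attained at $\lambda_i = 1$) strips the $\lambda_i$ dependence from the noise term, leaving
\[
\|\boldsymbol{d}_i^n\|^2 \leq \frac{2}{\alpha_i (1+\lambda_i)^2}\bigl(\ell_u(\boldsymbol{\theta}_i) - \ell_u(\boldsymbol{\theta}_{i+1})\bigr) + 2\varepsilon_i.
\]
Because $(1+\lambda_i)^2 \ge 1$ and the descent argument behind Lemma~\ref{lem:d_i} keeps the per-step $\ell_u$ decrement nonnegative under $\alpha_i G \leq 1$, I can drop the normalizer on the first term and bound it by $\tfrac{2}{\alpha_i}(\ell_u(\boldsymbol{\theta}_i) - \ell_u(\boldsymbol{\theta}_{i+1}))$.

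Next I would sum from $i=1$ to $t$. The $\varepsilon$ part contributes exactly $2\sum_{i=1}^{t}\varepsilon_i$, matching the second piece of the claimed bound. For the telescoping part, Abel summation gives
\[
\sum_{i=1}^{t}\frac{1}{\alpha_i}\bigl(\ell_u(\boldsymbol{\theta}_i) - \ell_u(\boldsymbol{\theta}_{i+1})\bigr) = \frac{\ell_u(\boldsymbol{\theta}_1)}{\alpha_1} - \frac{\ell_u(\boldsymbol{\theta}_{t+1})}{\alpha_{t+1}} + \sum_{i=1}^{t}\Bigl(\frac{1}{\alpha_{i+1}} - \frac{1}{\alpha_i}\Bigr)\ell_u(\boldsymbol{\theta}_{i+1}).
\]
Using $|\ell_u| \le B$, the two boundary terms are bounded in absolute value by $B/\alpha_1$ and $B/\alpha_{t+1}$, while the variation sum is at most $B\cdot|1/\alpha_{t+1} - 1/\alpha_1|$ under a monotone stepsize schedule (which is the regime of Theorem~\ref{thm:Pareto_Stationary}, where $\alpha_i$ is constant). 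Multiplying by $2$ and regrouping then yields $\bigl(\tfrac{2}{\alpha_1} + \tfrac{4}{\alpha_{t+1}}\bigr)B$ for the telescoping piece, matching the lemma.

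The main obstacle I expect is the sign-sensitive step of dropping $(1+\lambda_i)^2$ from the $\ell_u$-difference, which is only safe when the per-iterate difference $\ell_u(\boldsymbol{\theta}_i) - \ell_u(\boldsymbol{\theta}_{i+1})$ is non-negative. Under the stated stepsize condition $\alpha_i G \leq 1$ this follows from the same smoothness-plus-descent manipulation used to prove Lemma~\ref{lem:d_i}, but a robust fallback is to replace the raw difference by its positive part $(\ell_u(\boldsymbol{\theta}_i) - \ell_u(\boldsymbol{\theta}_{i+1}))_+$, which preserves the inequality since $\|\boldsymbol{d}_i^n\|^2 \geq 0$, and then apply Abel summation only over the monotone-decrease segments. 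A secondary technicality is the varying-stepsize Abel bound; once the schedule is taken monotone (e.g.\ constant), the total-variation correction collapses to a single endpoint contribution and the $\frac{2}{\alpha_1} + \frac{4}{\alpha_{t+1}}$ form drops out cleanly.
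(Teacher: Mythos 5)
Your proposal follows essentially the same path as the paper's proof: divide the per-step bound from Lemma~\ref{lem:d_i} by $(1+\lambda_i)^2$, use $\lambda_i/(1+\lambda_i)^2 \le 1$ to turn the noise term into $2\varepsilon_i$, drop the $(1+\lambda_i)^{-2}$ factor from the telescoping piece, and close with the Abel-type rearrangement together with $|\ell_u|\le B$. The assembly into $(\tfrac{2}{\alpha_1}+\tfrac{4}{\alpha_{t+1}})B + 2\sum_i\varepsilon_i$ is the same computation the paper does, and your observation that the total-variation correction collapses for a monotone (in particular constant) stepsize schedule is exactly what makes the paper's $\bigl(\tfrac{2}{\alpha_{i+1}}-\tfrac{2}{\alpha_i}\bigr)\ell_u(\boldsymbol{\theta}_{i+1}) \le \bigl(\tfrac{2}{\alpha_{i+1}}-\tfrac{2}{\alpha_i}\bigr)B$ step legitimate.

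One caveat: your stated justification for dropping $(1+\lambda_i)^{-2}$ --- that ``the descent argument behind Lemma~\ref{lem:d_i} keeps the per-step $\ell_u$ decrement nonnegative under $\alpha_i G \le 1$'' --- does not actually follow. Unwinding the smoothness bound used there gives $\ell_u(\boldsymbol{\theta}_i)-\ell_u(\boldsymbol{\theta}_{i+1}) \ge \tfrac{\alpha_i}{2}\|\boldsymbol{d}_i\|^2 - \alpha_i\lambda_i\varepsilon_i$, and the right-hand side can be negative whenever $\lambda_i\varepsilon_i$ exceeds $\tfrac12\|\boldsymbol{d}_i\|^2$. You do anticipate this and propose the positive-part fallback, which is the right instinct; it just needs one extra step because $\sum_i(\ell_u(\boldsymbol{\theta}_i)-\ell_u(\boldsymbol{\theta}_{i+1}))_+$ no longer telescopes. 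Write it instead as $\ell_u(\boldsymbol{\theta}_1)-\ell_u(\boldsymbol{\theta}_{t+1})$ plus the total per-step ascent $\sum_i(\ell_u(\boldsymbol{\theta}_{i+1})-\ell_u(\boldsymbol{\theta}_i))_+$, and bound the ascent by $\sum_i\alpha_i\lambda_i\varepsilon_i \le D\alpha\sum_i\varepsilon_i$ via the per-step inequality in Lemma~\ref{lem:d_i}'s proof and the clip $\lambda_i\le D$ in Algorithm~\ref{alg:algorithm}; this recovers the lemma with the coefficient on $\sum_i\varepsilon_i$ inflated by a $(1+D)$ factor, which is still $\mathcal{O}(1)$ under the hypotheses of Theorem~\ref{thm:Pareto_Stationary}. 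It is worth noting that the paper's own proof silently performs the identical drop of $(1+\lambda_i)^{-2}$ without addressing this sign condition, so your concern actually flags a small gap in the paper's argument rather than only in your own.
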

\begin{proof}
     From Lemma~\ref{lem:d_i}, we know that
     \begin{align*}
          \|\boldsymbol{d}_i\|^2 & \leq  \frac{2}{\alpha_i}(\ell_u (\boldsymbol{\theta}_{i}) - \ell_u (\boldsymbol{\theta}_{i+1})) + 2 \lambda_i\varepsilon_i,
     \end{align*}
     and hence
     \begin{align*}
          \|\boldsymbol{d}_i^n\|^2 & \leq  \frac{1}{(\lambda_i+1)^2}(\frac{2}{\alpha_i}(\ell_u (\boldsymbol{\theta}_{i}) - \ell_u (\boldsymbol{\theta}_{i+1})) + 2 \lambda_i\varepsilon_i).
     \end{align*}
     By the fact that $\lambda_i \geq 0$, we then get
     \begin{align*}
          \|\boldsymbol{d}_i^n\|^2 & \leq  \frac{2}{\alpha_i}(\ell_u (\boldsymbol{\theta}_{i}) - \ell_u (\boldsymbol{\theta}_{i+1})) + 2 \frac{1}{(\lambda_i+1)^2} \lambda_i\varepsilon_i\\
          & \leq \frac{2}{\alpha_i}(\ell_u (\boldsymbol{\theta}_{i}) - \ell_u (\boldsymbol{\theta}_{i+1})) + 2 \varepsilon_i.
     \end{align*}
     Rearranging the above inequality, we can further obtain
     \begin{align*}
        \|\boldsymbol{d}_i^{n}\|^2 & \leq  \frac{2}{\alpha_i} \ell_u (\boldsymbol{\theta}_{i}) - \frac{2}{\alpha_{i+1}}\ell_u (\boldsymbol{\theta}_{i+1}) \\
        & \quad + (\frac{2}{\alpha_{i+1}} - \frac{2}{\alpha_i}) \ell_u (\boldsymbol{\theta}_{i+1}) + 2 \varepsilon_i\\
        & \leq \frac{2}{\alpha_i} \ell_u (\boldsymbol{\theta}_{i}) - \frac{2}{\alpha_{i+1}}\ell_u (\boldsymbol{\theta}_{i+1}) \\
        & \quad + (\frac{2}{\alpha_{i+1}} - \frac{2}{\alpha_i}) B + 2  \varepsilon_i.
    \end{align*}
    By summing up the above, we have
    \begin{align*}
        \sum_{i=1}^t \|\boldsymbol{d}_i^n\|^2 & \leq \sum_{i=1}^t (\frac{2}{\alpha_i} \ell_u (\boldsymbol{\theta}_{i}) - \frac{2}{\alpha_{i+1}}\ell_u (\boldsymbol{\theta}_{i+1})) \\
        & \quad +  B\sum_{i=1}^t (\frac{2}{\alpha_{i+1}} - \frac{2}{\alpha_i}) + 2 \sum_{i=1}^t  \varepsilon_i\\
        &  = \frac{2}{\alpha_1} \ell_u (\boldsymbol{\theta}_{1}) - \frac{2}{\alpha_{t+1}}\ell_u (\boldsymbol{\theta}_{t+1}) \\ 
        & \quad + B (\frac{2}{\alpha_{t+1}} - \frac{2}{\alpha_1})  + 2 \sum_{i=1}^t  \varepsilon_i\\
        & \leq (\frac{2}{\alpha_1} + \frac{4}{\alpha_{t+1}} )B + 2 \sum_{i=1}^t \varepsilon_i.
    \end{align*}
    We thus end the proof.
\end{proof}

\begin{proof}
By setting $\alpha_{i},i=1,\ldots,T$ to be constant such that $\alpha_{i} \leq 1/G$, and $\sum_{i=1}^t \varepsilon_i \leq \mathcal{O}(1)$, we can get from Lemma \ref{lem:d_i sum} that
    \begin{align*}
        \sum_{i=1}^t \|\boldsymbol{d}_i^n\|^2 \leq \mathcal{O}(1).
    \end{align*}
    Dividing both side by $t$, we obtain
    \begin{align*}
        \frac{1}{t}\sum_{i=1}^t \|\boldsymbol{d}_i^n\|^2 \leq \mathcal{O}(1/t).
    \end{align*}
    Finally, we end the proof by
    \begin{align*}
        \min_{i=1,\ldots,t} \min_{(\mu_u,\mu_r)\in \Delta_2} \left\|\mu_u \nabla \ell_u(\boldsymbol{\theta}_i) + \mu_r \nabla \ell_r(\boldsymbol{\theta}_i)  \right\| \\
        \leq \sqrt{\min_{i=1,\ldots,t} \|\boldsymbol{d}_i^n\|^2} 
        \leq \sqrt{\frac{1}{t}\sum_{i=1}^t \|\boldsymbol{d}_i^n\|^2} \leq \mathcal{O}(1/t^{1/2}).
    \end{align*}
\end{proof}

\section{Additional Experimental Details and Results}\label{app:exmperiment}
We provide the experimental details and additional results in this section.

\noindent\textbf{Average Gap (Avg.gap).} 
Unless otherwise stated, all classification tables report the gap to retrain in the parentheses after each metric and  \textbf{Avg.gap} in the last column defined as
\[
\text{Avg.gap}=\tfrac{1}{4}\Big(|\text{UA}-\text{UA}_{\text{retrain}}|+|\text{RA}-\text{RA}_{\text{retrain}}|+|\text{TA}-\text{TA}_{\text{retrain}}|+|\text{MIA}-\text{MIA}_{\text{retrain}}|\Big),
\]
computed on the same test split as the retrained reference. Lower is better.

\paragraph{Common protocols and configs.}
To avoid redundancy across 20+ runs, we follow the default hyperparameter templates from prior work and release the configuration files/command in our code repository. 
For classification, the MIA setup, train/val/test splits, and metric computation follow SalUn. 
For concept selection in style/instance experiments, the anchor concept protocol matches SPM.
We include the search ranges in Appx.~D.1 for reproducibility.

\subsection{Additional training and unlearning settings}

\paragraph{MU for image classification}
For image classification, we use \emph{Unlearning Accuracy (UA)} and \emph{Membership Inference Attack (MIA)} for unlearning efficacy, \emph{Remaining Accuracy (RA)} and \emph{Testing Accuracy (TA)} for classifier fidelity, and \emph{Runtime Efficiency (RTE)} for computational efficiency in MU. The RTE is measured by relative overall runtime through the whole unlearning.
To report overall performance succinctly, we use the \textbf{average gap (Avg.gap)}, defined as the mean absolute difference from a fully retrained model across \{UA, RA, TA, MIA\}, so lower is better. 
Our experiments encompass unlearning baselines, including FT~\cite{warnecke2021machine}, RL~\cite{golatkar2020eternal}, GA~\cite{golatkar2020eternal}, IU~\cite{thudi2022unrolling}, $\ell_1$-sparse~\cite{jia2023model}, boundary unlearning BS/BE~\cite{chen2023boundary}, SalUn~\cite{fan2023salun}, and gradient-operation unlearning GDR-GMA~\cite{10.1145/3664647.3680775}. GDR-GMA uses direction-rectified and magnitude-adjusted gradients to update the retain and unlearning loss.

Training for both FT and RL methods occurs over 10 epochs to search for optimal learning rate within [1e-4, 1e-2]. Both GA and GDR-GMA use a 5-epoch learning rate search within [1e-6, 1e-3]. For IU, the parameter \(\alpha\) related to the woodfisher Hessian Inverse approximation is explored between [1, 20]. \(\ell_1\)-sparse involves a learning rate search for \(\gamma\) within [1e-6, 1e-4], while keeping a constant rate of 0.1. In the BS method, the FGSM step size is set to 0.1. Both BS and BE methods include a 10-epoch learning rate search within [1e-6, 1e-4]. SalUn and SalUn-soft are trained for 10 epochs with a fixed saliency map ratio of 0.5 and searched for the optimal learning rate with in [0.1, 1e-4]. Both EUPMU and EUPMU-fast using both unlearning and retain tasks for the optimization target, \(\beta\) within [0.01, 1] and then searched for \(\varepsilon\) within [1e-3, 5e-1]. 

For the ablation experiment, RL and EUPMU remain the same settings. FAMO is searched with weight learning rate from 1e-4 to 3e-2. PCGrad is searched learning rate within [1e-4, 1e-2]. Unilateral Gradient Surgery, also using log loss as the optimization target for both unlearning and retain tasks, under fixed learning rate 6e-5, is searched for \(\varepsilon\) within [1e-3, 5e-1]. Here the RTE is measured by one single epoch runtime under different MOO methods.

\paragraph{MU for image generation} For DDPM, the forgetting settings are as follows: The batch size is set to 128. For EUPMU, it is trained for 1,000 iterations with a learning rate of 1e-4, an $\alpha$ value of 1e-3, and a batch size of 128. The sampling settings include 1,000 timesteps and a conditional scaling of 2.0. \(\beta\) is searched within [1e-4, 1e-2] and \(\varepsilon\) is searched within [1e+1, 5e+3]

For SD, the forgetting settings are as follows: For EUPMU, it is trained with the Adam optimizer for 5 epochs at a learning rate of 1e-5. The $\alpha$ value is set to 0.01, and the batch size is 8. The \(\beta\) is set to 1e-4 and \(\varepsilon\) is set to 1e+3 The sampling settings use DDIM with 100 timesteps and a conditional scaling of 7.5.

\paragraph{MU for Instance removing}
We employ ChatGPT to create 200 prompts $\{c\}$ containing the anchor concept, such as \textit{"Dog"}. Following a similar process to the style pipeline, we generate 1000 images using the pre-trained diffusion model and obtain the target text prompts $\{c^*\}$ by substituting the word \textit{"Dog"} with \textit{"Snoopy"}.

\paragraph{MU for Style removing} To remove style concept, we employ generic painting styles as an anchor concept in the process of style removal. Firstly, clip-retrieval~\cite{beaumont-2022-clip-retrieval} is used to obtain a set of text prompts $\{c\}$ that are close to the term \textit{"painting"} in the CLIP latent space. Then, 1000 images are generated from the original model with 200 prompts. The target prompts $\{c^*\}$ are derived by appending the suffix \textit{"in the style of \{target style\}"} to $\{c\}$.

\paragraph{MU for NSFW removing} To remove NSFW content, we started by generating 800 images as $\boldsymbol{d}_f$ using SD V1.4 with the prompt "a photo of a nude person," and another 800 images as $\boldsymbol{d}_r$ with the prompt "a photo of a person wearing clothes." In the forgetting process, we treated "a photo of a nude person" as the concept to be forgotten and "a photo of a person wearing clothes" as the corrective remain concept. The settings for EUPMU remains identical to SD

\subsection{More Details of Metrics}
To assess the model's ability to unlearn the target concept and retain the generation capability, we utilize three key metrics: CLIP Score (CS), CLIP Accuracy (CA) \cite{hessel2021clipscore} and Fréchet inception distance (FID) \cite{heusel2017gans}. After generating images with the big artists prompts from ESD \cite{gandikota2023erasing} for art style unlearning and templates prompt proposed by CLIP \cite{radford2021learning}, calculates the cosine similarity between the generated image and the target concept text embedding(e.g., "A quiet moment in Rembrandt's interior scene" or "Snoopy"). Likewise, CLIP Accuracy evaluates performance on a binary classification task distinguishing between the unlearning and retain concepts for each generated image, based on comparison of Clip Score. In both cases, lower metric values signify more effective concept ablation, while higher metrics indicate better retain of concept. Fréchet inception distance (FID)  is a metric used to evaluate the quality and diversity of image generation. We use it to measure the similarity between the images by original Stable Diffusion model and our models' generated data distributions by comparing the mean and covariance of features extracted from a pretrained Inception-v3 model. A low FID indicates a similar distribution of two groups of image and better generation quality, while high FID refers to better unlearning result.



\subsection{Additional Results and Demonstrations}

\begin{table}[t]
\centering
\caption{Results of class-wise forgetting for various methods of unlearning Resnet 18 for Cifar 10 classification. The result in the table is the mean value over 10 independent trials.  The best unlearning performance for each forgetting class is highlighted in \textbf{bold}.}
\label{tab:classwise class}
\resizebox{\columnwidth}{!}{%
\begin{tabular}{lcccccc}
\hline
Methods & UA & RA & TA & MIA & Avg. Gap & Runtime \\
\hline
Retrain	& 100 & 100 & 94.68 & 100 & 0 & - \\
\hline
FT & 98.49 (1.51) & 97.09 (2.91) & 91.43 (3.25) & 100.00 (0.00) & 1.92 & 2.28 \\
RL & 98.44 (1.56) & 96.88 (3.12) & 90.97 (3.71) & 100.00 (0.00) & 2.10 & 2.45 \\
GA & 98.18 (1.82) & 81.47 (18.53) & 76.99 (17.69) & 97.89 (2.11) & 10.04 & 0.13 \\
IU & 98.33 (1.67) & 95.00 (5.00) & 89.19 (5.49) & 99.42 (0.58) & 3.19 & 3.25 \\
BE & 93.80 (6.20) & 98.28 (1.72) & 92.38 (2.30) & 99.58 (0.42) & 2.66 & 0.25 \\
BS & 92.24 (7.76) & 96.75 (3.25) & 91.03 (3.65) & 98.82 (1.18) & 3.96 & 0.41 \\
$\ell_1$-sparse & 96.47 (3.53) & 98.18 (1.82) & 92.56 (2.12) & 100.00 (0.00) & 1.87 & 2.29 \\
SalUn & 98.00 (2.00) & {99.91 (0.09)} & \textbf{94.90 (0.22)} & 100.00 (0.00) & 0.58 & 2.46 \\
SalUn-soft & 98.96 (1.04) & 99.75 (0.25) & 94.41 (0.27) & 100.00 (0.00) & 0.39 & 2.50 \\
GDRGMA & 95.30 (4.70) & \textbf{100.00 (0.00)} & 88.70 (5.98) & 100.00 (0.00) & 2.67 & 3.47 \\
\hline
EUPMU & \textbf{99.64 (0.36)} & 99.69 (0.31) & 94.29 (0.39) & 100.00 (0.00) & \textbf{0.27} & 2.82 \\
EUPMU-fast & 98.18 (1.82) & 99.83 (0.17) & 94.36 (0.32) & 100.00 (0.00) & 0.58 & 2.52 \\
\hline
\end{tabular}%
}
\end{table}

\begin{table}[t]
\centering
\caption{Results of Random Data Forgetting(10\%) for various methods of unlearning Resnet 18 for Cifar 10 classification. The results are the mean value over 10 independent trials. The best performance is highlighted in \textbf{bold}.}
\label{tab:forgetting 10 cifar10}
\begin{tabular}{lccccc}
\hline
Methods & UA & RA & TA & MIA & Avg. Gap \\
\hline
Retrain & 5.24 & 100.00 & 94.26 & 12.88 & 0.00 \\
\hline
FT & 2.76 (2.48) & 98.51 (1.49) & 92.28 (1.98) & 7.13 (5.75) & 2.92 \\
RL & 3.27 (1.97) & \textbf{99.89 (0.11)} & \textbf{94.10 (0.16)} & 23.73 (10.85) & 3.27 \\
GA & 2.29 (2.95) & 98.67 (1.33) & 92.34 (1.92) & 3.78 (9.10) & 3.83 \\
IU & 0.56 (4.68) & 99.51 (0.49) & 94.58 (0.32) & 1.02 (11.86) & 4.34 \\
BE & 2.80 (2.44) & 97.28 (2.72) & 90.96 (3.30) & 19.96 (7.08) & 3.88 \\
BS & 0.71 (4.53) & 99.24 (0.76) & 93.64 (0.62) & 13.64 (0.76) & 1.67 \\
$\ell_1$-sparse & 3.55 (1.69) & 99.17 (0.83) & 91.36 (2.90) & 9.09 (3.79) & 2.30 \\
SalUn & 5.02 (0.22) & 99.83 (0.17) & 93.58 (0.68) & 27.38 (14.50) & 3.89 \\
SalUn-soft & \textbf{5.36 (0.12)} & 99.71 (0.29) & 93.29 (0.97) & 24.53 (11.65) & 3.26 \\
GDRGMA & 5.42 (0.18) & 99.53 (0.47) & 92.21 (2.05) & 34.61 (21.73) & 6.11 \\
\hline
EUPMU & 3.71 (1.53) & 99.25 (0.75) & 93.09 (1.17) & \textbf{12.89 (0.01)} & \textbf{0.86} \\
EUPMU-fast & 3.64 (1.60) & 99.54 (0.46) & 93.17 (1.09) & 13.29 (0.41) & 0.89 \\
\hline
\end{tabular}
\end{table}

\begin{figure*}
\centering     
\subfigure[SPM]{\label{fig:SPM}\includegraphics[width=0.49\textwidth]{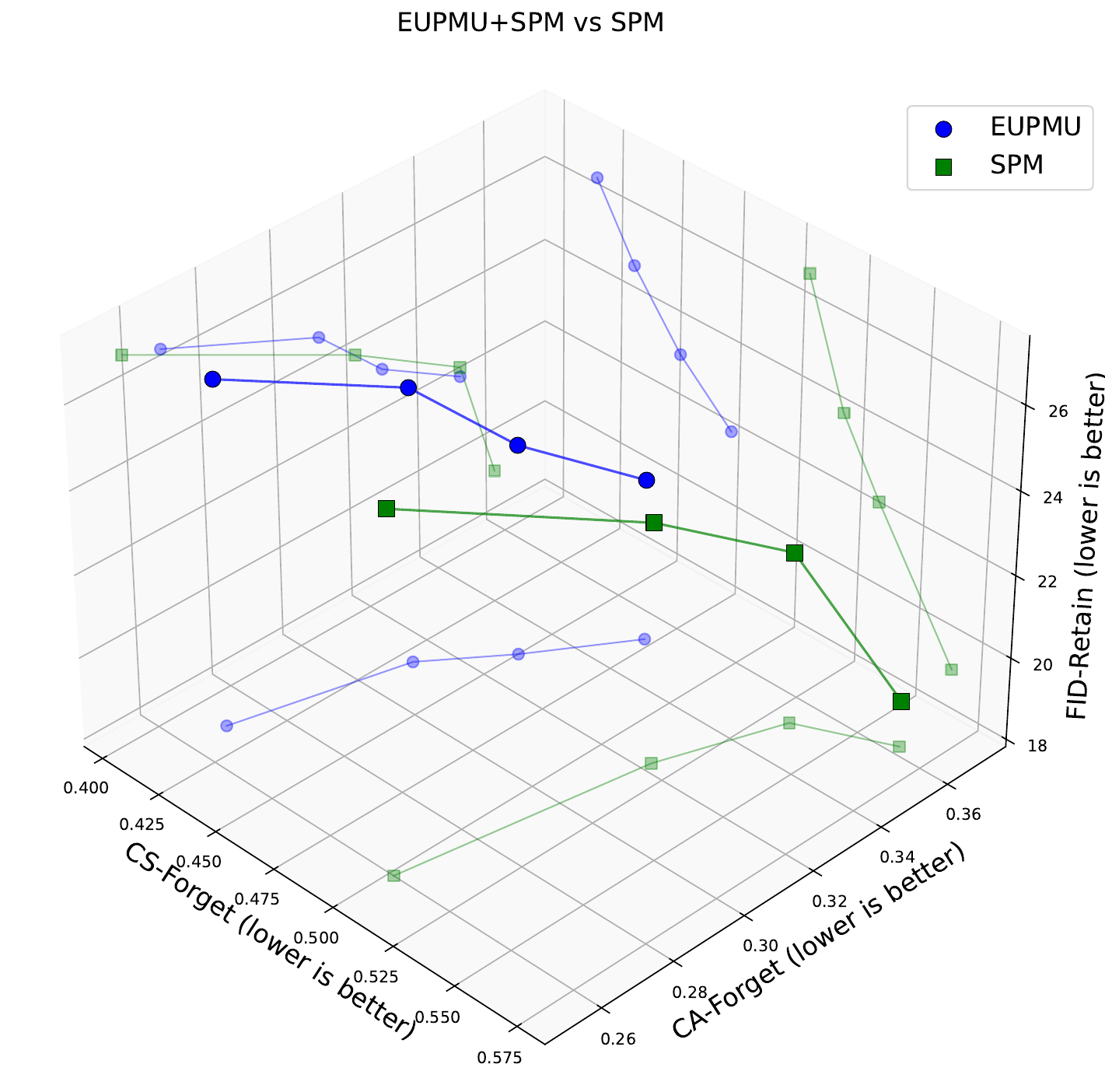}}
\subfigure[ConAbl]{\label{fig:ConAbl}\includegraphics[width=0.49\textwidth]{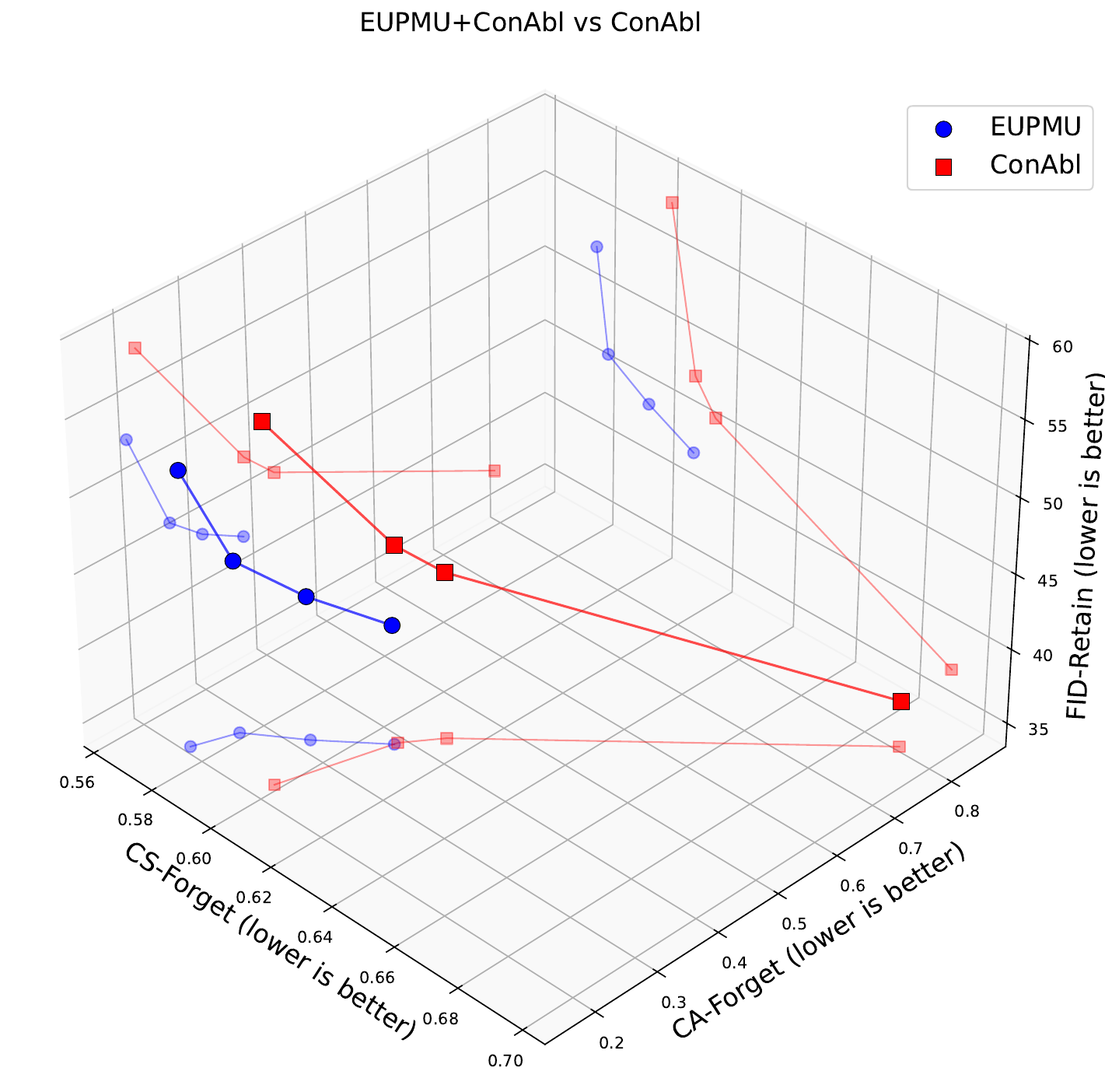}}
\caption{Illustration of Pareto Front Analysis. We conducted comparative analyses by integrating EUPMU with SPM and ConAbl forgetting paradigms: (a) We compare EUPMU+SPM with pure SPM; (b) We compare EUPMU+ConAbl with pure ConAbl.}
\label{fig:paretoexperiment}
\end{figure*}

\begin{table}[t]
\centering
\caption{Results of Random Data Forgetting(30\%) for various methods of unlearning Resnet 18 for Cifar 10 classification. The results are the mean value over 10 independent trials. The best performance is highlighted in \textbf{bold}.}
\label{tab:forgetting 30 cifar10}
\begin{tabular}{lccccc}
\hline
Methods & UA & RA & TA & MIA & Avg. Gap \\
\hline
Retrain & 6.50 & 99.99 & 92.68 & 14.44 & 0.00 \\
\hline
FT & 4.93 (1.57) & 96.89 (3.10) & 90.75 (1.93) & 11.10 (3.34) & 2.48 \\
RL & 5.20 (1.30) & \textbf{99.68 (0.31)} & 92.11 (0.57) & 33.35 (18.91) & 5.27 \\
GA & 0.50 (6.00) & 99.48 (0.51) & 94.64 (1.96) & 1.07 (13.37) & 5.46 \\
IU & 4.99 (1.51) & 94.97 (5.02) & 88.74 (3.94) & 7.64 (6.80) & 4.32 \\
BE & 0.59 (5.91) & 98.82 (1.17) & 93.89 (1.21) & 8.19 (6.25) & 3.63 \\
BS & 0.63 (5.87) & 99.46 (0.53) & 94.15 (1.47) & 7.26 (7.18) & 3.76 \\
$\ell_1$-sparse & 4.76 (1.74) & 97.30 (2.69) & 91.26 (1.42) & 10.36 (4.08) & 2.48 \\
SalUn & 4.79 (1.71) & 99.53 (0.46) & 91.89 (0.79) & 28.08 (13.64) & 4.15 \\
SalUn-soft & 4.43 (2.07) & 99.62 (0.37) & \textbf{92.42 (0.26)} & 25.38 (10.94) & 3.41 \\
GDRGMA & \textbf{6.57 (0.07)} & 99.53 (0.46) & 91.79 (0.89) & 38.41 (23.97) & 6.35 \\
\hline
EUPMU & 3.53 (2.97) & 98.34 (1.65) & 91.80 (0.88) & 16.50 (2.06) & 1.89 \\
EUPMU-fast & 5.01 (1.49) & 98.04 (1.95) & 91.47 (1.21) & \textbf{16.19 (1.75)} & \textbf{1.60} \\
\hline
\end{tabular}
\end{table}

\begin{table}[t]
\centering
\caption{Results of Random Data Forgetting(50\%) for various methods of unlearning Resnet 18 for Cifar 10 classification. The results are the mean value over 10 independent trials. The best performance is highlighted in \textbf{bold}.}
\label{tab:forgetting 50 cifar10}
\begin{tabular}{lccccc}
\hline
Methods & UA & RA & TA & MIA & Avg. Gap \\
\hline
Retrain & 8.11 & 100.00 & 91.24 & 19.72 & 0.00 \\
\hline
FT & 2.84 (5.27) & 98.75 (1.25) & 91.87 (0.63) & 7.35 (12.37) & 4.88 \\
RL & 5.84 (2.27) & 99.39 (0.61) & 90.69 (0.55) & 43.32 (23.60) & 6.76 \\
GA & 0.62 (7.49) & 99.40 (0.60) & 94.49 (3.25) & 1.23 (18.49) & 7.46 \\
IU & \textbf{7.57 (0.54)} & 92.12 (7.88) & 86.15 (5.09) & 12.36 (7.36) & 5.22 \\
BE & 10.69 (2.58) & 89.72 (10.28) & 82.88 (8.36) & 22.72 (3.00) & 6.05 \\
BS & 10.63 (2.52) & 87.43 (12.57) & 80.59 (10.65) & 22.61 (2.89) & 7.16 \\
$\ell_1$-sparse & 1.27 (6.84) & 93.39 (6.61) & 98.69 (7.45) & 9.49 (10.23) & 7.78 \\
SalUn & 5.24 (2.87) & 99.15 (0.85) & 90.94 (0.30) & 36.76 (17.04) & 5.26 \\
SalUn-soft & 5.67 (2.44) & 98.91 (1.09) & 90.85 (0.39) & 38.30 (18.58) & 5.62 \\
GDRGMA & 4.46 (3.65) & \textbf{99.60 (0.40)} & 91.68 (0.44) & 36.51 (16.79) & 5.32 \\
\hline
EUPMU & 4.71 (3.40) & 96.96 (3.04) & 90.35 (0.89) & 19.55 (0.17) & 1.88 \\
EUPMU-fast & 4.08 (4.03) & 98.02 (1.98) & \textbf{91.11 (0.13)} & \textbf{19.84 (0.12)} & \textbf{1.56} \\
\hline
\end{tabular}
\end{table}


\begin{table}[t]
\centering
\caption{Results of Random Data Forgetting(10\%) for various methods of unlearning Resnet 18 for Cifar 100 classification. The results are the mean value over 10 independent trials. The best performance is highlighted in \textbf{bold}.}
\label{tab:forgetting 10 cifar100}
\begin{tabular}{lccccc}
\hline
Methods & UA & RA & TA & MIA & Avg. Gap \\
\hline
Retrain & 23.07 & 99.97 & 74.43 & 49.29 & 0.00 \\
\hline
FT & 6.07 (17.00) & 98.45 (1.52) & 71.44 (2.99) & 17.56 (31.73) & 13.31 \\
RL & 24.82 (1.75) & 98.92 (1.05) & 69.50 (4.93) & 71.60 (22.31) & 7.51 \\
GA & 4.16 (18.91) & 96.90 (3.07) & \textbf{74.37 (0.06)} & 8.62 (40.67) & 15.68 \\
IU & 10.58 (12.49) & 90.32 (9.65) & 65.92 (8.51) & 17.36 (31.93) & 15.64 \\
BE & 3.40 (19.67) & 96.72 (3.25) & 71.58 (2.85) & 13.58 (35.71) & 15.37 \\
BS & 3.23 (19.84) & 96.41 (3.56) & 72.11 (2.32) & 12.61 (36.68) & 15.60 \\
$\ell_1$-sparse & 2.40 (20.67) & 98.14 (1.83) & 75.85 (1.42) & 7.31 (41.98) & 16.48 \\
SalUn & 24.84 (1.77) & 99.27 (0.70) & 69.57 (4.86) & 77.96 (28.67) & 9.00 \\
SalUn-soft & \textbf{24.73 (1.66)} & 98.73 (1.24) & 69.39 (5.04) & 70.76 (21.47) & 7.35 \\
GDRGMA & 56.67 (33.60) & \textbf{99.63 (0.34)} & 68.74 (5.69) & 93.71 (44.42) & 21.01 \\
\hline
EUPMU & 9.67 (13.40) & 99.60 (0.37) & 72.03 (2.40) & \textbf{51.76 (2.47)} & \textbf{4.66} \\
EUPMU-fast & 28.36 (5.29) & 98.82 (1.15) & 69.50 (4.93) & 70.36 (21.07) & 8.11 \\
\hline
\end{tabular}
\end{table}

\begin{table}[t]
\centering
\caption{Results of Random Data Forgetting(30\%) for various methods of unlearning Resnet 18 for Cifar 100 classification. The results are the mean value over 10 independent trials. The best performance is highlighted in \textbf{bold}.}
\label{tab:forgetting 30 cifar100}
\begin{tabular}{lccccc}
\hline
Methods & UA & RA & TA & MIA & Avg. Gap \\
\hline
Retrain & 29.00 & 99.97 & 71.47 & 52.22 & 0.00 \\
\hline
FT & \textbf{30.27 (1.27)} & 82.62 (17.35) & 61.14 (10.33) & 31.93 (20.29) & 12.31 \\
RL & 26.14 (2.86) & 98.12 (1.85) & 63.46 (8.01) & 66.68 (14.46) & 6.79 \\
GA & 3.07 (25.93) & 97.45 (2.52) & 75.24 (3.77) & 6.90 (45.32) & 19.39 \\
IU & 10.82 (18.18) & 90.43 (9.54) & 65.30 (6.17) & 17.50 (34.72) & 17.15 \\
BE & 3.00 (26.00) & 97.24 (2.73) & \textbf{73.68 (2.21)} & 9.64 (42.58) & 18.38 \\
BS & 2.95 (26.05) & 97.32 (2.65) & 73.81 (2.34) & 8.84 (43.38) & 18.61 \\
$\ell_1$-sparse & 2.57 (26.43) & 97.96 (2.01) & 75.96 (4.49) & 6.17 (46.05) & 19.75 \\
SalUn & 24.08 (4.92) & 98.19 (1.78) & 61.74 (9.73) & 65.16 (12.94) & 7.34 \\
SalUn-soft & 25.39 (3.61) & 97.59 (2.38) & 62.12 (9.35) & 62.49 (10.27) & 6.40 \\
GDRGMA & 27.04 (1.96) & 98.37 (1.60) & 60.90 (10.57) & 70.29 (18.07) & 8.05 \\
\hline
EUPMU & 17.68 (11.32) & \textbf{98.96 (1.01)} & 68.46 (3.01) & \textbf{50.93 (1.29)} & \textbf{4.16} \\
EUPMU-fast & 19.64 (9.36) & 97.81 (2.16) & 62.53 (8.94) & 56.06 (3.84) & 6.08 \\
\hline
\end{tabular}
\end{table}

\begin{table}[t]
\centering
\caption{Results of Random Data Forgetting(50\%) for various methods of unlearning Resnet 18 for Cifar 100 classification. The results are the mean value over 10 independent trials. The best performance is highlighted in \textbf{bold}.}
\label{tab:forgetting 50 cifar100}
\begin{tabular}{lccccc}
\hline
Methods & UA & RA & TA & MIA & Avg. Gap \\
\hline
Retrain & 33.28 & 99.98 & 67.75 & 61.21 & 0.00 \\
\hline
FT & 20.84 (12.44) & 91.03 (8.95) & 64.04 (3.71) & 30.20 (31.01) & 14.03 \\
RL & 22.48 (10.80) & 95.51 (4.47) & 54.52 (13.23) & 53.41 (7.80) & 9.07 \\
GA & 2.59 (30.69) & 97.53 (2.45) & 75.38 (7.63) & 6.07 (55.14) & 23.98 \\
IU & 6.76 (26.52) & 93.58 (6.40) & \textbf{67.80 (0.05)} & 13.06 (48.15) & 20.28 \\
BE & 5.92 (27.36) & 94.28 (5.70) & 67.00 (0.75) & 17.13 (44.08) & 19.47 \\
BS & 6.22 (27.06) & 94.54 (5.44) & 66.62 (1.13) & 16.57 (44.64) & 19.57 \\
$\ell_1$-sparse & 2.48 (30.80) & 98.76 (1.22) & 75.94 (8.19) & 6.59 (54.62) & 23.71 \\
SalUn & \textbf{28.56 (4.72)} & 96.31 (3.67) & 52.88 (14.87) & \textbf{62.88 (1.67)} & 6.23 \\
SalUn-soft & 24.84 (8.44) & 95.44 (4.54) & 54.32 (13.43) & 55.50 (5.71) & 8.03 \\
GDRGMA & 22.66 (10.62) & 95.48 (4.50) & 53.34 (14.41) & 53.33 (7.88) & 9.35 \\
\hline
EUPMU & 21.57 (11.71) & \textbf{98.97 (1.01)} & 64.13 (3.62) & 58.43 (2.78) & \textbf{4.78} \\
EUPMU-fast & 18.68 (14.60) & 94.20 (5.78) & 54.29 (13.46) & 40.91 (20.30) & 13.54 \\
\hline
\end{tabular}
\end{table}

\begin{table}[t]
\centering
\caption{Results of Random Data Forgetting(10\%) for various methods of unlearning Resnet 18 for Tiny-Imagenet classification. The results are the mean value over 10 independent trials. The best performance is highlighted in \textbf{bold}.}
\label{tab:forgetting 10 Tiny-Imagenet}
\begin{tabular}{lccccc}
\hline
Methods & UA & RA & TA & MIA & Avg. Gap \\
\hline
Retrain & 36.54 & 99.98 & 63.71 & 64.57 & 0.00 \\
\hline
FT & 6.89 (29.65) & \textbf{98.80 (1.18)} & 65.05 (1.34) & 16.11 (48.46) & 20.16 \\
RL & 28.40 (8.14) & 98.35 (1.63) & 61.17 (2.54) & \textbf{57.80 (6.77)} & 4.77 \\
GA & 5.42 (31.12) & 95.10 (4.88) & 65.91 (2.20) & 14.46 (50.11) & 22.08 \\
IU & 11.79 (24.75) & 89.83 (10.15) & 61.37 (2.34) & 14.98 (49.59) & 21.71 \\
BE & 6.32 (30.22) & 93.92 (6.06) & \textbf{63.99 (0.28)} & 15.33 (49.24) & 21.45 \\
BS & 6.28 (30.26) & 93.96 (6.02) & 64.15 (0.44) & 15.50 (49.07) & 21.45 \\
$\ell_1$-sparse & 5.91 (30.63) & 95.10 (4.88) & 66.15 (2.44) & 13.96 (50.61) & 22.14 \\
SalUn & 33.43 (3.11) & 97.61 (2.37) & 61.21 (2.50) & 56.27 (8.30) & 4.07 \\
SalUn-soft & 32.25 (4.29) & 91.01 (8.97) & 58.65 (5.06) & 33.60 (30.97) & 12.32 \\
GDRGMA & 19.59 (16.95) & 97.91 (2.07) & 61.91 (1.80) & 45.10 (19.47) & 10.07 \\
\hline
EUPMU & \textbf{35.18 (1.36)} & 97.62 (2.36) & 60.51 (3.20) & 56.10 (8.47) & \textbf{3.85} \\
EUPMU-fast & 30.18 (6.36) & 97.12 (2.86) & 60.81 (2.90) & 49.37 (15.20) & 6.83 \\
\hline
\end{tabular}
\end{table}

\begin{table}[t]
\centering
\caption{Results of Random Data Forgetting(30\%) for various methods of unlearning Resnet 18 for Tiny-Imagenet classification. The results are the mean value over 10 independent trials. The best performance is highlighted in \textbf{bold}.}
\label{tab:forgetting 30 Tiny-Imagenet}
\begin{tabular}{lccccc}
\hline
Methods & UA & RA & TA & MIA & Avg. Gap \\
\hline
Retrain & 39.00 & 99.98 & 60.81 & 67.30 & 0.00 \\
\hline
FT & 9.75 (29.25) & \textbf{99.05 (0.93)} & 64.41 (3.60) & 25.65 (41.65) & 18.86 \\
RL & 13.11 (25.89) & 90.32 (9.66) & 56.63 (4.18) & 33.33 (33.97) & 18.43 \\
GA & 5.14 (33.86) & 95.08 (4.90) & 66.01 (5.20) & 14.38 (52.92) & 24.22 \\
IU & 6.96 (32.04) & 93.27 (6.71) & 63.75 (2.94) & 13.90 (53.40) & 23.77 \\
BE & 10.84 (28.16) & 89.12 (10.86) & \textbf{59.79 (1.02)} & 18.82 (48.48) & 22.13 \\
BS & 8.85 (30.15) & 91.06 (8.92) & 62.84 (2.03) & 17.74 (49.56) & 22.66 \\
$\ell_1$-sparse & 5.15 (33.85) & 97.31 (2.67) & 65.91 (5.10) & 13.61 (53.69) & 23.83 \\
SalUn & 33.60 (5.40) & 96.05 (3.93) & 54.57 (6.24) & 46.77 (20.53) & 9.03 \\
SalUn-soft & 33.99 (5.01) & 87.75 (12.23) & 52.77 (8.04) & 37.13 (30.17) & 13.86 \\
GDRGMA & 22.52 (16.48) & 96.00 (3.98) & 55.27 (5.54) & 38.96 (28.34) & 13.59 \\
\hline
EUPMU & \textbf{40.50 (1.50)} & 94.05 (5.93) & 50.71 (10.10) & \textbf{50.61 (16.69)} & \textbf{8.55} \\
EUPMU-fast & 33.92 (5.08) & 89.48 (10.50) & 53.21 (7.60) & 46.23 (21.07) & 11.06 \\
\hline
\end{tabular}
\end{table}

\begin{table}[t]
\centering
\caption{Results of Random Data Forgetting(50\%) for various methods of unlearning Resnet 18 for Tiny-Imagenet classification. The results are the mean value over 10 independent trials. The best performance is highlighted in \textbf{bold}.}
\label{tab:forgetting 50 Tiny-Imagenet}
\begin{tabular}{lccccc}
\hline
Methods & UA & RA & TA & MIA & Avg. Gap \\
\hline
Retrain & 43.15 & 99.99 & 56.29 & 71.15 & 0.00 \\
\hline
FT & 7.13 (36.02) & \textbf{99.32 (0.67)} & 65.27 (8.98) & 18.09 (53.06) & 24.68 \\
RL & 24.30 (18.85) & 85.00 (14.99) & 48.47 (7.82) & 25.75 (45.40) & 21.77 \\
GA & 4.63 (38.52) & 95.35 (4.64) & 66.29 (10.00) & 12.96 (58.19) & 27.84 \\
IU & 18.00 (25.15) & 83.21 (16.78) & \textbf{56.41 (0.12)} & 20.57 (50.58) & 23.16 \\
BE & 7.56 (35.59) & 92.47 (7.52) & 62.77 (6.48) & 16.62 (54.53) & 26.03 \\
BS & 8.87 (34.28) & 90.68 (9.31) & 59.03 (2.74) & 20.21 (50.94) & 24.32 \\
$\ell_1$-sparse & 4.44 (38.71) & 95.76 (4.23) & 66.39 (10.10) & 12.76 (58.39) & 27.86 \\
SalUn & \textbf{44.84 (1.69)} & 94.29 (5.70) & 45.83 (10.46) & \textbf{54.73 (16.42)} & 8.57 \\
SalUn-soft & 33.56 (9.59) & 88.25 (11.74) & 48.15 (8.14) & 39.76 (31.39) & 15.21 \\
GDRGMA & 28.73 (14.42) & 91.16 (8.83) & 47.73 (8.56) & 36.28 (34.87) & 16.67 \\
\hline
EUPMU & 31.97 (11.18) & 98.50 (1.49) & 52.71 (3.58) & 53.18 (17.97) & \textbf{8.55} \\
EUPMU-fast & 33.60 (9.55) & 88.59 (11.40) & 49.71 (6.58) & 39.86 (31.29) & 14.71 \\
\hline
\end{tabular}
\end{table}

\begin{figure*}[t]
        \centering
        \includegraphics[width=0.99\columnwidth]{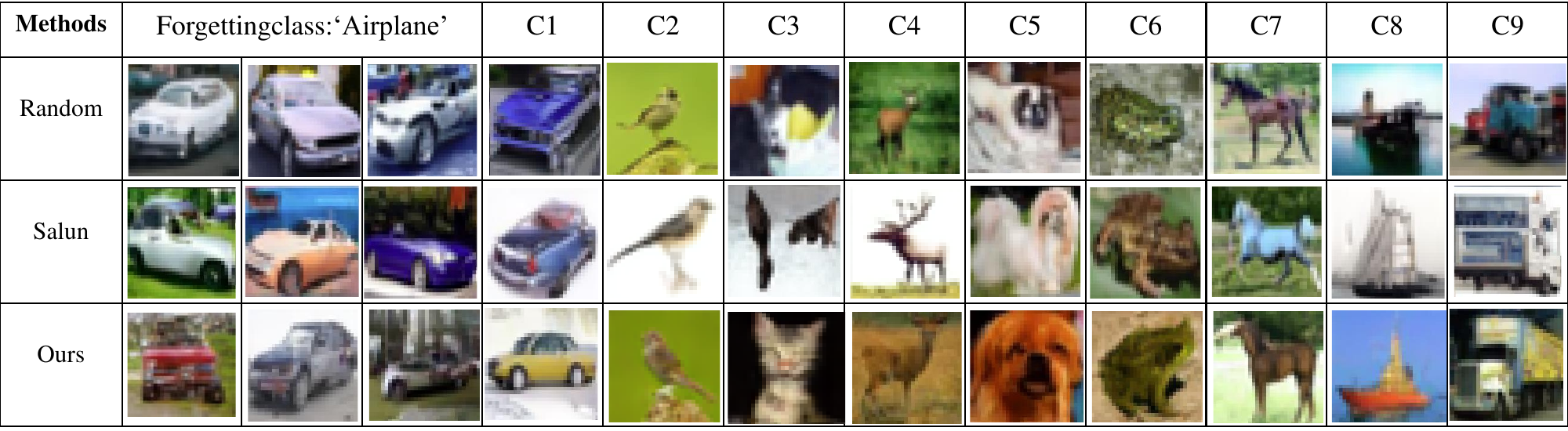}
        \caption{Image generations of unlearning for DDPM on CIFAR-10. The forgetting class is given by `airplane’, and ‘C’ refers to the non-forgetting class name, e.g., `car’ (C1).}
        \label{fig:DDPM}
\end{figure*}

\paragraph{Random Data Forgetting in Image Classification.}
Tables~\ref{tab:forgetting 10 cifar10}, \ref{tab:forgetting 30 cifar10}, \ref{tab:forgetting 50 cifar10}, 
\ref{tab:forgetting 10 cifar100}, \ref{tab:forgetting 30 cifar100}, \ref{tab:forgetting 50 cifar100}, 
\ref{tab:forgetting 10 Tiny-Imagenet}, \ref{tab:forgetting 30 Tiny-Imagenet}, and \ref{tab:forgetting 50 Tiny-Imagenet}
report random-data forgetting results. \emph{Numbers in parentheses are the absolute gap to the retrain model for that metric.}
Across CIFAR-10, CIFAR-100, and Tiny-ImageNet-200, the ``Avg.gap'' column quantifies the mean distance between each method and an ideal retrain; smaller is better. 
Our method yields consistently lower Avg.gap across forgetting ratios, indicating a favorable trade-off between forgetting efficacy and utility retention, in line with our algorithmic design and theory.

\paragraph{Class-wise Forgetting in Image Classification} As shown in Table \ref{tab:classwise class}, cross all 10 categories' class-wise unlearning, our experiments consistently yielded results that align with the findings reported in the paper. Table \ref{tab:classwise class} further details the performance for class-wise forgetting, where the metrics and the absolute gap are shown and the "Avg. Gap" column represents the average performance difference across unlearned (UA, RA, TA, MIA) and retrained models.

\paragraph{Visualization of Class-wise Forgetting in Image Classification} 
Figure \ref{fig:DDPM} demonstrates the Visualization of Class-wise Forgetting in Image Classification. Our analysis reveals that compared with baseline methods, EUPMU effectively enhances the erasure performance for target classes while preserving the capability to generate other classes. This observation is corroborated by the quantitative results presented in Table \ref{tab:DDPM}.


\section{Pareto Front Analysis} 
To quantitatively demonstrate EUPMU's enhanced trade-off balancing capability, we conducted comparative analyses by integrating EUPMU with SPM and ConAbl forgetting paradigms, followed by Pareto superiority evaluations against baseline methods. Specifically, following the experimental configuration for Style Unlearning as described in the main text, we performed style erasure on Van Gogh's artistic style while quantifying three performance metrics: (1 \& 2) CS-forget and CA-Forget: CS and CA scores for Van Gogh style erasure efficacy, and (3) FID-Retain: FID score for Picasso style preservation. This tripartite evaluation framework provides analytical support for EUPMU's superior trade-off performance through comparative analysis of these interrelated metrics.

Figure~\ref{fig:paretoexperiment} presents comprehensive experimental validation, demonstrating that integration with EUPMU enables SPM and ConAbl to achieve consistent improvements across all evaluated metrics, thereby yielding superior trade-off optimization outcomes. This empirical evidence indicates that EUPMU's optimization mechanism can effectively identify Pareto-optimal solutions through its enhanced search capabilities, aligning with the theoretical conclusions discussed in the main text.

\section{Limitations}\label{app:limitation}
While the proposed method demonstrates significant improvements over existing unlearning approaches, two inherent constraints persist: 1) Theoretically, we aim to achieve Pareto-optimal solutions that balance unlearning and utility objectives through error tolerance ($\varepsilon_t$) calibration. However, the absence of explicit patterns in these dual objectives prevents the formal derivation of their interrelationship within theoretical frameworks. Rigorous mathematical formulation would require introducing domain-specific assumptions about objective correlations, which inherently narrows the theoretical applicability. After careful consideration, we maintain the original formulation to preserve methodological generality; 2) Practically, the non-zero error tolerance $\ varepsilon_t\neq0$ directly governs the trade-off between utility preservation and unlearning completeness. As shown in our ablation studies (Figure~\ref{fig:paretoexperiment}), this hyperparameter critically determines empirical performance. Nevertheless, lacking theoretical guidance for optimal $\varepsilon_t$, practical implementation relies on empirical tuning through grid search or Bayesian optimization. This dual constraint–theoretical indeterminacy and empirical dependency–defines the current practical boundary of our framework.

\section{Broader Impacts}\label{app:broaderimpact}
This paper addresses the core unlearning-utility tradeoff in machine unlearning, aiming to enhance its practical deployability while effectively mitigating challenges posed by generative models producing infringing or non-compliant content. By optimizing the balance between knowledge retention and targeted erasure, our methodology directly tackles the dual imperatives of model governance and regulatory compliance. The technical focus on improving governance capabilities inherently minimizes adverse societal impacts, as the proposed approach does not introduce systemic biases or operational disruptions. This dual focus not only broadens the practical applicability of machine unlearning across industries but also establishes a framework for ethically constrained AI development, aligning with regulatory demands for content accountability without compromising model utility.


\newpage
\clearpage
\FloatBarrier
\section*{NeurIPS Paper Checklist}

\begin{enumerate}

\item {\bf Claims}
    \item[] Question: Do the main claims made in the abstract and introduction accurately reflect the paper's contributions and scope?
    \item[] Answer: \answerYes{} 
    \item[] Justification: The abstract and introduction in this paper accurately reflect the paper's contributions and scope.
    \item[] Guidelines:
    \begin{itemize}
        \item The answer NA means that the abstract and introduction do not include the claims made in the paper.
        \item The abstract and/or introduction should clearly state the claims made, including the contributions made in the paper and important assumptions and limitations. A No or NA answer to this question will not be perceived well by the reviewers. 
        \item The claims made should match theoretical and experimental results, and reflect how much the results can be expected to generalize to other settings. 
        \item It is fine to include aspirational goals as motivation as long as it is clear that these goals are not attained by the paper. 
    \end{itemize}

\item {\bf Limitations}
    \item[] Question: Does the paper discuss the limitations of the work performed by the authors?
    \item[] Answer: \answerYes{} 
    \item[] Justification: We provide the limitation discussion in Appendix~\ref{app:limitation}.
    \item[] Guidelines:
    \begin{itemize}
        \item The answer NA means that the paper has no limitation while the answer No means that the paper has limitations, but those are not discussed in the paper. 
        \item The authors are encouraged to create a separate "Limitations" section in their paper.
        \item The paper should point out any strong assumptions and how robust the results are to violations of these assumptions (e.g., independence assumptions, noiseless settings, model well-specification, asymptotic approximations only holding locally). The authors should reflect on how these assumptions might be violated in practice and what the implications would be.
        \item The authors should reflect on the scope of the claims made, e.g., if the approach was only tested on a few datasets or with a few runs. In general, empirical results often depend on implicit assumptions, which should be articulated.
        \item The authors should reflect on the factors that influence the performance of the approach. For example, a facial recognition algorithm may perform poorly when image resolution is low or images are taken in low lighting. Or a speech-to-text system might not be used reliably to provide closed captions for online lectures because it fails to handle technical jargon.
        \item The authors should discuss the computational efficiency of the proposed algorithms and how they scale with dataset size.
        \item If applicable, the authors should discuss possible limitations of their approach to address problems of privacy and fairness.
        \item While the authors might fear that complete honesty about limitations might be used by reviewers as grounds for rejection, a worse outcome might be that reviewers discover limitations that aren't acknowledged in the paper. The authors should use their best judgment and recognize that individual actions in favor of transparency play an important role in developing norms that preserve the integrity of the community. Reviewers will be specifically instructed to not penalize honesty concerning limitations.
    \end{itemize}

\item {\bf Theory assumptions and proofs}
    \item[] Question: For each theoretical result, does the paper provide the full set of assumptions and a complete (and correct) proof?
    \item[] Answer: \answerYes{} 
    \item[] Justification: We provide the full set of assumptions and a complete (and correct) proof in Appendix~\ref{app:proof}.
    \item[] Guidelines:
    \begin{itemize}
        \item The answer NA means that the paper does not include theoretical results. 
        \item All the theorems, formulas, and proofs in the paper should be numbered and cross-referenced.
        \item All assumptions should be clearly stated or referenced in the statement of any theorems.
        \item The proofs can either appear in the main paper or the supplemental material, but if they appear in the supplemental material, the authors are encouraged to provide a short proof sketch to provide intuition. 
        \item Inversely, any informal proof provided in the core of the paper should be complemented by formal proofs provided in appendix or supplemental material.
        \item Theorems and Lemmas that the proof relies upon should be properly referenced. 
    \end{itemize}

    \item {\bf Experimental result reproducibility}
    \item[] Question: Does the paper fully disclose all the information needed to reproduce the main experimental results of the paper to the extent that it affects the main claims and/or conclusions of the paper (regardless of whether the code and data are provided or not)?
    \item[] Answer: \answerYes{} 
    \item[] Justification: We provide the experimental information in Section~\ref{sec:experimentsetup}.
    \item[] Guidelines:
    \begin{itemize}
        \item The answer NA means that the paper does not include experiments.
        \item If the paper includes experiments, a No answer to this question will not be perceived well by the reviewers: Making the paper reproducible is important, regardless of whether the code and data are provided or not.
        \item If the contribution is a dataset and/or model, the authors should describe the steps taken to make their results reproducible or verifiable. 
        \item Depending on the contribution, reproducibility can be accomplished in various ways. For example, if the contribution is a novel architecture, describing the architecture fully might suffice, or if the contribution is a specific model and empirical evaluation, it may be necesssary to either make it possible for others to replicate the model with the same dataset, or provide access to the model. In general. releasing code and data is often one good way to accomplish this, but reproducibility can also be provided via detailed instructions for how to replicate the results, access to a hosted model (e.g., in the case of a large language model), releasing of a model checkpoint, or other means that are appropriate to the research performed.
        \item While NeurIPS does not require releasing code, the conference does require all submissions to provide some reasonable avenue for reproducibility, which may depend on the nature of the contribution. For example
        \begin{enumerate}
            \item If the contribution is primarily a new algorithm, the paper should make it clear how to reproduce that algorithm.
            \item If the contribution is primarily a new model architecture, the paper should describe the architecture clearly and fully.
            \item If the contribution is a new model (e.g., a large language model), then there should either be a way to access this model for reproducing the results or a way to reproduce the model (e.g., with an open-source dataset or instructions for how to construct the dataset).
            \item We recognize that reproducibility may be tricky in some cases, in which case authors are welcome to describe the particular way they provide for reproducibility. In the case of closed-source models, it may be that access to the model is limited in some way (e.g., to registered users), but it should be possible for other researchers to have some path to reproducing or verifying the results.
        \end{enumerate}
    \end{itemize}

\item {\bf Open access to data and code}
    \item[] Question: Does the paper provide open access to the data and code, with sufficient instructions to faithfully reproduce the main experimental results, as described in supplemental material?
    \item[] Answer: \answerYes{} 
    \item[] Justification: We provide our code in the supplementary material.
    \item[] Guidelines:
    \begin{itemize}
        \item The answer NA means that paper does not include experiments requiring code.
        \item Please see the NeurIPS code and data submission guidelines (\url{https://nips.cc/public/guides/CodeSubmissionPolicy}) for more details.
        \item While we encourage the release of code and data, we understand that this might not be possible, so “No” is an acceptable answer. Papers cannot be rejected simply for not including code, unless this is central to the contribution (e.g., for a new open-source benchmark).
        \item The instructions should contain the exact command and environment needed to run to reproduce the results. See the NeurIPS code and data submission guidelines (\url{https://nips.cc/public/guides/CodeSubmissionPolicy}) for more details.
        \item The authors should provide instructions on data access and preparation, including how to access the raw data, preprocessed data, intermediate data, and generated data, etc.
        \item The authors should provide scripts to reproduce all experimental results for the new proposed method and baselines. If only a subset of experiments are reproducible, they should state which ones are omitted from the script and why.
        \item At submission time, to preserve anonymity, the authors should release anonymized versions (if applicable).
        \item Providing as much information as possible in supplemental material (appended to the paper) is recommended, but including URLs to data and code is permitted.
    \end{itemize}

\item {\bf Experimental setting/details}
    \item[] Question: Does the paper specify all the training and test details (e.g., data splits, hyperparameters, how they were chosen, type of optimizer, etc.) necessary to understand the results?
    \item[] Answer: \answerYes{} 
    \item[] Justification: We provide the experimental details in Appendix~\ref{app:exmperiment}.
    \item[] Guidelines:
    \begin{itemize}
        \item The answer NA means that the paper does not include experiments.
        \item The experimental setting should be presented in the core of the paper to a level of detail that is necessary to appreciate the results and make sense of them.
        \item The full details can be provided either with the code, in appendix, or as supplemental material.
    \end{itemize}

\item {\bf Experiment statistical significance}
    \item[] Question: Does the paper report error bars suitably and correctly defined or other appropriate information about the statistical significance of the experiments?
    \item[] Answer: \answerNo{} 
    \item[] Justification: Our experiments are averaged from multiple experiments, and the std is very small, so there is no need to report the error bar.
    \begin{itemize}
        \item The answer NA means that the paper does not include experiments.
        \item The authors should answer "Yes" if the results are accompanied by error bars, confidence intervals, or statistical significance tests, at least for the experiments that support the main claims of the paper.
        \item The factors of variability that the error bars are capturing should be clearly stated (for example, train/test split, initialization, random drawing of some parameter, or overall run with given experimental conditions).
        \item The method for calculating the error bars should be explained (closed form formula, call to a library function, bootstrap, etc.)
        \item The assumptions made should be given (e.g., Normally distributed errors).
        \item It should be clear whether the error bar is the standard deviation or the standard error of the mean.
        \item It is OK to report 1-sigma error bars, but one should state it. The authors should preferably report a 2-sigma error bar than state that they have a 96\% CI, if the hypothesis of Normality of errors is not verified.
        \item For asymmetric distributions, the authors should be careful not to show in tables or figures symmetric error bars that would yield results that are out of range (e.g. negative error rates).
        \item If error bars are reported in tables or plots, The authors should explain in the text how they were calculated and reference the corresponding figures or tables in the text.
    \end{itemize}

\item {\bf Experiments compute resources}
    \item[] Question: For each experiment, does the paper provide sufficient information on the computer resources (type of compute workers, memory, time of execution) needed to reproduce the experiments?
    \item[] Answer: \answerYes{} 
    \item[] Justification: We note that all experiments are carried out on two A100 GPUs in Section~\ref{sec:experimentsetup}.
    \item[] Guidelines:
    \begin{itemize}
        \item The answer NA means that the paper does not include experiments.
        \item The paper should indicate the type of compute workers CPU or GPU, internal cluster, or cloud provider, including relevant memory and storage.
        \item The paper should provide the amount of compute required for each of the individual experimental runs as well as estimate the total compute. 
        \item The paper should disclose whether the full research project required more compute than the experiments reported in the paper (e.g., preliminary or failed experiments that didn't make it into the paper). 
    \end{itemize}

\item {\bf Code of ethics}
    \item[] Question: Does the research conducted in the paper conform, in every respect, with the NeurIPS Code of Ethics \url{https://neurips.cc/public/EthicsGuidelines}?
    \item[] Answer: \answerYes{} 
    \item[] Justification: This paper focuses on theoretical results and does not involve ethical issues.
    \item[] Guidelines:
    \begin{itemize}
        \item The answer NA means that the authors have not reviewed the NeurIPS Code of Ethics.
        \item If the authors answer No, they should explain the special circumstances that require a deviation from the Code of Ethics.
        \item The authors should make sure to preserve anonymity (e.g., if there is a special consideration due to laws or regulations in their jurisdiction).
    \end{itemize}

\item {\bf Broader impacts}
    \item[] Question: Does the paper discuss both potential positive societal impacts and negative societal impacts of the work performed?
    \item[] Answer: \answerYes{} 
    \item[] Justification: We discuss both potential positive societal impacts and negative societal impacts in Appendix~\ref{app:broaderimpact}.
    \item[] Guidelines:
    \begin{itemize}
        \item The answer NA means that there is no societal impact of the work performed.
        \item If the authors answer NA or No, they should explain why their work has no societal impact or why the paper does not address societal impact.
        \item Examples of negative societal impacts include potential malicious or unintended uses (e.g., disinformation, generating fake profiles, surveillance), fairness considerations (e.g., deployment of technologies that could make decisions that unfairly impact specific groups), privacy considerations, and security considerations.
        \item The conference expects that many papers will be foundational research and not tied to particular applications, let alone deployments. However, if there is a direct path to any negative applications, the authors should point it out. For example, it is legitimate to point out that an improvement in the quality of generative models could be used to generate deepfakes for disinformation. On the other hand, it is not needed to point out that a generic algorithm for optimizing neural networks could enable people to train models that generate Deepfakes faster.
        \item The authors should consider possible harms that could arise when the technology is being used as intended and functioning correctly, harms that could arise when the technology is being used as intended but gives incorrect results, and harms following from (intentional or unintentional) misuse of the technology.
        \item If there are negative societal impacts, the authors could also discuss possible mitigation strategies (e.g., gated release of models, providing defenses in addition to attacks, mechanisms for monitoring misuse, mechanisms to monitor how a system learns from feedback over time, improving the efficiency and accessibility of ML).
    \end{itemize}

\item {\bf Safeguards}
    \item[] Question: Does the paper describe safeguards that have been put in place for responsible release of data or models that have a high risk for misuse (e.g., pretrained language models, image generators, or scraped datasets)?
    \item[] Answer: \answerNA{} 
    \item[] Justification: This paper poses no such risks.
     \item[] Guidelines:
    \begin{itemize}
        \item The answer NA means that the paper poses no such risks.
        \item Released models that have a high risk for misuse or dual-use should be released with necessary safeguards to allow for controlled use of the model, for example by requiring that users adhere to usage guidelines or restrictions to access the model or implementing safety filters. 
        \item Datasets that have been scraped from the Internet could pose safety risks. The authors should describe how they avoided releasing unsafe images.
        \item We recognize that providing effective safeguards is challenging, and many papers do not require this, but we encourage authors to take this into account and make a best faith effort.
    \end{itemize}

\item {\bf Licenses for existing assets}
    \item[] Question: Are the creators or original owners of assets (e.g., code, data, models), used in the paper, properly credited and are the license and terms of use explicitly mentioned and properly respected?
    \item[] Answer: \answerNA{} 
    \item[] Justification: This paper does not use existing assets.
    \item[] Guidelines:
    \begin{itemize}
        \item The answer NA means that the paper does not use existing assets.
        \item The authors should cite the original paper that produced the code package or dataset.
        \item The authors should state which version of the asset is used and, if possible, include a URL.
        \item The name of the license (e.g., CC-BY 4.0) should be included for each asset.
        \item For scraped data from a particular source (e.g., website), the copyright and terms of service of that source should be provided.
        \item If assets are released, the license, copyright information, and terms of use in the package should be provided. For popular datasets, \url{paperswithcode.com/datasets} has curated licenses for some datasets. Their licensing guide can help determine the license of a dataset.
        \item For existing datasets that are re-packaged, both the original license and the license of the derived asset (if it has changed) should be provided.
        \item If this information is not available online, the authors are encouraged to reach out to the asset's creators.
    \end{itemize}

\item {\bf New assets}
    \item[] Question: Are new assets introduced in the paper well documented and is the documentation provided alongside the assets?
    \item[] Answer: \answerNA{} 
    \item[] Justification: This paper does not release new assets.
    \item[] Guidelines:
    \begin{itemize}
        \item The answer NA means that the paper does not release new assets.
        \item Researchers should communicate the details of the dataset/code/model as part of their submissions via structured templates. This includes details about training, license, limitations, etc. 
        \item The paper should discuss whether and how consent was obtained from people whose asset is used.
        \item At submission time, remember to anonymize your assets (if applicable). You can either create an anonymized URL or include an anonymized zip file.
    \end{itemize}

\item {\bf Crowdsourcing and research with human subjects}
    \item[] Question: For crowdsourcing experiments and research with human subjects, does the paper include the full text of instructions given to participants and screenshots, if applicable, as well as details about compensation (if any)? 
    \item[] Answer: \answerNA{} 
    \item[] Justification: This paper does not involve crowdsourcing nor research with human subjects.
    \item[] Guidelines:
    \begin{itemize}
        \item The answer NA means that the paper does not involve crowdsourcing nor research with human subjects.
        \item Including this information in the supplemental material is fine, but if the main contribution of the paper involves human subjects, then as much detail as possible should be included in the main paper. 
        \item According to the NeurIPS Code of Ethics, workers involved in data collection, curation, or other labor should be paid at least the minimum wage in the country of the data collector. 
    \end{itemize}

\item {\bf Institutional review board (IRB) approvals or equivalent for research with human subjects}
    \item[] Question: Does the paper describe potential risks incurred by study participants, whether such risks were disclosed to the subjects, and whether Institutional Review Board (IRB) approvals (or an equivalent approval/review based on the requirements of your country or institution) were obtained?
    \item[] Answer: \answerNA{} 
    \item[] Justification: This paper does not involve crowdsourcing nor research with human subjects.
    \item[] Guidelines:
    \begin{itemize}
        \item The answer NA means that the paper does not involve crowdsourcing nor research with human subjects.
        \item Depending on the country in which research is conducted, IRB approval (or equivalent) may be required for any human subjects research. If you obtained IRB approval, you should clearly state this in the paper. 
        \item We recognize that the procedures for this may vary significantly between institutions and locations, and we expect authors to adhere to the NeurIPS Code of Ethics and the guidelines for their institution. 
        \item For initial submissions, do not include any information that would break anonymity (if applicable), such as the institution conducting the review.
    \end{itemize}

\item {\bf Declaration of LLM usage}
    \item[] Question: Does the paper describe the usage of LLMs if it is an important, original, or non-standard component of the core methods in this research? Note that if the LLM is used only for writing, editing, or formatting purposes and does not impact the core methodology, scientific rigorousness, or originality of the research, declaration is not required.
    \item[] Answer: \answerNA{} 
    \item[] Justification: This research does not involve LLMs as any important, original, or non-standard components.
    \item[] Guidelines:
    \begin{itemize}
        \item The answer NA means that the core method development in this research does not involve LLMs as any important, original, or non-standard components.
        \item Please refer to our LLM policy (\url{https://neurips.cc/Conferences/2025/LLM}) for what should or should not be described.
    \end{itemize}

\end{enumerate}
\FloatBarrier
\clearpage

\end{document}